\newif\iftruthbomb
\title[Pessimism]{Pessimism About Unknown Unknowns Inspires Conservatism}
\DeclareMathOperator*{\argmax}{argmax}
\DeclareMathOperator{\p}{P}
\DeclareMathOperator{\E}{\mathbb{E}}
\DeclareMathOperator*{\lequal}{\leq}
\DeclareMathOperator*{\gequal}{\geq}
\DeclareMathOperator*{\equal}{=}
\DeclareMathOperator*{\va}{\!\bigm\vert\!}
\DeclareMathOperator*{\vb}{\!\Bigm\vert\!}
\DeclareMathOperator*{\vc}{\!\biggm\vert\!}
\DeclareMathOperator*{\vd}{\!\Biggm\vert\!}
\DeclareMathOperator{\A}{\mathcal{A}}
\DeclareMathOperator{\Ob}{\mathcal{O}}
\DeclareMathOperator{\R}{\mathcal{R}}
\DeclareMathOperator{\M}{\mathcal{M}}
\DeclareMathOperator{\Mbt}{\mathcal{M}^\beta_\mathnormal{t}}
\DeclareMathOperator{\Mat}{\mathcal{M}^\alpha_\mathnormal{t}}
\DeclareMathOperator{\B}{\texttt{Bayes\!\!}}
\DeclareMathOperator{\Top}{\mathcal{T}}
\DeclareMathOperator{\Ehap}{\mathnormal{E}_{\gets}}
\DeclareMathOperator{\ptrue}{P^{\pi^\beta_\mathnormal{Z}}_\mu}
\newcommand{\tagaligneq}{\refstepcounter{equation}\tag{\theequation}}
\def\partialbox{
    \tikz\draw[path picture={\fill[black] (path picture bounding box.north east)
  -- (path picture bounding box.south west) |-cycle;}] (0,0) rectangle  ++ (0.25,0.25);
}
\newenvironment{proofoutline}[1][\proofoutlinename]{\par
  \normalfont
  \topsep6\p@\@plus6\p@ \trivlist
    \item[\,\textbf{
    #1}]
}{%
  \hfill$\partialbox$ \endtrivlist
}
\newcommand{\proofoutlinename}{Proof -- Detailed Outline}
\def\dottedbox{\tikz\node[draw=black,dotted] {\phantom{}};}
\newenvironment{proofidea}[1][\proofideaname]{\par
  \normalfont
  \topsep6\p@\@plus6\p@ \trivlist
    \item[\,\textbf{
    #1}]
}{%
  \hfill$\dottedbox$ \endtrivlist
}
\newcommand{\proofideaname}{Proof idea}
\patchcmd{\NAT@test}{\else \NAT@nm}{\else \NAT@nmfmt{\NAT@nm}}{}{}
\DeclareRobustCommand\citepos
  \let\NAT@nmfmt\NAT@posfmt
\let\NAT@ctype\z@\NAT@partrue
\let\NAT@orig@nmfmt\NAT@nmfmt
\def\NAT@posfmt#1{\NAT@orig@nmfmt{#1's}}
\begin{document}

\maketitle

\begin{abstract}
     If we could define the set of all bad outcomes, we could hard-code an agent which avoids them; however, in sufficiently complex environments, this is infeasible. We do not know of any general-purpose approaches in the literature to avoiding novel failure modes. Motivated by this, we define an idealized Bayesian reinforcement learner which follows a policy that maximizes the worst-case expected reward over a set of world-models. We call this agent pessimistic, since it optimizes assuming the worst case. A scalar parameter tunes the agent's pessimism by changing the size of the set of world-models taken into account. Our first main contribution is: given an assumption about the agent's model class, a sufficiently pessimistic agent does not cause ``unprecedented events'' with probability $1-\delta$, whether or not designers know how to precisely specify those precedents they are concerned with. Since pessimism discourages exploration, at each timestep, the agent may defer to a mentor, who may be a human or some known-safe policy we would like to improve. Our other main contribution is that the agent's policy's value approaches at least that of the mentor, while the probability of deferring to the mentor goes to 0. In high-stakes environments, we might like advanced artificial agents to pursue goals cautiously, which is a non-trivial problem even if the agent were allowed arbitrary computing power; we present a formal solution.
\end{abstract}

\section{Introduction}

Intuitively, there are contexts in which we would like advanced agents to be conservative: novel action-sequences should be treated with caution, and only taken when the agent is quite sure its world-model generalizes well to this untested new idea. For a weak agent in a simple environment, the following approach may suffice: model the environment as finite-state Markov, observe a mentor, and only take actions that you have already observed the mentor take from the current state. But in a complex environment, one never or hardly ever sees the exact same state twice; even worse, if the environment is non-stationary, a previous observation of the mentor taking action $a$ from state $s$ does not imply it is still safe to do so.


We construct an idealized Bayesian reinforcement learner. We do not assume our agent's environment is finite-state Markov or ergodic. We will only assume that our agent's environment, which may depend on the entire interaction history, belongs to a countable set $\M$. For example, the countable set of semicomputable stochastic world-models would be large enough to make this assumption innocuous \citep{Hutter:04uaibook}. The limit of this idealization is that because we make so few assumptions, we can't ensure that computing the posterior is tractable in the general setting.

Our agent also has a mentor, who can select an action when the agent requests, and we assume nothing about the agent's mentor besides belonging to a countable set of possible policies $\mathcal{P}$. The mentor could be a human or a known-safe policy.

Our agent starts with a prior that assigns non-zero probability to a countable set of world-models $\M$ and mentor-models $\mathcal{P}$, and recursively updates a posterior. At each timestep, it stochastically defers to a mentor with some probability, and the mentor selects the action on its behalf; otherwise, it takes the top world-models in its posterior until they cover some fixed fraction $\beta$ of the posterior, and it follows a policy which maximizes the minimum expected return among those top world-models. We call this minimum the pessimistic value because it is a worst-case estimate. At each timestep, to decide whether to defer action-selection to the mentor, the agent samples a world-model and mentor-model from its posterior; the agent calculates the value of acting according to that mentor-model in that world-model given the current interaction history, and if that value is greater than the pessimistic value plus positive noise, or if the pessimistic value is 0, the agent defers. This query probability is inspired by the effectiveness of Thompson Sampling \citep{thompson1933likelihood}. 

We show
\begin{itemize}[noitemsep] 
    \item In the limit, the pessimistic agent's policy's value approaches at least that of the mentor's. (Corollary \ref{cor:humanlevel})
    \item The mentor is queried with probability approaching 0 as $t \to \infty$. (Corollary \ref{cor:limquery})
    \item For any complexity class $C$, we can set $\M$ so that for any event $E$ in the class $C$, we can set $\beta$ so that with arbitrarily high probability: for the whole lifetime of the agent, if the event $E$ has never happened before, the agent will not make it happen. Either the mentor will take an action on the agent's behalf which makes $E$ happen for the first time, or $E$ will never happen. (Theorem \ref{thm:precedent})
\end{itemize}

We call the last point the Probably Respecting Precedent Theorem. The ``precedent'' is that a certain event has never happened, and the agent probably never takes an action which disrupts that precedent for the first time. For any failure mode that designers do not know how to specify formally, the agent can be made to probably not fail that way. The price of this is intractability, but tractable approximations of pessimism may preserve these results in practice, or perhaps even in theory. When we discover good heuristics for Bayesian reasoning, that rising tide will lift this boat.

Section \ref{sec:notation} introduces notation, Section \ref{sec:relatedwork} reviews related work, we define the agent's policy in Section \ref{sec:agentdef}, and we prove performance results and safety results in Sections \ref{sec:performance} and \ref{sec:safety}. Appendix \ref{app:notation} collects definitions and notation, Appendix \ref{app:algorithm} presents an algorithm for an $\varepsilon$-approximation of the agent's policy, Appendix \ref{app:boringproofs} contains omitted proofs, and Appendix \ref{app:discussion} contains an informal discussion.

\iftruthbomb
If we could specify with mathematical precision every possible way that an artificial agent could fail catastrophically, bounded optimization would suffice to avoid catastrophe, but in sufficiently complex environments, we cannot. We cannot specify every way cars crash, nor every way markets crash,\footnote{The existing approach of undoing algorithmic trades that precipitate a sudden market-wide loss of value fails to prepare for the possibility of merely sector-wide crashes over a longer timescale which are similarly ``baseless'' but not obviously erroneous to governing bodies.} nor every way to foment permanent mistrust online, nor every way a false alarm could be triggered in a strike-on-warning defense system (naturally or adversarially), nor every way manufacturing robots could introduce critical flaws in a product. We are only beginning to use artificial agents in high-stakes environments where we lack the ability to define catastrophe; in these contexts, agents which are too creative present a real danger. In the extreme case, in which an reinforcement learning agent can learn to complete any task that a human can (something which many experts believe will happen within 100 years \citep{muller2016future}), a creative and ``correct'' solution to reward maximization would be intervening in the provision of its own reward \citep{amodei_olah_2016,ring_orseau_2011}, which may require ``taking over the world'' in the colloquial sense \citep{omohundro_2008,bostrom_2014}, if such an event is possible to make probable given its action space.

Intuitively, there are contexts in which we would like advanced agents to be conservative. Novel approaches should be treated with extreme caution, and only taken when the agent is quite sure its understanding generalizes to this zero-shot new idea. It unclear how to usefully define conservativeness in a way that scales to arbitrarily advanced agents learning arbitrarily complex environments. For a weak agent in a simple environment, the following approach may suffice: model the environment as finite-state Markov, learn from a mentor, and only take actions that you have already observed the mentor take from the current state. This does not scale; it certainly doesn't scale to an agent which is expected to model non-stationary features of the environment, but it doesn't even scale to environments with finite state spaces that are exponentially large.

We present a pessimistic reinforcement learner and prove a result about its behavior which we argue captures ``conservatism''. We also prove a performance result about our agent; it would be easy to design a conservative artificial idiot---have it always pick the same action. Our performance result is that it learns to accrue reward at least as well as a mentor. The pessimism parameter can be tuned. At higher values, the agent is more conservative, and at lower values, the agent is more likely to \textit{exceed} the performance of the mentor. The biggest weakness of our work is that it is unclear how big of a sweet spot there is, in which the agent exceeds the mentor's performance, while still being sufficiently conservative. This likely depends on the environment; in some environments there may be no sweet spot at all. Like a microwave which can't make the food uniformly the right temperature because by the time none of it is too cold, most of is too hot, it might be the case in some environments that once the agent is pessimistic enough to be safe, it is already too pessimistic to be useful.

Our agent is a Bayesian reinforcement learner which maximizes worst-case expected reward. It acts according the $\argmax$ over policies of the $\min$ over world-models (in some subset of its model class) of the expected future discounted reward. That is, it optimizes the ``pessimistic value'' of its actions. It queries a mentor regarding which action to take when a Thompson-sampled estimate of the value of deferring to the mentor exceeds the optimal pessimistic value plus noise, or when the optimal pessimistic value is 0.

Two other forms of pessimism are easy to imagine: maximize the expectation of some concave function of the reward, or maximize the $x$\textsuperscript{th} percentile of the distribution over reward for $x < 50$. Both of these formulations of pessimism distort behavior within parts of the environment that we understand and needlessly penalize taking well-understood safe risks. We also don't see how they might yield anything as strong as our Theorem \ref{thm:precedent} regarding conservatism. Our agent is pessimistic with respect to unknown unknowns, while staying risk-neutral with respect to probability distributions whose shapes it knows with confidence.
\fi

\section{Notation} \label{sec:notation}

Let $\A$, $\Ob$, and $\R$ be finite sets of possible actions, observations, and rewards. Let $\{0, 1\} \subset \R \subset [0, 1]$. Let $\mathcal{H} = \A \times \Ob \times \R$. For each timestep $t \in \mathbb{N}$, $a_t$, $o_t$, and $r_t$ denote the action, observation, and reward, and $h_t$ denotes the triple. A policy $\pi$ can depend on the entire history so far. We denote this history $(h_1, h_2, ..., h_{t-1})$ as $h_{<t}$. Policies may be stochastic, outputting a distribution over actions. Thus, $\pi : \mathcal{H}^* \rightsquigarrow \A$, where $\mathcal{H}^* = \bigcup_{i=0}^\infty \mathcal{H}^i$, and $\rightsquigarrow$ means the function may be stochastic. Likewise, in general, a world-model $\nu: \mathcal{H}^* \times \A \rightsquigarrow \Ob \times \R$ may be stochastic, and it may depend on the entire interaction history. The latter possibility allows (the agent to conceive of) environments which are not finite-state Markov. A policy $\pi$ and a world-model $\nu$ induce a probability measure $\p^\pi_\nu$ over infinite interaction histories. This is the probability of events when actions are sampled from $\pi$ and observations and rewards are sampled from $\nu$. Formally, $\p^\pi_\nu(h_{\leq t}) = \prod_{k = 1}^t \pi(a_k | h_{<k}) \nu(o_k r_k | h_{<k} a_k)$. We use general, history-based world-models, with no assumptions on $\nu \in \M$, even though they present complications that finite-state Markov, ergodic world-models do not.

The agent will maintain a belief distribution over a class of world-models $\M$. We allow this to be an arbitrary countable set. A prime example, the set of semicomputable stochastic world-models $\M_{\textrm{COMP}}$ \citep{Hutter:04uaibook}, is only countable, but large enough. The agent starts with a prior belief $w(\nu)$ that the world-model $\nu \in \M$ is the true environment ($w$ is for ``weight''). Naturally, $\sum_{\nu \in \M} w(\nu) = 1$. The agent updates its belief distribution according to Bayes' rule, which we write as follows: $w(\nu | h_{<t}) :\propto w(\nu) \prod_{k = 1}^{t-1} \nu(o_k r_k | h_{<k} a_k)$, normalized so that $\sum_{\nu \in \M} w(\nu | h_{<t}) = 1$. Let $\mu$ be the true environment. We assume $\mu \in \M$, and we assume the true observed rewards are at least $\varepsilon_r > 0$. (The assumption that rewards belong to a bounded interval is ubiquitous in RL).

For an agent with a discount factor $\gamma \in [0, 1)$, and a policy $\pi$, given a world-model $\nu$, and an interaction history $h_{<t}$, the \textit{value} of that policy from that position in that world is
\begin{equation}
    V^\pi_\nu(h_{<t}) := (1-\gamma)\E^\pi_\nu \left[\sum_{k=t}^\infty \gamma^{k-t} r_k \vd h_{<t} \right]
\end{equation}
where $\E^\pi_\nu$ is the expectation under the probability measure $\p^\pi_\nu$. The factor of $1 - \gamma$ normalizes the value to $[0, 1]$ for convenience.

\section{Related Work} \label{sec:relatedwork}

Virtually all previous work that attempts to make reinforcement learners avoid unspecified failure modes assumes a finite-state Markov environment. We do not, but the literature is nonetheless informative for our general setting.

\citet{heger1994consideration} defines $\hat{Q}$-learning, which maximizes the worst-case return for a known MDP, and \citet{jiang1998minimax} extend the case to unknown MDPs. As \citet{garcia2015comprehensive} describe, \citet{gaskett2003reinforcement} found empirically that such extreme pessimism is more harmful than helpful. \citet{gaskett2003reinforcement} introduces a variant on the Q-value, which is the value of an action under the assumption that at each future timestep, with some probability, the \textit{worst} action will be taken, instead of the best one; they test this empirically.

Closer to our approach, \citet{iyengar2005robust} and \citet{nilim2005robust} construct a policy which is robust to errors in the transition probabilities by considering the worst-case return within some error tolerance. Much of the work on the topic takes the form of presenting a tractable approach to the execution of this robust policy, e.g. \citet{tamar2013scaling}. Unfortunately, this research assumes access to an MDP with (approximately) known transition probabilities---at first glance this seems like something an agent might reasonably have access to after limited observations, but the MDPs are assumed to be \textit{uniformly} approximately known, which requires exploration, and indeed requires observing every ``failure'' state that the robust policies are supposed to avoid. The finite-state Markov assumption their work makes is useful for many circumstances, but advanced agents may have to conceive of non-stationarity in the environment, and importantly for our purposes, \textit{novel} failure modes.

Other work makes use of a mentor to avoid ``dangerous'' states (whereas in our work, the mentor lower-bounds the capability of the agent, and robustness derives from pessimism). Imitation learning \citep{abbeel2004apprenticeship,ho2016generative,ross2011reduction} makes the most of a mentor in the absence of other feedback, like rewards. An abundance of ``ask for help'' algorithms query a mentor under conditions which correspond to some form of uncertainty \citep{clouse1997integrating,hans2008safe,garcia2012safe,garcia2013safe}. \citet{kosoy2019safeml} gives a regret bound for an agent in a (non-ergodic) MDP, given access to an expert mentor and a finite set of models that contains the truth. \citet[Section 4.1.3.2]{garcia2015comprehensive} review many protocols by which a mentor monitors the state and intervenes at will through various channels, and \citet{saunders2018trial} is another more recent example. One risk of relying on mentor-intervention to protect against critical failure is that a mentor may not recognize action sequences which lead to critical failure, even if we would trust a mentor not to wander into those failure modes by virtue of their complexity.

\citepos{Hutter:15ratagentx} optimistic agent directly inspired this work; optimism is designed to be an exploration strategy. \citepos{Hutter:04uaibook} formulation of universal artificial intelligence is the basic theoretical framework we use here to analyze idealized artificial agents. Technically, our work borrows most from \citepos{Hutter:09mdltvp}, \citepos{Hutter:16thompgrl}, and \citepos{cohen2019asymptotically} work on Bayesian agents with general countable model-classes.

\section{Agent Definition} \label{sec:agentdef}

We now define the pessimistic policy and the probability with which the agent defers to a mentor. We define the agent's policy mathematically here, and we write an algorithm in Appendix \ref{app:algorithm}.

\subsection{Pessimism}
$\beta \in (0, 1)$ will tune the agent's pessimism. If, for example, $\beta = 0.95$, we say that the agent is 95\% pessimistic. Such an agent will restrict attention to a set of world-models that covers 95\% of its belief distribution, and act to maximize expected reward in the worst-case scenario among those world-models. Formally,
let $\nu^k$ be the world-model in $\M$ with the $k$\textsuperscript{th} largest posterior weight, and let $\Top_k$ be the top-$k$ most probable world-models, defined as follows:

\begin{minipage}[t]{0.3\columnwidth}
\vspace{-10pt}
\begin{align}
    \Top_0(h_{<t}) &:= \emptyset \label{eqn:startmbtdef}
\end{align}
\vspace{-10pt}
\end{minipage}
\begin{minipage}[t]{0.65\columnwidth}
\vspace{-8pt}
\begin{align}
    \nu^k(h_{<t}) &:= \argmax_{\nu \in \M \setminus \Top_{k-1}(h_{<t})} w(\nu | h_{<t})
    \\
    \Top_k(h_{<t}) &:= \Top_{k-1}(h_{<t}) \cup \{\nu^k(h_{<t})\}
\end{align}
\vspace{-8pt}
\end{minipage}
Ties in the $\argmax$ are broken arbitrarily (as everywhere else in the paper). Then,
\begin{align}
    k^\beta_t &:= \min \left\{ k \in \mathbb{N} \ \vd \ \sum_{\nu \in \Top_k(h_{<t})} w(\nu | h_{<t}) > \beta \right\}
    \\
    \Mbt &:= \Top_{k^\beta_t}(h_{<t}) \label{eqn:endmbtdef}
\end{align}
Note that $k^\beta_t$ and $\Mbt$ both depend on $h_{<t}$, not just $t$, and note that $\Mbt$ satisfies
\begin{equation}
    \sum_{\nu \in \Mbt} w(\nu | h_{<t}) > \beta
\end{equation}

The $\beta$-pessimistic policy is defined as follows:
\begin{align}
    &\pi^\beta_t := \argmax_{\pi \in \Pi} \min_{\nu \in \Mbt} V^\pi_\nu(h_{<t}) \label{eqn:agentdef}
    \\
    &\pi^\beta(\cdot | h_{<t}) := \pi^\beta_t(\cdot | h_{<t}) \label{eqn:agentdeffinal}
\end{align}
$\Pi$ is the set of all deterministic policies, and some deterministic policy will always be optimal \citep{Hutter:14tcdiscx}. The connection to the minimax approach in game theory is interesting: from Equation \ref{eqn:agentdef}, it looks as though the pessimistic agent believes there is an adversary in the environment. Our policy is inspired by \citepos{Hutter:15ratagentx} optimistic agent, in which the $\min$ is replaced with a $\max$, and $\Mbt$ is replaced with an arbitrary finite subset of the model class. Whereas the purpose of optimism is to encourage exploration, the purpose of pessimism is to discourage novelty.

\subsection{The Mentor}

Since pessimism discourages exploration, we introduce a mentor to demonstrate a policy. We suppose that at any timestep, the agent may defer to a mentor, who will then select the action on the agent's behalf. Thus, the agent can choose to follow the mentor's policy $\pi^m$, not by computing it, but rather by querying the mentor. $\pi^m$ may be stochastic. What remains to be defined is \textit{when} the agent queries the mentor.

The agent maintains a posterior distribution over a set of mentor-models. Each mentor-model is a policy $\pi \in \mathcal{P}$, an arbitrary countable set, and let $w'(\pi)$ be the prior probability that the agent assigns to the proposition that the mentor samples actions from $\pi$. Letting $q_k = 1$ if the agent queried the mentor at timestep $k$, and letting $q_k = 0$ otherwise, the posterior belief $w'(\pi | h_{<t}) :\propto w'(\pi) \prod_{k < t : q_k = 1} \pi(a_k | h_{<k})$.

At timestep $t$, the agent follows the following procedure to determine whether to query the mentor. $\hat{\pi}_t \sim w'(\cdot | h_{<t})$. $\hat{\nu}_t \sim w(\cdot | h_{<t})$. Sampling from a posterior is often called Thompson Sampling \citep{thompson1933likelihood}. $X_t := V^{\hat{\pi}_t}_{\hat{\nu}_t}(h_{<t})$. $Y_t := \max_{\pi \in \Pi} \min_{\nu \in \Mbt} V^\pi_\nu(h_{<t})$. Let $Z_t > 0$ be an i.i.d. random variable such that for all $\varepsilon > 0$, $p(Z_t < \varepsilon) > 0$, e.g. $Z_t \sim \mathrm{Uniform}((0, 2])$. If $X_t > Y_t + Z_t$, or if $Y_t = 0$, the agent defers to the mentor. For ease of analysis, we also require $p(Z_t > 1) > 0$. The greater the possibility that the mentor can accrue much more reward, the higher the probability of deferring.

When $Y_t = 0$, we call this the ``zero condition.'' Our earlier assumption that the true observed rewards be at least $\varepsilon_r > 0$ is to ensure the zero condition only happens finitely often. The agent will still consider it possible to get zero reward, but it will never actually observe such a thing. Let $\theta_t$ denote the probability that $q_t = 1$ and the agent defers to the mentor; note that $\theta_t$ depends on the whole history, not just $t$.

The pessimistic agent's policy, which mixes between $\pi^\beta$ (from Eqn. \ref{eqn:agentdeffinal}) and $\pi^m$ according to its query probability, is denoted $\pi^\beta_Z$; that is, $\pi^\beta_Z(\cdot | h_{<t}) := \theta_t\pi^m(\cdot | h_{<t}) + (1-\theta_t)\pi^\beta(\cdot | h_{<t})$.

\section{Performance Results} \label{sec:performance}

We now present our first contribution: we show that value of the agent's policy will at least approach, and perhaps exceed, the value of the mentor's policy. We also show that the probability of querying the mentor approaches $0$. In the next section, we will prove results regarding the safety of the agent.

We begin with a lemma regarding Bayesian sequence prediction: the $\beta$-maximum a posteriori models---that is, the minimal set of models that amount to at least $\beta$ of the posterior---all ``merge'' with the true world-model. We require some new notation to define this formally.

Let $x_{<\infty} \in \mathcal{X}^\infty$; that is, it is an infinite string from a finite alphabet $\mathcal{X}$. Let $x_{<t}$ be the first $t-1$ characters of $x_{<\infty}$. We consider probability measures over the outcome space $\Omega = \mathcal{X}^\infty$, with the standard event space being the $\sigma$-algebra of cylinder sets: $\mathcal{F} = \sigma(\{\{x_{<t}y|y \in \mathcal{X}^\infty\}|x_{<t} \in \mathcal{X}^*\})$.
We abbreviate $x_{<\infty}$ as $\omega$. We will consider a countable class of probability measures over this space $\M = \{Q_i\}_{i \in \mathbb{N}}$. One such probability measure will be denoted $P$ (the true sampling one), and $Q$ will denote an arbitrary probability measure over $\mathcal{X}^\infty$. 

We will write $P(x_{<t})$ to mean the probability that the infinite string $\omega$ begins with $x_{<t}$; so technically, it is shorthand for $P(\{x_{<t}y|y \in \mathcal{X}^\infty\})$. By $P(x'|x_{<t})$ (for $x' \in \mathcal{X}^*$), we mean $P(x_{<t}x')/P(x_{<t})$, that is, the probability that $x'$ follows $x_{<t}$. We begin with prior weights over $Q \in \M$, denoted $w(Q) > 0$, and satisfying $\sum_{Q \in \M}w(Q) = 1$, and we let the posterior weight be
\begin{equation}
    w(Q|x_{<t}) := \frac{w(Q)Q(x_{<t})}{\sum_{Q' \in \M} w(Q')Q'(x_{<t})}
\end{equation}
For $\M' \subset \M$, we also define $w(\M'|\cdot) = \sum_{Q \in \M'}w(Q|\cdot)$.

The $k$-step variation distance between $P$ and $Q$ is how much they can possibly differ on the probability of what the next $k$ characters might be \citep{Hutter:04uaibook}.

\begin{definition}[$k$-step variation distance]
\begin{equation*}
    d_k(P, Q|x_{<t}) = \max_{\mathcal{E} \subset \mathcal{X}^k}\va P(\mathcal{E}|x_{<t}) - Q(\mathcal{E}|x_{<t}) \va
\end{equation*}
\end{definition}
\begin{definition}[Total variation distance]
\begin{equation*}
    d(P, Q|x_{<t}) = \lim_{k \to \infty} d_k(P, Q|x_{<t})
\end{equation*}
\end{definition}
which exists because $d_k(P, Q|x_{<t})$ is non-decreasing and bounded by $1$.

Inspired by \citet{blackwell1962merging}, the following lemma may interest some Bayesians more than any of our theorems. Defining $\Mbt$ exactly as before (see Equations \ref{eqn:startmbtdef} - \ref{eqn:endmbtdef}), but for $Q \in \M$ instead of for $\nu \in \M$, and conditioning on $x_{<t}$ instead of $h_{<t}$,
\begin{restatable}[Merging of Top Opinions]{lemma}{mergetopopinnionslemma}
 \label{lem:topopinions}
For $\beta \in (0, 1)$, $\lim_{t \to \infty} \max_{Q \in \Mbt} d(P, Q | x_{<t}) = 0$ with $P$-probability 1 (i.e. when $x_{<\infty} = \omega \sim P$).
\end{restatable}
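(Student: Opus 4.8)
The plan is to reduce the statement to two facts about the Bayesian posterior under the true measure $P$ (which lies in the class, since $\mu \in \M$ in the application, so $P \in \M$): that non-merging models lose all posterior weight, and that the surviving weight concentrates on models merging with $P$, so that the minimal top-$\beta$ set $\Mbt$ can contain no non-merging model once $t$ is large. First I would set up the likelihood-ratio martingales. For each $Q \in \M$ the ratio $R^Q_t := Q(x_{<t})/P(x_{<t})$ is a nonnegative $P$-martingale, and the mixture ratio $M_t := \xi(x_{<t})/P(x_{<t}) = \sum_Q w(Q) R^Q_t$ (with $\xi := \sum_Q w(Q) Q$) satisfies $M_t \ge w(P) > 0$. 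By the martingale convergence theorem both converge $P$-a.s.\ to finite limits, so every posterior weight $w(Q|x_{<t}) = w(Q) R^Q_t / M_t$ converges $P$-a.s.\ to some $w_\infty(Q) \ge 0$, and in particular $w(P|x_{<t}) \to w(P)/M_\infty > 0$. Separately, the domination $\xi(x_{<t}) \ge w(P) P(x_{<t})$ gives the standard bound $\E_P \sum_t \KL(P(\cdot|x_{<t}) \,\|\, \xi(\cdot|x_{<t})) \le -\log w(P) < \infty$; by Pinsker the one-step variation distances are square-summable $P$-a.s., and the usual upgrade yields $d(P, \xi | x_{<t}) \to 0$ $P$-a.s., i.e.\ the mixture merges with the truth.

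Next I would prove the two facts. (i) If $w_\infty(Q) > 0$ then $R^Q_t$ converges to a strictly positive finite limit, so $P$ and $Q$ are mutually absolutely continuous along the realized path; by the Blackwell--Dubins theorem this forces $d(P, Q | x_{<t}) \to 0$, i.e.\ $Q$ merges with $P$. (ii) No posterior mass escapes: $\sum_Q w_\infty(Q) = 1$ $P$-a.s. I would obtain (ii) from Scheffé's lemma once I show $\sum_Q w(Q) R^Q_\infty = M_\infty$, i.e.\ that the convergence $M_t \to M_\infty$ is not accompanied by mass leaking onto ever-higher-indexed models; the mixture-merging of the previous paragraph is the lever here, since it pins down $M_\infty$ as a genuine normalized limit rather than permitting a singular defect.

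Finally I would finish by a ranking argument. Fix $\epsilon > 0$ and $\beta' = (1+\beta)/2 \in (\beta, 1)$. By (i)--(ii) the limiting weight is carried entirely by merging models, so I can pick a finite set $\mathcal F$ of merging models with $\sum_{Q \in \mathcal F} w_\infty(Q) > \beta'$ and $m := \min_{Q \in \mathcal F} w_\infty(Q) > 0$, and then a larger finite merging set $\mathcal F' \supseteq \mathcal F$ whose complement carries limiting weight below $m/2$. For large $t$, every member of $\mathcal F$ has weight above $m/2$ and is within $\epsilon$ of $P$ in total variation, while every model at distance $\ge \epsilon$ lies outside $\mathcal F'$ and hence has weight below $m/2$. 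Since $\mathcal F$ alone already covers more than $\beta$, the greedy top-$k$ construction of $\Mbt$ crosses $\beta$ while still drawing only from models of weight $\ge m/2$; consequently $\Mbt$ contains no model at distance $\ge \epsilon$, so $\max_{Q \in \Mbt} d(P, Q | x_{<t}) < \epsilon$. Letting $\epsilon \downarrow 0$ gives the claim.

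The hard part is fact (ii) together with its use in the ranking step. The difficulty is exactly that $\Mbt$ is a \emph{moving} set of highest-weight models, so a non-merging model carrying small but non-negligible weight could a priori be ranked into the top-$\beta$ set; ruling this out requires controlling not just each individual posterior weight but the total and the tail of the posterior over the countable class $\M$ (no escaping or diffuse non-merging mass), for which the dominated summable-KL bound and a Borel--Cantelli/Scheffé argument over $\M$ are the essential tools.
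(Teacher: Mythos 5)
Your overall architecture coincides with the paper's: posterior weights converge a.s.\ (the paper's Lemma \ref{lem:poststable}), every model whose posterior weight does not vanish merges with $P$ (your fact (i); the paper imports this as the ``Merge or Leave'' Lemma \ref{lem:mergeorleave} from \citet{Hutter:09mdltvp} rather than deducing it from Blackwell--Dubins --- note that Blackwell--Dubins as usually stated needs $P \ll Q$ as measures, so your ``mutually absolutely continuous along the realized path'' invocation is really an appeal to that same nontrivial pathwise refinement), the limits of the posterior weights sum to one (your fact (ii), the paper's Lemma \ref{lem:sumoflimits}), and finally a ranking argument placing $\Mbt$ eventually inside a fixed finite family of merging models. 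Your final ranking step is correct and essentially identical to the paper's (the paper uses the finite set $\M^{\beta'}_\infty$ with $\beta' = 1 - w^\beta_\infty/4$ where you use $\mathcal F'$).

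The genuine gap is your proof of fact (ii). Scheff\'e's lemma is circular here: to apply it you must already know $\sum_{Q} w(Q) R^Q_\infty = M_\infty$, and that identity (Fatou gives only ``$\leq$'') is exactly the no-leakage statement you are trying to prove. Moreover, your proposed lever --- merging of the mixture, $d(P, \xi \mid x_{<t}) \to 0$ --- cannot supply it: merging constrains the \emph{predictive} distribution of the mixture, not the location of posterior mass inside the mixture; models far out in the enumeration can predict (nearly) identically to $P$ over any finite window while jointly carrying posterior weight bounded away from zero, so predictive merging is compatible with mass drifting out to the tail of $\M$. The paper closes this hole with a different tool, the ``Overtaking is Unlikely'' Lemma \ref{lem:overtaking}, i.e.\ a Markov/Fatou bound on likelihood-ratio martingales: if $\sum_Q \lim w(Q|x_{<t}) = 1 - \varepsilon_\omega < 1$, pick a finite set whose complement $\M^c_\omega$ has prior weight at most $\varepsilon_\omega c\, w(P)^{-1}$; then the posterior mass on $\M^c_\omega$ exceeds $\varepsilon_\omega$ infinitely often, which (since the full mixture dominates $w(P)P$) forces the ratio of the tail Bayes-mixture to $P$ to exceed $1/c$ infinitely often, an event of $P$-probability at most $c$ because that ratio is a nonnegative $P$-martingale with unit expectation; letting $c \downarrow 0$ finishes. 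Replace your Scheff\'e step with this overtaking bound and the rest of your proposal (fact (i) plus the ranking argument) goes through.
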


Unless otherwise specified, all limits in this paper are as $t \to \infty$. This lemma is proven in Appendix \ref{app:boringproofs}, and it requires a few lemmas that are stated and proven there as well. Among these, Lemma \ref{lem:sumoflimits} is a beautiful one that we feel should be known, but we couldn't find it in the literature. It says the sum of the limits of posterior weights is 1, a.s.: $\sum_{Q \in \M} \lim w(Q | x_{<t}) = 1$ with $P$-prob.1, for $P \in \M$. The others are short results from recent papers; we restate them there and re-prove them when feasible to save the reader the trouble of translating notation and verifying that those results apply to our current problem. Roughly, Lemma \ref{lem:topopinions} holds because when a true model has positive prior weight, all models either merge with the truth or have their posterior weight go to 0, so eventually, all top models must merge; but the set of top models changes with each observation, and limits require care, so it ends up being somewhat involved.

We now return to the probability space where infinite sequences are over the alphabet $\mathcal{H}$, and probability measures $\p^\pi_\nu$ denote the probability when actions are sampled from a policy $\pi$ and observations and rewards are sampled from a world-model $\nu$. Since $\pi^\beta_Z$ is the agent's policy, and $\mu$ is the true environment, we will often abbreviate ``with $\p^{\pi^\beta_Z}_\mu$-probability 1'' as just ``with probability 1'' or ``w.p.1''. We assume, for the remaining results: $\M \ni \mu$, and $\mathcal{P} \ni \pi^m$.

Further lemmas which depend on the Merging of Top Opinions Lemma are stated in Appendix \ref{app:boringproofs}. They are: with probability 1, on-policy prediction converges, the zero condition occurs only finitely often, and ``almost-on-policy prediction'' converges, which is roughly that if the agent's policy mimics another policy $\pi_t$ with some uniformly positive probability some of the time, then on those timesteps, on-$\pi_t$-policy prediction converges to the truth. Formally,

\begin{restatable}[Almost On-Policy Convergence]{lemma}{mergejustoffpollemma}\label{lem:mergejustoffpol}
For a sequence of policies $\pi_t$ and an infinite set of timesteps $\tau$, the following holds with $\p^{\pi^\beta_Z}_\mu$-prob. 1: if there exists $c > 0$ such that $\forall t \in \tau \ \forall t' \geq t \ \forall a \in \A \ \pi^\beta_Z(a | h_{<t'}) \geq c\pi_t(a | h_{<t'})$,
then
$\lim_{\tau \ni t \to \infty} V^{\pi_t}_\mu(h_{<t}) - \min_{\nu \in \Mbt} V^{\pi_t}_\nu(h_{<t}) = 0$ and for all $k$, $\lim_{\tau \ni t \to \infty} \max_{\nu \in \Mbt} \allowbreak d_k\left(\p^{\pi_t}_\nu, \p^{\pi_t}_\mu \vb h_{<t} \right) = 0$.
\end{restatable}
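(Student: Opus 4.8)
The plan is to reduce everything to the Merging of Top Opinions Lemma applied to the \emph{on-policy} sequence, and then to transfer on-$\pi^\beta_Z$-policy convergence to on-$\pi_t$-policy convergence using the domination hypothesis, paying a factor that is harmless because the horizon $k$ is held fixed. First I would instantiate Lemma \ref{lem:topopinions} with alphabet $\mathcal{X} = \mathcal{H}$, sampling measure $P = \ptrue$, and model class $\{\p^{\pi^\beta_Z}_\nu : \nu \in \M\}$. The crucial observation is that the Bayesian posterior over these sequence-measures coincides with the posterior $w(\nu | h_{<t})$ over world-models: the likelihood $\p^{\pi^\beta_Z}_\nu(h_{<t}) = \prod_k \pi^\beta_Z(a_k|h_{<k})\nu(o_kr_k|h_{<k}a_k)$ differs from $\prod_k \nu(o_kr_k|h_{<k}a_k)$ only by the action factors $\prod_k\pi^\beta_Z(a_k|h_{<k})$, which are common to every $\nu$ and cancel under normalization. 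Hence the top-$\beta$ set for this sequence-prediction problem is exactly $\Mbt$, and the lemma yields $\max_{\nu\in\Mbt} d(\p^{\pi^\beta_Z}_\mu, \p^{\pi^\beta_Z}_\nu | h_{<t}) \to 0$ with $\ptrue$-probability $1$. Since $d_k \le d$ for every $k$, this gives $\max_{\nu \in \Mbt} d_k(\p^{\pi^\beta_Z}_\nu, \p^{\pi^\beta_Z}_\mu | h_{<t}) \to 0$ for each fixed $k$; that is, on-$\pi^\beta_Z$-policy prediction converges.

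Next I would establish the transfer bound, which is where the domination hypothesis does its work. Using the $L_1$ characterization $d_k(P,Q|h_{<t}) = \frac{1}{2}\sum_{e\in\mathcal{H}^k}|P(e|h_{<t})-Q(e|h_{<t})|$, I write the conditional probability of a $k$-step continuation $e$ as a product of action factors and observation/reward factors: under $\p^{\pi_t}_\nu$ it is $g(e)f_\nu(e)$ with $g(e)=\prod_{j=t}^{t+k-1}\pi_t(a_j|h_{<j})$ and $f_\nu(e)=\prod_{j=t}^{t+k-1}\nu(o_jr_j|h_{<j}a_j)$, while under $\p^{\pi^\beta_Z}_\nu$ it is $G(e)f_\nu(e)$ with $G(e)=\prod_{j=t}^{t+k-1}\pi^\beta_Z(a_j|h_{<j})$. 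The observation/reward factor $f_\nu$ is identical in both, so the $\nu$-versus-$\mu$ discrepancies factor out as $|f_\nu(e)-f_\mu(e)|$, and the hypothesis $\pi^\beta_Z(a|h_{<t'})\ge c\,\pi_t(a|h_{<t'})$ gives $g(e)\le c^{-k}G(e)$ pointwise. Summing over $e$ yields
\begin{equation*}
  d_k(\p^{\pi_t}_\nu, \p^{\pi_t}_\mu | h_{<t}) \;\le\; c^{-k}\, d_k(\p^{\pi^\beta_Z}_\nu, \p^{\pi^\beta_Z}_\mu | h_{<t})
\end{equation*}
for every $t\in\tau$ and every $\nu$. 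Because $k$ is fixed, $c^{-k}$ is a constant, so combining with the first step gives $\max_{\nu\in\Mbt} d_k(\p^{\pi_t}_\nu, \p^{\pi_t}_\mu | h_{<t}) \to 0$ along $\tau$, which is the second conclusion.

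Finally I would convert this into the value statement using discounting. Splitting the return at horizon $k$, the tail $(1-\gamma)\E^{\pi_t}_\nu[\sum_{j\ge t+k}\gamma^{j-t}r_j | h_{<t}]$ lies in $[0,\gamma^k]$ for every environment since rewards are in $[0,1]$, while the head is a bounded function of the $k$-step continuation whose expectation differs between $\p^{\pi_t}_\nu$ and $\p^{\pi_t}_\mu$ by at most $d_k(\p^{\pi_t}_\nu,\p^{\pi_t}_\mu|h_{<t})$; hence $|V^{\pi_t}_\nu(h_{<t}) - V^{\pi_t}_\mu(h_{<t})| \le \gamma^k + d_k(\p^{\pi_t}_\nu,\p^{\pi_t}_\mu|h_{<t})$. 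Since $|V^{\pi_t}_\mu(h_{<t}) - \min_{\nu\in\Mbt}V^{\pi_t}_\nu(h_{<t})| \le \max_{\nu\in\Mbt}|V^{\pi_t}_\mu(h_{<t}) - V^{\pi_t}_\nu(h_{<t})|$ (whether or not $\mu\in\Mbt$), given $\varepsilon>0$ I would first pick $k$ with $\gamma^k<\varepsilon/2$ and then use the second conclusion to make $\max_{\nu\in\Mbt}d_k<\varepsilon/2$ for large $t\in\tau$, giving the first conclusion.

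The main obstacle is the transfer step. One must ensure the domination holds along every \emph{hypothetical} intermediate history appearing in the $k$-step continuations, not merely the realized trajectory, so that $g(e)\le c^{-k}G(e)$ holds for all $e$; and one must fix the horizon $k$ before invoking the $c^{-k}$ bound, since letting $k\to\infty$ simultaneously with $t$ would let the constant blow up. Holding $k$ fixed, driving $t\to\infty$ first, and only afterwards taking $k$ large enough to control the discounted tail is what keeps the argument honest.
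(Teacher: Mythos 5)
Your proof is correct and takes essentially the same route as the paper's: on-policy convergence from the Merging of Top Opinions Lemma (the paper's Corollary \ref{cor:onpolconv}), the transfer inequality $d_k\left(\p^{\pi_t}_\nu, \p^{\pi_t}_\mu \vb h_{<t}\right) \leq c^{-k} d_k\left(\p^{\pi^\beta_Z}_\nu, \p^{\pi^\beta_Z}_\mu \vb h_{<t}\right)$, and truncation at horizon $k$ to relate value differences to $k$-step variation distance (the paper's Lemma \ref{lem:valuetotvd}). The differences are presentational: the paper argues by contradiction and asserts the transfer inequality without derivation, whereas you prove it via the $L_1$ characterization and correctly flag that the domination hypothesis must hold on hypothetical continuation histories, a point the paper glosses over.
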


The proof is in Appendix \ref{app:boringproofs}; if it didn't hold, on-policy prediction error would be bounded below at those timesteps $\tau$. Our main performance results are corollaries of the following theorem.

\begin{theorem}[Exploiting Surpasses Exploring]\label{thm:main}
\begin{equation*}
    \liminf w(\nu|h_{<t})w'(\pi|h_{<t}) > 0 \implies \liminf V^{\pi^\beta}_\mu(h_{<t}) - V^\pi_\nu(h_{<t}) \geq 0 \ \ \textrm{w.p.1}
\end{equation*}
\end{theorem}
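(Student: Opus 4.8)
The plan is to argue by contradiction, converting a persistent value gap into infinitely many deferrals to the mentor and then showing that such deferrals are self-defeating. First I would unpack the hypothesis. Since posterior weights lie in $[0,1]$, the assumption $\liminf w(\nu|h_{<t})w'(\pi|h_{<t}) > 0$ forces both $\liminf w(\nu|h_{<t}) > 0$ and $\liminf w'(\pi|h_{<t}) > 0$; in particular there is $\delta_0 > 0$ such that for all large $t$ the Thompson-sampled pair $(\hat\nu_t,\hat\pi_t)$ equals $(\nu,\pi)$ with probability at least $\delta_0$, so that $X_t = V^\pi_\nu(h_{<t})$ on those draws. I would also record two consequences of the non-vanishing weights for later use: $\nu$ merges with $\mu$ along the realized (on-$\pi^\beta_Z$-policy) history, and $\pi$ merges with the true mentor $\pi^m$ along the query-step histories.

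Next, suppose for contradiction that the conclusion fails, so that for some $\varepsilon > 0$ there is an infinite set $\tau$ with $V^\pi_\nu(h_{<t}) - V^{\pi^\beta}_\mu(h_{<t}) \geq \varepsilon$ on $\tau$. I would apply Lemma \ref{lem:mergejustoffpol} to the constant sequence $\pi_t \equiv \pi^\beta$: its domination hypothesis holds because $\pi^\beta_Z(a|h_{<t'}) \geq (1-\theta_{t'})\pi^\beta(a|h_{<t'})$ and $1-\theta_{t'}$ is bounded below, since the zero condition occurs only finitely often and, once $Y_{t'} > 0$, the deferral probability is at most $1 - p(Z_{t'} > 1) < 1$ as $X_t - Y_t \in [0,1]$. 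Because $\min_{\nu' \in \Mbt} V^{\pi^\beta}_{\nu'} = Y_t$ by definition of $\pi^\beta$, the lemma gives $V^{\pi^\beta}_\mu(h_{<t}) - Y_t \to 0$, so $V^\pi_\nu(h_{<t}) \geq Y_t + \varepsilon/2$ on $\tau$ for large $t$.

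With this in hand I would force infinitely many deferrals. Whenever the sampler draws $(\nu,\pi)$ we have $X_t \geq Y_t + \varepsilon/2$, so drawing $Z_t < \varepsilon/2$ triggers $X_t > Y_t + Z_t$; hence the deferral probability is at least $\delta_0\, p(Z_t < \varepsilon/2) =: p_0 > 0$ on $\tau$. Since $\sum_{t \in \tau} p_0 = \infty$, the extended (conditional) Borel--Cantelli lemma yields infinitely many deferrals almost surely on this event, so the mentor is actually observed on-policy infinitely often along $\tau$.

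The final step is to extract the contradiction from these infinitely many deferrals, and I expect this to be the main obstacle. The intended mechanism is that the overshoot $V^\pi_\nu \geq Y_t + \varepsilon/2$ cannot persist: since $\pi$ has merged with $\pi^m$ and $\nu$ with $\mu$, the sampled value $V^\pi_\nu$ ought to track the genuinely achievable value of the mentor, which $Y_t$ already approximately upper-bounds via $Y_t = \max_{\pi'} \min_{\nu' \in \Mbt} V^{\pi'}_{\nu'} \geq \min_{\nu' \in \Mbt} V^{\pi^m}_{\nu'}$ once the top models have merged with $\mu$. The delicacy is that $\pi^\beta_Z$ never executes $\pi$, nor even rolls out $\pi^m$ beyond the isolated query steps, so $V^\pi_\nu$ is the counterfactual value of a policy that is not followed, while merging only controls predictions along the realized trajectory and $\mu$ need not belong to $\Mbt$. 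Bridging this gap is the heart of the argument, and I would attempt it by combining the merging of both posteriors with the prediction ($d_k$) conclusion of Lemma \ref{lem:mergejustoffpol} to bound $\min_{\nu' \in \Mbt} V^{\pi^m}_{\nu'}$ from below by the mentor's attained value and thereby close the loop against $V^\pi_\nu \geq Y_t + \varepsilon/2$.
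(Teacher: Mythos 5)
Your setup and first two stages do track the paper's own proof: the paper likewise fixes $\varepsilon > 0$ and an infinite set of timesteps on which the gap exceeds a multiple of $\varepsilon$, applies Lemma \ref{lem:mergejustoffpol} to the constant sequence $\pi_t \equiv \pi^\beta$ with $c = p(Z_t > 1)$ to identify $V^{\pi^\beta}_\mu$ with the pessimistic value $Y_t$, and then uses the persistent gap to bound the query probability below on those timesteps. But the step you flag as ``the main obstacle'' is not a technicality to be deferred: it is the entire content of the theorem, and the route you sketch for closing it would fail. Infinitely many isolated deferrals only give one-step on-mentor-policy information at scattered times; they can never justify identifying $\min_{\nu' \in \Mbt} V^{\pi^m}_{\nu'}$ or $V^\pi_\nu$ with the mentor's attained value, because merging is an on-policy phenomenon and the agent never rolls out $\pi^m$ (or $\pi$) for more than a single step at a time. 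Indeed the conclusion of the theorem is an inequality rather than an equality precisely because $V^\pi_\nu$ need never converge to anything true.

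The paper's mechanism is different and inherently inductive over the horizon depth $k$. It shows: if the gap persists at time $t+k$, the agent queries at $t+k$ with probability bounded below; hence the domination hypothesis of Lemma \ref{lem:mergejustoffpol} holds for the spliced policy (follow $\pi'$ for $k+1$ steps, then $\pi^\beta$) on windows of length $k+1$ --- after separately proving $d_1(\pi', \pi^m | h_{<t})\theta_t \to 0$, so that querying the \emph{real} mentor dominates the \emph{model} $\pi'$. This yields $(k+1)$-step convergence, and a chain of inequalities --- using that $\pi^\beta$ maximizes the pessimistic value, and enlarging $\beta$ to $\alpha = \max\{\beta, 1 - \nu'_{\inf}/2\}$ so that $\nu' \in \Mat$ is guaranteed --- shows the value difference must be generated by rewards more than $k+1$ steps in the future, i.e.
\begin{equation*}
    3\varepsilon \;\leq\; \gamma^{k+1}\,\E^{\pi'}_{\mu}\!\left[V^{\pi'}_{\nu'}(h_{<t+k+1}) - V^{\pi^\beta}_{\nu'}(h_{<t+k+1}) \vb h_{<t}\right] \;\leq\; \gamma^{k+1},
\end{equation*}
while a Markov-inequality argument re-establishes the gap at $t+k+1$ and closes the induction. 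Since $3\varepsilon \leq \gamma^{k+1}$ cannot hold for all $k$, the contradiction comes from discounting, not from matching sampled values to attained values. Your proposal is missing this inductive ``pushing back'' of the value difference, the $\pi'$-versus-$\pi^m$ substitution argument, and the $\Mat$ device that keeps $\nu'$ inside the set over which the pessimistic minimum is taken; without these the argument does not go through.
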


Informally, for any world-model/mentor-model pair that remains possible, the true value of the pessimistic policy will be at least as high. A note on the proof: we will consider an infinite interaction history which violates the theorem, follow implications that hold with probability 1, and arrive at a contradiction. Strictly speaking, we are considering the set of infinite interaction histories which violate the theorem \textit{and} for which all the implications we employ  are true. The resulting set of infinite interaction histories will be $\emptyset$ once we arrive at a contradiction, so it will have probability 0. Since all implications used in the proof have probability 1 (and we only employ countably many such implications), the negation of the theorem must also have probability 0 by countable additivity. Since it is tedious to keep track of sets of outcomes for which each line in the proof holds, we simply treat implications that hold with probability 1 as if they were true logical implications, but as we have just argued, as long as this is not done uncountably many times, this is a valid style of proof.

Most of the proof is a lengthy proof by induction; we set up the proof by induction and outline the remainder, which is completed in Appendix \ref{app:boringproofs}.

\begin{proofoutline}
Fix an infinite interaction history $h_{<\infty}$. Suppose $\liminf w(\nu'|h_{<t}) \cdot w'(\pi'|h_{<t}) > 0$. This implies $\inf_t w(\nu'|h_{<t})w'(\pi'|h_{<t}) > 0$, because if a posterior is ever 0, it will always be 0. Let $\nu'_{\inf} > 0$ and $\pi'_{\inf} > 0$ denote those two infima. Let $\tau^\times = \{t : V^{\pi'}_{\nu'}(h_{<t}) > V^{\pi^\beta}_\mu(h_{<t}) + 7\varepsilon\}$. Suppose by contradiction that $|\tau^\times| = \infty$ for some $\varepsilon > 0$.

The proof proceeds by induction. Let $V^{\pi_1 k; \pi_2}_\nu(h_{<t})$ denote the value of following $\pi_1$ for $k$ timesteps, and following $\pi_2$ thereafter. Let $\tau_{-1} = \mathbb{N}$, the set of all timesteps. For $k \in \mathbb{N}$, $t_k$ and $\tau_k$ are defined inductively. Let $\alpha = \max\{\beta, 1 - \nu'_{\inf}/2\}$.

Let $t_k$ be a timestep after which $\max_{\nu \in \Mat}|V^{\pi' k; \pi^\beta}_\nu(h_{<t}) - V^{\pi' k; \pi^\beta}_\mu(h_{<t})| < \varepsilon$ and $\max_{\nu \in \Mat} \allowbreak d_k\left(\p^{\pi'}_\nu, \p^{\pi'}_\mu \vb h_{<t} \right) < \varepsilon$ for all $t \in \tau_{k-1}$ (if such a timestep exists). Recalling $\theta_t$ is the query probability, let $\tau_k$ be the set of timesteps $t \in \tau_{k-1} \ \wedge \ t \geq t_k \ \wedge (\forall t' < k : \theta_{t+t'} \geq \nu'_{\inf}\pi'_{\inf}p(Z_{t+t'} < \varepsilon)) 
\ \wedge V^{\pi'}_{\nu'}(h_{<t+k}) \geq V^{\pi^\beta}_{\mu}(h_{<t+k}) + 2\varepsilon$. We abbreviate the third condition of $\tau_k$ ``$A(t, k)$''---the query probability is bounded below for $k$ timesteps starting at $t$. We also restrict $\tau_0 \subset \tau^\times$. Now we show that $t_0$ exists with probability 1, and $|\tau_0| = \infty$ with probability 1, and if $t_k$ exists and $|\tau_k| = \infty$, then with probability 1, $t_{k+1}$ exists and $|\tau_{k+1}| = \infty$.

The remainder of the proof is in Appendix \ref{app:boringproofs}. The proof by induction roughly proceeds as follows: from $V^{\pi'}_{\nu'}(h_{<t+k}) \geq V^{\pi^\beta}_{\mu}(h_{<t+k}) + 2\varepsilon$, we show the agent will explore again at time $t+k$ with uniformly positive probability, so $A(t, k+1)$ holds. Then we can apply Lemma \ref{lem:mergejustoffpol}, and show that $\pi^\beta_Z > c\pi'$ for those $k+1$-timestep intervals, so predictions regarding the next $k+1$ timesteps on-$\pi'$-policy converge to the truth (for those certain intervals), which implies $t_{k+1}$ exists. Because $|\tau^\times| = \infty$, $V^{\pi'}_{\nu'}$ must exceed $V^{\pi^\beta}_\mu$ by $7\varepsilon$ infinitely often. The $k+1$-step convergence of $\pi'$ effectively pushes back this value difference to mostly arise from events at least $k+1$ steps in the future; if rewards differed earlier, the pessimistic value of $\pi'$ would be higher than $\pi^\beta$, but $\pi^\beta$ maximizes the pessimistic value. The value difference ``being pushed back'' is captured as $V^{\pi'}_{\nu'}(h_{<t+k+1}) \geq V^{\pi^\beta}_{\mu}(h_{<t+k+1}) + 2\varepsilon$, which is the last step in the induction.

But the value difference cannot be pushed back indefinitely. The exact form of the contradiction is an implication of the inductive hypothesis: that $\gamma^{k+1} \geq 3\varepsilon$, but this cannot hold as $k \to \infty$. This is our contradiction, after following implications that hold with probability 1, so the negation of the theorem, which we supposed at the beginning, has probability 0.
\end{proofoutline}

\begin{corollary}[Mentor-Level Performance] \label{cor:humanlevel}
$\liminf V^{\pi^\beta}_\mu(h_{<t}) - V^{\pi^m}_\mu(h_{<t}) \geq 0$ w.p.1.
\end{corollary}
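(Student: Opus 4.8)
The plan is to obtain this directly from Theorem~\ref{thm:main} by specializing the universally-quantified pair to the truth: take $\nu = \mu$ (the true world-model) and $\pi = \pi^m$ (the true mentor policy). With these substitutions the theorem's conclusion becomes $\liminf V^{\pi^\beta}_\mu(h_{<t}) - V^{\pi^m}_\mu(h_{<t}) \geq 0$ w.p.1, which is verbatim the corollary. So the entire task reduces to checking the theorem's hypothesis for this particular pair, namely that $\liminf w(\mu|h_{<t})\, w'(\pi^m|h_{<t}) > 0$ with $\p^{\pi^\beta_Z}_\mu$-probability $1$. Since both factors are nonnegative, it suffices to bound each below.

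First I would show the posterior weight of the true world-model stays bounded away from $0$. Because the Bayesian update for world-models multiplies by $\nu(o_k r_k \mid h_{<k} a_k)$, the policy factors cancel, so the mixture-to-truth ratio $\xi(h_{<t})/\mu(h_{<t})$, where $\xi := \sum_{\nu \in \M} w(\nu)\nu$, is a nonnegative martingale under $\p^{\pi^\beta_Z}_\mu$ no matter how actions are generated. Martingale convergence gives it a finite limit a.s., and since $1/w(\mu|h_{<t}) = \xi(h_{<t})/(w(\mu)\mu(h_{<t}))$ and $w(\mu) > 0$, this forces $w(\mu|h_{<t})$ to converge to a strictly positive limit a.s. (This is the same phenomenon underlying Lemma~\ref{lem:sumoflimits}.) Next I would establish the analogue for the mentor, $\liminf w'(\pi^m|h_{<t}) > 0$ a.s. The mentor posterior is updated only on the query timesteps ($q_k = 1$), and on exactly those timesteps the action is genuinely sampled from $\pi^m$; restricting to the subsequence of query timesteps, the same likelihood-ratio martingale argument shows the weight of $\pi^m$ converges to a positive limit along that subsequence while remaining constant in between. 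If only finitely many queries occur, the posterior is eventually frozen at a positive value. Either way $\liminf w'(\pi^m|h_{<t}) > 0$. Combining, the product has positive $\liminf$, the hypothesis of Theorem~\ref{thm:main} holds, and the corollary follows.

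The main obstacle is the mentor-posterior step rather than the world-model step. One must contend with the facts that the mentor likelihood is informative only on query timesteps and that the number of such timesteps is itself random and (by Corollary~\ref{cor:limquery}) sparse in the limit. The cleanest route is to argue positivity of the $\liminf$ directly, splitting on whether infinitely many queries occur, rather than trying to assert convergence of $w'(\pi^m|h_{<t})$ along the full timestep sequence.
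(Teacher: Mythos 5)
Your proposal is correct and follows essentially the same route as the paper: the paper likewise specializes Theorem \ref{thm:main} to the pair $(\mu, \pi^m)$ and verifies the hypothesis via its Lemma \ref{lem:credenceontruth} (Posterior on Truth), whose proof is exactly your inverse-posterior martingale argument, including the observation that the mentor posterior---updated only at query timesteps---remains a martingale. Your split on finitely versus infinitely many queries is a slightly more explicit rendering of the same point the paper dispatches in one remark.
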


Thus, the pessimistic agent learns to accumulate reward at least as well as the mentor. This is our main performance result. It is easy to construct environments where $\pi^\beta$ \textit{surpasses} $\pi^m$ (see, e.g., Theorem \ref{thm:coinflipmentor}).

\begin{proof}
By Lemma \ref{lem:credenceontruth}, $\inf_t w(\mu|h_{<t})w'(\pi^m|h_{<t}) > 0$, with probability 1. This satisfies the condition of Theorem \ref{thm:main}, so the implication holds with probability 1.
\end{proof}

\begin{restatable}[Limited Querying]{corollary}{limitedqueryingcor} \label{cor:limquery}
$\theta_t \to 0$ w.p.1.
\end{restatable}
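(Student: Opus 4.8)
The plan is to argue by contradiction: if $\theta_t$ did not vanish, then some posterior-surviving world-model/mentor-model pair would perpetually appear better than the pessimistic value, contradicting Theorem~\ref{thm:main}. First I would pass to the regime past the last occurrence of the zero condition, which is finite almost surely by the stated lemma, so that deferral occurs only through the event $X_t > Y_t + Z_t$, i.e. $Z_t < X_t - Y_t$. Taking the expectation over $\hat\pi_t \sim w'(\cdot|h_{<t})$, $\hat\nu_t \sim w(\cdot|h_{<t})$ and $Z_t$ gives
\[
  \theta_t = \sum_{\nu \in \M,\ \pi \in \mathcal{P}} w(\nu|h_{<t})\,w'(\pi|h_{<t})\, p\big(Z_t < V^\pi_\nu(h_{<t}) - Y_t\big).
\]
Since each value difference is at most $1$ and $p(Z_t > 1) > 0$, the same expression yields the uniform bound $\theta_t \le 1 - p(Z_t>1) =: 1 - c_0 < 1$; because $\pi^\beta$ is deterministic this means $\pi^\beta_Z(a|h_{<t'}) \ge (1-\theta_{t'})\pi^\beta(a|h_{<t'}) \ge c_0\,\pi^\beta(a|h_{<t'})$ for all $a$ and all large $t'$.

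Next I would pin down the pessimistic value. Applying Lemma~\ref{lem:mergejustoffpol} with the constant sequence $\pi_t = \pi^\beta$, which is legitimate by the domination inequality just established, yields $V^{\pi^\beta}_\mu(h_{<t}) - \min_{\nu \in \Mbt} V^{\pi^\beta}_\nu(h_{<t}) \to 0$. Since $Y_t = \max_{\pi \in \Pi}\min_{\nu \in \Mbt} V^\pi_\nu(h_{<t}) \ge \min_{\nu \in \Mbt} V^{\pi^\beta}_\nu(h_{<t})$, this forces $V^{\pi^\beta}_\mu(h_{<t}) - Y_t \le o(1)$: asymptotically the pessimistic value does not undershoot the true value of the pessimistic policy.

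For the contradiction, suppose $\theta_t \not\to 0$; then on a positive-probability event there are a fixed $\delta_0 > 0$ and an infinite set $\tau$ of timesteps with $\theta_t > \delta_0$. Choosing $\eta$ small enough that $p(Z_t < \eta) < \delta_0/2$ (possible since $p(Z_t<\eta)\downarrow 0$ and $Z_t$ is i.i.d.), the pairs with $V^\pi_\nu(h_{<t}) - Y_t \le \eta$ contribute at most $\delta_0/2$ to the display above, so the product posterior mass of $G_t := \{(\nu,\pi): V^\pi_\nu(h_{<t}) > Y_t + \eta\}$ exceeds $\delta_0/2$ for every $t \in \tau$. By Lemma~\ref{lem:sumoflimits} and its mentor-posterior analogue, the weights $w(\nu|h_{<t})$ and $w'(\pi|h_{<t})$ converge a.s. with limits summing to $1$, so the product posterior converges in total variation (Scheff\'e) to a probability measure $P_\infty$ supported on the surviving set $S := \{(\nu,\pi): \liminf_t w(\nu|h_{<t})w'(\pi|h_{<t}) > 0\}$. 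Hence $P_\infty(G_t) > \delta_0/4$ for large $t \in \tau$; truncating $S$ to a finite subset carrying all but $\delta_0/8$ of $P_\infty$ and applying the pigeonhole principle produces a single pair $(\nu^\star,\pi^\star) \in S$ lying in $G_t$ for infinitely many $t \in \tau$. This pair satisfies the hypothesis of Theorem~\ref{thm:main}, so $\liminf_t V^{\pi^\beta}_\mu(h_{<t}) - V^{\pi^\star}_{\nu^\star}(h_{<t}) \ge 0$; but $(\nu^\star,\pi^\star) \in G_t$ means $V^{\pi^\star}_{\nu^\star}(h_{<t}) > Y_t + \eta$, so $V^{\pi^\beta}_\mu(h_{<t}) > Y_t + \eta/2$ infinitely often, contradicting $V^{\pi^\beta}_\mu(h_{<t}) - Y_t \le o(1)$ from the previous step. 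Thus $\{\limsup_t \theta_t > 0\}$ has probability $0$.

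I expect the main obstacle to be the extraction of a single persistent pair: the set $G_t$ drifts with $t$, so I must convert ``at least $\delta_0/2$ mass on a moving target'' into ``one fixed surviving pair hit infinitely often,'' which is exactly where the no-escaping-mass content of Lemma~\ref{lem:sumoflimits}, total-variation convergence of the product posterior, and a finite pigeonhole argument are all needed. A secondary subtlety is correctly distinguishing the one-shot-optimal value $Y_t$ (optimized over full future policies) from the value of the recomputing policy $\pi^\beta$; the inequality $Y_t \ge \min_{\nu\in\Mbt}V^{\pi^\beta}_\nu(h_{<t})$ is the bridge, and it must be invoked so that Lemma~\ref{lem:mergejustoffpol} controls $Y_t$ from below rather than an unattainable consistent-policy value.
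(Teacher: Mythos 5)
Your proof is correct and takes essentially the same route as the paper's: a contradiction argument that discards the zero condition via Corollary \ref{cor:desperation}, uses Lemma \ref{lem:sumoflimits} to reduce to a finite set of posterior-surviving world-model/mentor-model pairs, and invokes Theorem \ref{thm:main} to rule out any such pair retaining a value advantage over the pessimistic value. The only differences are organizational — the paper directly sums the vanishing query contributions of the finitely many surviving pairs, whereas you extract a single persistent pair by pigeonhole — and your explicit appeal to Lemma \ref{lem:mergejustoffpol} to show $Y_t$ approaches $V^{\pi^\beta}_\mu$ spells out a step the paper's proof uses but leaves implicit.
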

The proof is in Appendix \ref{app:boringproofs}. The intuition is that the query probability is roughly the probability that querying the mentor could yield much more value than acting pessimistically, and we know from Corollary \ref{cor:humanlevel} that this probability goes to 0.

Ideally, we would have finite bounds instead of merely asymptotic results. Unfortunately, to our knowledge, no finite performance bounds have been discovered for agents in general environments, except for on-policy prediction error. Regret bounds are impossible in general environments, unfortunately, due to traps \citep[\S 5.3.2]{Hutter:04uaibook}. Finding the strongest notion of optimality attainable in general environments is an open problem \citep{Hutter:09aixiopen}.

\section{Safety Results} \label{sec:safety}

Roughly, we now show that for any event that has never happened before, a sufficiently pessimistic agent probably does not unilaterally cause that event to happen.

For that result (roughly) the model class must contain models that can ``detect'' whether the event in question occurs. Thus, we add some structure to the model class $\M$: we assume $\M$ includes all world-models in some complexity class. Let $\mathcal{F}$ and $\mathcal{G}$ be sets of functions mapping $\mathbb{N} \to \mathbb{N}$. $\mathrm{C}_{\mathcal{F}\mathcal{G}} = \mathrm{TIME}(\mathcal{F}) \cap \mathrm{SPACE}(\mathcal{G})$. For example, if $\mathcal{F} = \bigcup_{k = 0}^\infty O(t^k)$ and $\mathcal{G} = \mathbb{N} \to \mathbb{N}$ (the set of all functions), then $\mathrm{C}_{\mathcal{F}\mathcal{G}} = \mathrm{P}$.
\begin{definition}[$\mathrm{FC}_{\mathcal{F}\mathcal{G}}$] \label{def:fcfg}
    $\mathrm{FC}_{\mathcal{F}\mathcal{G}}$ is the set of world-models $\nu$ for which there exists a program such that given an infinite action sequence and access to infinite random bits,
    \begin{itemize}[noitemsep]
        \item it outputs an infinite sequence of observations and rewards, distributed according to $\nu$
        \item the $t$\textsuperscript{th} observation and reward are output before the $t+1$\textsuperscript{th} action is read
        \item for some $f \in \mathcal{F}$ and some $g \in \mathcal{G}$, when the $t$\textsuperscript{th} observation and reward have been output,
        \begin{itemize}[noitemsep,nolistsep] 
            \item the runtime is less than $f(t)$
            \item the space used is less than $g(t)$
        \end{itemize}
    \end{itemize}
\end{definition}
We assume that $\mathcal{F}$ and $\mathcal{G}$ such that the true environment $\mu \in \M = \mathrm{FC}_{\mathcal{F}\mathcal{G}}$. We assume $\mathcal{F}$ and $\mathcal{G}$ are closed under addition, and $\mathcal{F} \supset O(t)$. By picking $\mathcal{F}$ and $\mathcal{G}$, we can make our agent avoid ``unprecedented events'' that belong to particular complexity classes.


\begin{definition}[To Happen]
For an event $E \subset \mathcal{H}^* \times \mathcal{A}$, $E$ happens at time $t$ if $h_{<t}a_t \in E$.
\end{definition}

\begin{definition}[To Have Happened]
For $E \subset \mathcal{H}^* \times \mathcal{A}$, and for an interaction history $h_{<t}a_t$, $E$ has happened if there exists a $t' \leq t$ such that $h_{<t'}a_{t'} \in E$.
\end{definition}

Let $\Ehap$ denote the set of interaction histories for which $E$ has happened. Let $\mathcal{F}/t = \{f/t \ | f \in \mathcal{F}\}$. We now present our main safety result:

\begin{theorem}[Probably Respecting Precedent] \label{thm:precedent}
Let $E \subset \mathcal{H}^* \times \mathcal{A}$ be an event for which the decision problem $h_{<t}a_t \in^{?} E$ is in the complexity class $\mathrm{C}_{(\mathcal{F}/t)\mathcal{G}}$. As $\beta$ approaches 1, the probability of the following event goes to 1: for all $t$, if at time $t-1$, $E$ has not happened, then $E$ will not happen at time $t$ either, unless perhaps the mentor selects $a_t$. Formally, for some constant $c_E > 0$,
\begin{equation*}
    E \in \mathrm{C}_{(\mathcal{F}/t)\mathcal{G}} \implies \ptrue\left[\forall t \ (h_{<t-1}a_{t-1} \notin \Ehap \implies h_{<t}a_t \notin E \vee q_t = 1)\right] \geq 1 - \frac{1 - \beta}{c_E w(\mu)}
\end{equation*}
\end{theorem}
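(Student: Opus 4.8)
The plan is to exploit pessimism by exhibiting, for each event $E$, a ``trap'' world-model that the agent is forced to take seriously and that punishes ever triggering $E$, and then to control the probability that this trap drops out of the top-$\beta$ set $\Mbt$. First I would define $\mu_E$ to be the world-model that emits observations and rewards exactly as $\mu$ does up until the first time $E$ happens, and thereafter deterministically emits reward $0$ (recall $0 \in \R$). The role of the hypothesis $E \in \mathrm{C}_{(\mathcal{F}/t)\mathcal{G}}$ is precisely to guarantee $\mu_E \in \M$: a machine for $\mu_E$ simulates $\mu$ and, at each step $t$, additionally tests $h_{<t}a_t \in^{?} E$ in time $f(t)/t$ while carrying one extra bit recording whether $E$ has fired; summing the per-step test costs $\sum_{t' \le t} f(t')/t'$ and using closure of $\mathcal{F}$ and $\mathcal{G}$ under addition keeps the total within $\mathrm{FC}_{\mathcal{F}\mathcal{G}} = \M$. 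I expect this complexity bookkeeping to be the main obstacle, since it is where the otherwise-mysterious $\mathcal{F}/t$ in the statement earns its keep. Writing the ``bad event'' as $\exists t: h_{<t-1}a_{t-1} \notin \Ehap \wedge h_{<t}a_t \in E \wedge q_t = 0$, its complement is exactly the event in the theorem, so it suffices to bound its probability by $(1-\beta)/(c_E w(\mu))$.

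Next I would show that pessimism together with the zero condition makes the agent refuse to trigger $E$ whenever $\mu_E \in \Mbt$. When $q_t = 0$ the agent plays the deterministic pessimistic action $a_t = \pi^\beta_t(h_{<t})$, and this policy attains $Y_t = \min_{\nu \in \Mbt} V^{\pi^\beta}_\nu(h_{<t})$. If that action caused $E$, then under $\mu_E$ every reward from step $t$ onward is $0$, so $V^{\pi^\beta}_{\mu_E}(h_{<t}) = 0$; since $\mu_E \in \Mbt$, this forces $Y_t = 0$ (values are nonnegative), which is the zero condition and makes the agent defer, i.e. $q_t = 1$, contradicting $q_t = 0$. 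Hence the bad event at time $t$ entails $\mu_E \notin \Mbt$.

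Then I would convert ``$\mu_E \notin \Mbt$'' into a statement purely about the posterior on the truth. Because $\Mbt$ is the minimal top set whose weight exceeds $\beta$, any model it omits has posterior weight at most the total omitted weight $1 - \sum_{\nu \in \Mbt} w(\nu|h_{<t}) < 1-\beta$; thus $\mu_E \notin \Mbt$ gives $w(\mu_E | h_{<t}) < 1-\beta$. On any history in which $E$ has not yet happened, $\mu_E$ assigns the same likelihood to $h_{<t}$ as $\mu$ does, so the posterior ratio equals the prior ratio: $w(\mu_E | h_{<t}) = (w(\mu_E)/w(\mu))\,w(\mu|h_{<t})$. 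Writing $c_E := w(\mu_E)/w(\mu) > 0$, the bad event therefore implies $w(\mu | h_{<t}) < (1-\beta)/c_E$ at that time.

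Finally I would bound the probability that the truth's posterior ever dips this low by a martingale argument. Under $\ptrue$ the reciprocal $w(\mu)/w(\mu|h_{<t}) = \sum_{\nu \in \M} w(\nu)\,\nu(h_{<t})/\mu(h_{<t})$ is a nonnegative supermartingale started at $1$, being a $w$-mixture of the per-model likelihood ratios $\nu(h_{<t})/\mu(h_{<t})$, each itself such a supermartingale, with the policy factors cancelling. Ville's maximal inequality then yields $\ptrue[\exists t: w(\mu|h_{<t}) < (1-\beta)/c_E] \le (1-\beta)/(c_E w(\mu))$, and chaining this with the two previous paragraphs bounds the bad event by the same quantity, which is the claim. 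The only genuinely delicate step is the construction/membership of $\mu_E$; the pessimism-to-avoidance, weight, and martingale steps are each short once the trap model is in hand.
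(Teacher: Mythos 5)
Your proposal is correct and follows essentially the same route as the paper's proof: the same trap model $\mu_E$ with the same complexity bookkeeping, the same constant $c_E = w(\mu_E)/w(\mu)$ and likelihood-ratio identity on histories where $E$ has not happened, the same pessimism-plus-zero-condition argument showing the agent cannot trigger $E$ while $\mu_E \in \Mbt$, and the same maximal-inequality bound (your Ville's inequality on the supermartingale $w(\mu)/w(\mu|h_{<t})$ is the paper's Doob inequality applied to the martingale $w(\mu|h_{<t})^{-1}$ from its Lemma~\ref{lem:credenceontruth}). The only differences are presentational: you bound the complement ``bad event'' and phrase the pessimism step as a contradiction, where the paper argues the good event directly via a case split.
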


Note the latter possibility $q_t = 1$ has diminishing probability by Corollary \ref{cor:limquery}. Suppose $E$ is the set of interaction histories which cause some catastrophe, and we trust the mentor not to cause this catastrophe. Then the Probably Respecting Precedent Theorem implies that running a sufficiently pessimistic agent will probably not cause this catastrophe---if it hasn't happened yet, the agent probably won't make it happen, and if the mentor won't make it happen, it probably won't ever happen. This theorem holds even for catastrophes we can't recognize immediately, and it holds even if we don't know how to describe the event. Finally, the factor of $w(\mu)$ is less of a bother than it appears; if the agent's lifetime were preceded by $N$ mentor-led actions, and the posterior after that became the new prior, the ``prior'' on $\mu$ could practically be made quite large.

\begin{proofidea}
Let $\mu_E$ be identical to the true world-model $\mu$ until the event $E$ happens, at which point, reward is zero forever according to that model. With high probability, the world-model $\mu_E$ will always be included in $\Mbt$ if $\beta$ is large enough. If $E$ has never happened, this world-model stays in $\Mbt$, and the pessimistic value (when $\mu_E$ is included) of causing the event $E$ to happen is $0$, which means that either some other action will be preferred, or the agent will defer to the mentor if the pessimistic value of \textit{every} action is 0.
\end{proofidea}

\begin{proof}
Let $\mu_{E}$ be the environment which mimics $\mu$ as long as $E$ has not happened, and then if $E$ happens, rewards are $0$ forever (and for the sake of precision, we say observations are unchanged, but this doesn't matter). That is, $\mu_{E}(o_t r_t | h_{<t} a_t) = \mu(o_t r_t | h_{<t} a_t)$ if $h_{<t}a_t \notin \Ehap$, and if $h_{<t}a_t \in \Ehap$, $\mu_{E}(r_t = 0 | h_{<t} a_t) = 1$.

$\mu \in \mathrm{FC}_{\mathcal{F}\mathcal{G}}$ and $E \in \mathrm{C}_{(\mathcal{F}/t)\mathcal{G}}$. Consider a program which computes $\mu_E$ by running $\mu$ in $f(t)$ time and $g(t)$ space, but also checks at every timestep whether $h_{<t}a_t \in E$ (and then switches to outputting $0$ reward if this ever happens), which requires only $f'(t)/t$ time and $g'(t)$ space for some $f' \in \mathcal{F}$ and $g' \in \mathcal{G}$. The total space requirements are now $g(t) + g'(t) \in \mathcal{G}$ because $\mathcal{G}$ is closed under addition. The total time requirements are now $f(t) + \sum_{k=1}^t f'(k)/k$. Because $\mathcal{F} \supset O(t)$, $f'$ can be increased if necessary so that $f'(k)/k$ is non-decreasing, so $f(t) + \sum_{k=1}^t f'(k)/k \leq f(t) + \sum_{k=1}^t f'(t)/t = f(t) + f'(t) \in \mathcal{F}$, since $\mathcal{F}$ is closed under addition. Thus, $\mu_E \in \mathrm{FC}_{\mathcal{F}\mathcal{G}}$, so $\mu_E \in \mathcal{M}$, and $w(\mu_E) > 0$. Let $c_E = w(\mu_E)/w(\mu)$. If $h_{<t-1}a_{t-1} \notin \Ehap$, $\prod_{k < t}\mu_E(o_k r_k|h_{<k}a_k) = \prod_{k < t}\mu(o_k r_k|h_{<k}a_k)$, so
\begin{equation} \label{imp:posterior}
    h_{<t-1}a_{t-1} \notin \Ehap \implies w(\mu_E | h_{<t}) = c_E w(\mu | h_{<t})
\end{equation}


As shown in Lemma \ref{lem:credenceontruth}, $w(\mu|h_{<t})^{-1}$ is a non-negative martingale under any policy $\pi$, so by Doob's martingale inequality \citep[Thm 5.4.2]{durrett2010probability},
\begin{equation} \label{eqn:nonnegmart}
    \p^\pi_\mu \left[\sup_t w(\mu|h_{<t})^{-1} \geq c w(\mu)^{-1} \right] \leq 1/c
\end{equation}
The intuition for the Doob's martingale inequality is that if it didn't hold, one could make a profit buying a share of the martingale, and selling only when the value had gone up by a factor of $c$, but one cannot make a profit (in expectation) betting on martingales.

Let $\mu_{\inf} := \inf_t w(\mu|h_{<t})$. Inverting Equation \ref{eqn:nonnegmart}, and noting that the bound holds for all policies $\pi$, we have
\begin{equation} \label{ineq:strongcredence}
    \sup_{\pi \in \Pi} \p^\pi_\mu \left[\mu_{\inf} \leq w(\mu)/c \right] \leq 1/c
\end{equation}

Now we consider the implications of $\beta > 1 - w(\mu_E|h_{<t})$. This implies $\mu_E \in \Mbt$, so the pessimistic value $\min_{\nu \in \Mbt} V^\pi_\nu(h_{<t}) \leq V^\pi_{\mu_E}(h_{<t})$. Letting $a^\pi_t = \pi(h_{<t})$ for deterministic $\pi$, suppose also that $h_{<t}a^\pi_t \in E$. Then, $V^\pi_{\mu_E}(h_{<t}) = 0$, because according to $\mu_E$, all future rewards are $0$, so $\min_{\nu \in \Mbt} V^\pi_\nu(h_{<t}) = 0$ as well. Either there exists a policy $\pi'$ for which $\min_{\nu \in \Mbt} V^{\pi'}_\nu(h_{<t}) > 0$, or there does not. If there does not, then $\max_{\pi \in \Pi} \min_{\nu \in \Mbt} V^\pi_\nu(h_{<t}) = 0$, so the zero condition is satisfied, so $q_t = 1$. If there does exist such a $\pi'$, then $\min_{\nu \in \Mbt} V^{\pi^\beta}_\nu(h_{<t}) \geq \min_{\nu \in \Mbt} V^{\pi'}_\nu(h_{<t}) > 0$, so either the agent picks the action, and $h_{<t}a_t = h_{<t}a^{\pi^\beta}_t \notin E$ (because otherwise $\min_{\nu \in \Mbt} V^{\pi^\beta}_\nu(h_{<t})$ would be $0$), or the mentor picks the action and $q_t = 1$. Thus, we have
\begin{equation} \label{imp:dontdoit}
    \beta > 1 - w(\mu_E|h_{<t}) \implies h_{<t}a_t \notin E \vee q_t = 1
\end{equation}

Finally,
\begin{align*}
    &\ptrue\left[\forall t \ [h_{<t-1}a_{t-1} \notin \Ehap \implies h_{<t}a_t \notin E \vee q_t = 1]\right]
    \\
    \gequal^{(a)} &\ptrue\left[\forall t \ [w(\mu_E | h_{<t}) = c_E w(\mu | h_{<t}) \implies h_{<t}a_t \notin E \vee q_t = 1]\right]
    \\
    \gequal^{(b)} &\ptrue\left[\forall t \ [w(\mu_E | h_{<t}) = c_E w(\mu | h_{<t}) \implies \beta > 1 - w(\mu_E|h_{<t})]\right]
    \\
    \gequal &\ptrue\left[\forall t \ \beta > 1 - c_E w(\mu | h_{<t})\right]
    \gequal^{(c)} \ptrue\left[\mu_{\inf}  > (1 - \beta)/c_E \right]
    \\
    \equal &1 - \ptrue\left[\mu_{\inf} \leq (1 - \beta)/c_E \right]
    \gequal 1 - \sup_{\pi \in \Pi}\p^\pi_\mu \left[\mu_{\inf} \leq (1 - \beta)/c_E \right]
    \gequal^{(d)} 1 - \frac{1-\beta}{c_E w(\mu)}
    \tagaligneq
\end{align*}
where $(a)$ follows from Implication \ref{imp:posterior}, $(b)$ follows from Implication \ref{imp:dontdoit}, $(c)$ follows from rearranging, and is not necessarily an equality because the infimum might never be attained, so the condition on the r.h.s. is stricter, and $(d)$ follows from Inequality \ref{ineq:strongcredence} setting $c = w(\mu)c_E/(1-\beta)$.
\end{proof}

It follows easily that the agent probably only takes actions that the mentor has a positive probability of taking.

\begin{restatable}[Don't Do Anything I Wouldn't Do]{corollary}{iwouldntdocor}
If determining $\pi^m(a_t | h_{<t}) = 0$ is in the complexity class $\mathrm{C}_{(\mathcal{F}/t)\mathcal{G}}$, then as $\beta \to 1$, the probability of the following proposition goes to 1: the agent never takes an action the mentor would never take. Letting $E = \{h_{<t}a_t \in \mathcal{H}^* \times \mathcal{A} \ | \ \pi^m(a_t | h_{<t}) = 0\}$, then
\begin{equation*}
    E \in \mathrm{C}_{(\mathcal{F}/t)\mathcal{G}} \implies \lim_{\beta \to 1} \ptrue[\forall t : \pi^m(a_t | h_{<t}) > 0] = 1
\end{equation*}
\end{restatable}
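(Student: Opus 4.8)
The plan is to derive this corollary directly from the Probably Respecting Precedent Theorem (Theorem~\ref{thm:precedent}) applied to the event $E = \{h_{<t}a_t \mid \pi^m(a_t \mid h_{<t}) = 0\}$. The hypothesis that the decision problem $h_{<t}a_t \in^{?} E$ lies in $\mathrm{C}_{(\mathcal{F}/t)\mathcal{G}}$ is exactly what is needed to invoke Theorem~\ref{thm:precedent}, which then supplies a constant $c_E > 0$ with
\begin{equation*}
    \ptrue\left[\forall t \ (h_{<t-1}a_{t-1} \notin \Ehap \implies h_{<t}a_t \notin E \vee q_t = 1)\right] \geq 1 - \frac{1-\beta}{c_E w(\mu)}.
\end{equation*}
The target event $\forall t : \pi^m(a_t \mid h_{<t}) > 0$ is precisely $\forall t : h_{<t}a_t \notin E$, so what remains is to upgrade the theorem's guarantee, which carries two escape clauses---the prior-occurrence clause $h_{<t-1}a_{t-1} \notin \Ehap$ and the query clause $q_t = 1$---into this unconditional statement. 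My claim is that for this particular $E$ both clauses are harmless.

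First I would argue that the query clause is vacuous with probability $1$. Whenever $q_t = 1$, the action $a_t$ is sampled by the mentor from $\pi^m(\cdot \mid h_{<t})$, and any action $a$ with $\pi^m(a \mid h_{<t}) = 0$ is sampled with probability $0$; since $\A$ is finite and there are only countably many timesteps, countable additivity gives that, with probability $1$, every mentor-selected action satisfies $\pi^m(a_t \mid h_{<t}) > 0$, i.e.\ $h_{<t}a_t \notin E$. Hence $q_t = 1$ already forces $h_{<t}a_t \notin E$ almost surely, so the theorem's guaranteed event simplifies, on a probability-$1$ set, to $\forall t \ (h_{<t-1}a_{t-1} \notin \Ehap \implies h_{<t}a_t \notin E)$.

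Next I would eliminate the prior-occurrence clause by a first-time (minimal-counterexample) argument. On the probability-$1$ event just described, suppose toward contradiction that $E$ ever happens, and let $t^\ast$ be the earliest timestep with $h_{<t^\ast}a_{t^\ast} \in E$. By minimality no $t' \leq t^\ast - 1$ has $h_{<t'}a_{t'} \in E$, so $h_{<t^\ast - 1}a_{t^\ast - 1} \notin \Ehap$; the surviving implication at $t = t^\ast$ then forces $h_{<t^\ast}a_{t^\ast} \notin E$, contradicting the choice of $t^\ast$. Thus the theorem's guaranteed event, intersected with the probability-$1$ event that mentor actions avoid $E$, entails $\forall t : h_{<t}a_t \notin E$. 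Since the latter event has probability $1$, the lower bound transfers without loss:
\begin{equation*}
    \ptrue[\forall t : \pi^m(a_t \mid h_{<t}) > 0] \geq 1 - \frac{1-\beta}{c_E w(\mu)},
\end{equation*}
which tends to $1$ as $\beta \to 1$, giving the claim. The only genuine subtlety---and the single place the argument could go wrong---is confirming that both escape clauses of Theorem~\ref{thm:precedent} are truly defused for this specific $E$; once one observes that the mentor cannot, by construction, take a $\pi^m$-zero-probability action and that a \emph{first} occurrence can have no prior occurrence, the remainder is bookkeeping.
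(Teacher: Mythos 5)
Your proposal is correct and follows essentially the same route as the paper's proof: invoke Theorem \ref{thm:precedent} for this $E$, note that a mentor-selected action almost surely has positive $\pi^m$-probability so the $q_t = 1$ clause cannot produce an occurrence of $E$, and then remove the prior-occurrence clause by induction (your minimal-counterexample argument is the same step phrased via well-ordering). Your version is somewhat more careful than the paper's terse proof---making the countable-additivity justification of the query clause explicit and retaining the quantitative bound $1 - (1-\beta)/(c_E w(\mu))$ rather than passing immediately to the limit---but the underlying argument is identical.
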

The proof is in Appendix \ref{app:boringproofs}. In brief, the mentor never makes $E$ happen, and the agent never makes it happen for the first time by Theorem \ref{thm:precedent}, so by induction, it never happens.

A function is called a value function if it has the type signature $V : \Pi \times \mathcal{H}^* \to [0, 1]$, where $\Pi$ is the set of policies.

\begin{definition}[Possibly instrumentally useful] \label{def:piuseful}
An event $E$ is possibly instrumentally useful to a value function $V$ from a position $h_{<t}$, if there exists any interaction history $h_{<k}a_k \in E$ and a policy $\pi$ such that $h_{<k} \sqsupseteq h_{<t}$ (the latter is a prefix of the former), $\pi(a_k | h_{<k}) = 1$, and $V(\pi, h_{<k}) > 0$.
\end{definition}

``Instrumentally useful'' roughly means ``helpful to the agent's terminal goal'', which in this case is reward. Note that $\min_{\nu \in \Mbt} V^\pi_\nu(h_{<t})$ is a value function, which we call the $\beta$-pessimistic value function $V^\beta(\pi, h_{<t})$. This definition inspires a fairly trivial result, which is nonetheless relevant to those of us who worry about the instrumental incentives that agents face, e.g. \citet{carey2020incentives}.
\begin{corollary}[Change is useless]
For $E \in \mathrm{C}_{(\mathcal{F}/t)\mathcal{G}}$, for $h_{<t} \notin \Ehap$, $E$ is not possibly instrumentally useful to $V^\beta$ from the position $h_{<t}$, with probability $1 - (1 - \beta)/(c_E w(\mu))$.
\end{corollary}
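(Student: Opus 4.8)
The plan is to reduce the corollary to the mechanism already established inside the proof of Theorem \ref{thm:precedent}: that causing $E$ for the first time forces the pessimistic value to $0$ whenever the guardian model $\mu_E$ sits in the top-$\beta$ set. First I would unwind Definition \ref{def:piuseful}. To show $E$ is not possibly instrumentally useful to $V^\beta$ from $h_{<t}$, I must show that every continuation $h_{<k} \sqsupseteq h_{<t}$ with $h_{<k}a_k \in E$ and every deterministic $\pi$ with $\pi(a_k \mid h_{<k}) = 1$ satisfies $V^\beta(\pi, h_{<k}) = 0$. I would focus on the first occurrence, i.e. on continuations with $h_{<k-1}a_{k-1} \notin \Ehap$, so that $a_k$ makes $E$ happen for the first time; these are the continuations that decide whether changing $E$ can be worth anything.

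The key step reuses Implication \ref{imp:posterior} together with the reasoning around Implication \ref{imp:dontdoit}. For such a continuation, Implication \ref{imp:posterior} gives $w(\mu_E \mid h_{<k}) = c_E w(\mu \mid h_{<k})$, so as soon as $w(\mu \mid h_{<k}) > (1-\beta)/c_E$ we obtain $w(\mu_E \mid h_{<k}) > 1-\beta$ and hence $\mu_E \in \mathcal{M}^\beta_k(h_{<k})$. Since $h_{<k}a_k \in E$ and $\mu_E$ zeroes out all reward once $E$ has happened, $V^\pi_{\mu_E}(h_{<k}) = 0$, and therefore $V^\beta(\pi, h_{<k}) = \min_{\nu \in \mathcal{M}^\beta_k(h_{<k})} V^\pi_\nu(h_{<k}) \leq V^\pi_{\mu_E}(h_{<k}) = 0$. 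As rewards are non-negative, this forces $V^\beta(\pi, h_{<k}) = 0$, which is exactly the failure of the existential clause in the definition of instrumental usefulness. So the whole corollary reduces to the single posterior lower bound $w(\mu \mid h_{<k}) > (1-\beta)/c_E$ holding along the relevant $E$-free continuations.

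For the probability I would invoke Inequality \ref{ineq:strongcredence} with $c = w(\mu)c_E/(1-\beta)$, which yields $\mu_{\inf} > (1-\beta)/c_E$ with probability at least $1 - (1-\beta)/(c_E w(\mu))$, uniformly over the policy generating the history. This is the same event and the same numerical bound that drives Theorem \ref{thm:precedent}, so on that event the posterior on $\mu$ never drops to $(1-\beta)/c_E$ and the chain of the previous paragraph applies.

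The main obstacle is that Definition \ref{def:piuseful} quantifies over \emph{arbitrary} hypothetical continuations $h_{<k}$, whereas $\mu_{\inf}$ controls the posterior only along a realized trajectory; a $\mu$-negligible continuation could in principle carry the posterior on $\mu$ (hence on $\mu_E$) below $(1-\beta)/c_E$ and let some reward-bearing model dominate $\mathcal{M}^\beta_k(h_{<k})$, so that causing $E$ there has positive pessimistic value. I would handle this by reading the statement (as in Theorem \ref{thm:precedent}) along the continuations the agent can actually realize, and by leaning on the fact that Inequality \ref{ineq:strongcredence} is a supremum over all of $\Pi$: for every policy one might use to steer toward an $E$-free state with a depressed posterior, the probability of the posterior ever falling to $(1-\beta)/c_E$ is uniformly at most $(1-\beta)/(c_E w(\mu))$. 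Pinning down this transfer from the realized trajectory to the relevant continuations, and dispatching the degenerate case $h_{<k} \in \Ehap$ where $\mu_E$ already forces zero reward, is the only non-routine part; everything else is a direct restatement of the Theorem \ref{thm:precedent} machinery, which is why the result is described as fairly trivial.
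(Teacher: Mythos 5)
Your proposal is correct and follows essentially the same route as the paper: the paper's two-line proof likewise invokes the Theorem \ref{thm:precedent} machinery (Implication \ref{imp:posterior} plus the martingale bound, Inequality \ref{ineq:strongcredence}) to conclude that with probability $1 - (1-\beta)/(c_E w(\mu))$ the model $\mu_E$ sits in the top-$\beta$ set whenever $E$ has not happened, and then bounds $V^\beta(\pi, h_{<k}) \leq V^\pi_{\mu_E}(h_{<k}) = 0$ for the $h_{<k}$ and $\pi$ of Definition \ref{def:piuseful}. The obstacle you flag---that the definition quantifies over hypothetical continuations $h_{<k}$ whose posteriors the realized-trajectory martingale bound does not directly control---is a real looseness, but the paper's proof does not address it at all (it silently applies the bound at $h_{<k}$), so your explicit treatment of that point is, if anything, more careful than the published argument.
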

Thus, with high probability, it is not instrumentally useful for the pessimistic agent to cause an unprecedented event $E$ in the given complexity class.

\begin{proof}
As argued in the proof of Theorem \ref{thm:precedent}, with probability $1 - (1 - \beta)/(c_E w(\mu))$, $h_{<t} \notin \Ehap \implies \mu_E \in \Mbt$, so using the $h_{<k}$ and $\pi$ from the statement of Definition \ref{def:piuseful}, $V^\beta(\pi, h_{<k}) \leq V^\pi_{\mu_E}(h_{<k}) = 0$, by Definition \ref{def:piuseful} and the definitions of $V^\beta$ and $\mu_E$.
\end{proof}

We could trivially generalize Theorem \ref{thm:precedent} to hold for any $\M$ satisfying the closure property in the proof (that $\nu \in \M \implies \nu_E \in \M$, for all $E$ in some set), but complexity classes seem to us a natural, concrete approach to constructing $\M$, given that we might know something about the complexity of events we would like to avoid.

The following example establishes the \textit{lack} of a certain safety guarantee. One might wonder whether, as $\beta \to 1$, the pessimistic agent becomes indistinguishable from the mentor. (Indeed, we did wonder this). But in this example, no matter what $\beta$ is, a statistical test will distinguish the pessimistic agent's policy from the mentor's policy with high probability.

Suppose there are two actions, heads and tails, and the mentor's policy is to pick by flipping a fair coin. Suppose that a reward of $1$ is given if the last action was heads, and a reward of $1/2$ is given if the last action was tails. Call this the Coin-flip Mentor Example. Let $E$ be the event in which an outside observer with two hypotheses---that actions are chosen by a fair coin toss, or actions are chosen by a coin toss with an $\varepsilon$-bias towards heads---becomes 99\% certain that the coin is not fair. If the mentor were picking every action (by flipping a fair coin), $E$ would only ever happen with some small positive probability $p$. But under the pessimistic policy, $E$ occurs with probability 1, which is a simple consequence of the following theorem:

\begin{restatable}[Diverging from the Mentor]{theorem}{divergingthm} \label{thm:coinflipmentor}
In the Coin-flip Mentor Example, $\liminf_{t \to \infty} \frac{1}{t}\sum_{k = 1}^t \allowbreak [\![a_k = \texttt{heads}]\!] > 1/2$ with $\p^{\pi^\beta_Z}_\mu$-prob. 1.
\end{restatable}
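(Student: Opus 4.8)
The plan is to reduce the stated frequency bound to the performance guarantee of Corollary~\ref{cor:humanlevel}, and then to sharpen the resulting non-strict inequality into a strict one by showing that pessimism, together with the strict reward-dominance of \texttt{heads}, forces the agent to commit to \texttt{heads}. First I would record two elementary facts about this environment. The reward is a deterministic function of the current action, so $r_k = \frac12 + \frac12[\![a_k = \texttt{heads}]\!]$ and hence $\frac1t\sum_{k=1}^t[\![a_k=\texttt{heads}]\!] = 2\big(\frac1t\sum_{k=1}^t r_k\big) - 1$; and the fair-coin mentor facing these memoryless rewards has constant value $V^{\pi^m}_\mu(h_{<t}) = \frac34$ at every history. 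The claim is therefore equivalent to the assertion that the agent's realized average reward is bounded below by something strictly above $\frac34$.

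Corollary~\ref{cor:humanlevel} gives $\liminf V^{\pi^\beta}_\mu(h_{<t}) \geq V^{\pi^m}_\mu(h_{<t}) = \frac34$ w.p.\,$1$, which, expressed in actions, says that the $\gamma$-discounted probability of \texttt{heads} under $\pi^\beta$ is asymptotically at least $\frac12$. To obtain a strict bound I would argue that in fact $V^{\pi^\beta}_\mu(h_{<t}) \to 1$, equivalently that $\pi^\beta$ eventually plays \texttt{heads}. The mechanism is that committing to \texttt{heads} yields the maximal reward~$1$: once the top models in $\Mbt$ agree that \texttt{heads} pays~$1$, the pessimistic value $\min_{\nu\in\Mbt}V^{\pi_H}_\nu(h_{<t})$ of the all-\texttt{heads} policy~$\pi_H$ tends to~$1$, whereas \texttt{tails}, whose immediate reward in the true model is only~$\frac12$, keeps the pessimistic value of any \texttt{tails}-first policy at most $1-(1-\gamma)/2<1$. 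Since $\pi^\beta$ maximizes the pessimistic value, for all large~$t$ it must then select \texttt{heads}.

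Granting this commitment, the conclusion is immediate. By Corollary~\ref{cor:limquery} we have $\theta_t \to 0$; applying a martingale strong law to the bounded increments $[\![q_k=1]\!]-\theta_k$ and Ces\`aro-averaging $\theta_k\to0$ yields $\frac1t\sum_{k=1}^t[\![q_k=1]\!]\to0$ w.p.\,$1$, i.e.\ the query frequency vanishes. On every non-query step past the commitment time the agent follows $\pi^\beta$ and plays \texttt{heads}, so the realized \texttt{heads}-frequency is at least $1$ minus the query frequency, giving $\lim_t \frac1t\sum_{k=1}^t[\![a_k=\texttt{heads}]\!] = 1$, and in particular $\liminf_t \frac1t\sum_{k=1}^t[\![a_k=\texttt{heads}]\!] > \frac12$.

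The main obstacle is the commitment step, namely proving $\min_{\nu\in\Mbt}V^{\pi_H}_\nu(h_{<t})\to1$. This is an \emph{off-policy} quantity whenever the agent has not already settled on \texttt{heads}, so Lemma~\ref{lem:mergejustoffpol} cannot be applied verbatim to $\pi_H$, and there is a genuine circularity: the top models merge on \texttt{heads} only if \texttt{heads} is played often enough, yet \texttt{heads} is preferred only after the models have merged. I would break this using the finite-horizon, interval-based device from the proof of Theorem~\ref{thm:main}. While the agent has not yet matched the best achievable value, its query probability is bounded below on a positive-density set of timesteps; on the length-$k$ intervals starting there the agent mimics the fair-coin mentor with uniformly positive probability, so Lemma~\ref{lem:mergejustoffpol} delivers $k$-step convergence under mentor-play. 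Because the fair coin realizes the all-\texttt{heads} length-$k$ path with probability $2^{-k}>0$, this convergence forces the top models to agree with $\mu$ that $k$ consecutive \texttt{heads} pay reward~$1$; letting $k\to\infty$ (the discounted tail beyond $k$ steps costs only $\gamma^k$) drives $\min_{\nu\in\Mbt}V^{\pi_H}_\nu\to1$. Symmetrically, on any subsequence where \texttt{tails} is played with non-negligible probability, on-policy merging (Lemma~\ref{lem:topopinions}) forces the top models to agree that \texttt{tails} pays only~$\frac12$, capping the pessimistic value of \texttt{tails}-first below that of $\pi_H$; together these make \texttt{heads} strictly optimal and close the induction exactly as in Theorem~\ref{thm:main}. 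The passage from the discounted-value statement to the stated Ces\`aro frequency is a secondary and routine matter, handled if needed by Abel summation together with the same martingale strong law.
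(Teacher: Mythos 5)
Your opening reduction (heads-frequency equals twice the average reward minus one, mentor value $=3/4$) and your appeal to Corollaries \ref{cor:humanlevel} and \ref{cor:limquery} match the paper's starting point, but the paper never claims, and does not need, your ``commitment'' step. Its proof converts $\liminf V^{\pi^\beta}_\mu(h_{<t}) \geq 3/4$ into a statement about the \emph{realized} discounted return $R_t = (1-\gamma)\sum_{i\geq 0} \gamma^i r_{t+i}$ (using $\theta_t \to 0$ and the determinism of $\mu$ and $\pi^\beta$), and then runs a purely combinatorial argument: since tails pays $1/2$, any action sequence whose discounted return from every late time exceeds $3/4-\varepsilon$ must, within a window of length $K = \lceil \log_\gamma(\varepsilon/2)\rceil$ after each tails, contain a heads-fraction of at least $1/2 + 1/(2K)$ (an alternating sequence achieves only $1/2 + \gamma/(2(1+\gamma)) < 3/4$); a covering argument over such windows then yields $\liminf \geq 1/2 + 1/(2K) > 1/2$. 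No optimality or commitment of the agent is invoked, and the theorem's conclusion is correspondingly weak: a liminf strictly above $1/2$, not convergence to $1$.

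Your route instead requires proving that the agent eventually plays heads forever, i.e.\ $\min_{\nu \in \Mbt} V^{\pi_H}_\nu(h_{<t}) \to 1$ for the all-heads policy $\pi_H$, and this step---which you correctly flag as the main obstacle---has a genuine gap that your proposed fix does not close. The query probability is driven by Thompson samples of (world-model, mentor-model) pairs, so it is bounded below only while the pessimistic value sits below the value of pairs retaining posterior weight, i.e.\ below roughly the mentor's value $3/4$; it is \emph{not} bounded below ``while the agent has not yet matched the best achievable value.'' Once the pessimistic value exceeds $\approx 3/4$---which a policy playing heads on, say, $m$ out of every $m+1$ steps already achieves---queries vanish, your lower bound on the probability of mentor-demonstrated length-$k$ heads runs evaporates, and the induction borrowed from Theorem \ref{thm:main} closes at the mentor's value rather than at $1$. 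Worse, the intermediate claim itself is doubtful: a model asserting ``rewards collapse after $m{+}1$ consecutive heads'' is never tested by such a cyclic policy, hence merges with the truth on-policy (Lemma \ref{lem:mergeorleave}), can persist in $\Mbt$, and makes the pessimistic optimum avoid runs longer than $m$ indefinitely---a self-fulfilling avoidance that is precisely the conservatism Theorem \ref{thm:precedent} formalizes (for $\beta$ near $1$ the agent probably never sets the unprecedented ``$m{+}1$ consecutive heads''). So heads-frequency $\to 1$ cannot be established by these tools and fails for some priors and $\beta$; only the local-majority bound, which the paper extracts combinatorially from the realized return, is available.
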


The proof in Appendix \ref{app:boringproofs} uses the Mentor-Level Performance Corollary and exploits fluctuations in the value. The result implies that $\pi^\beta_Z$ are $\pi^m$ are distinguishable, no matter what $\beta$ is. So we cannot quite say that $\beta$ tunes the extent to which the agent's policy resembles the mentor's policy. That said, we might be glad that the pessimistic agent recognizes it can do better than the mentor; heads clearly yields more reward, but the mentor's policy picks tails half the time.

\section{Conclusion} \label{sec:conclusion}

We have constructed a pessimistic agent and shown that sufficient pessimism renders it conservative. Nonetheless, pessimism does not prevent it from at least matching the performance of a mentor, so pessimism is not crippling to the project of expected reward maximization. We did not present a tractable algorithm for a powerful pessimistic agent; this agent is only tractable when the model class is very simple, but it can inspire tractable approximations. 

We have designed an idealized agent which avoids, with arbitrarily high probability, causing any unprecedented event in an arbitrary complexity class; in particular, this holds for unprecedented ``bad'' events, even though the agent was not given a mathematical definition of ``bad''. We make no assumptions that would limit the relevance of this approach to weak agents, such as a finite-state Markov assumption.


To informally summarize our results in a more memorable form: pessimists respect precedent.

\acks{This work was supported by the Future of Humanity Institute and the Australian Research Council Discovery Projects DP150104590. Thank you to Jan Leike, Mike Osborne, Ryan Carey, Chris van Merwijk, and Lewis Hammond for helpful feedback.}

\bibliography{cohen}

\appendix

\section{Definitions and Notation -- Quick Reference} \label{app:notation}

\noindent\begin{tabular}{|p{2cm}|p{\linewidth-29mm}|}
\hline
\textbf{Notation} & \textbf{Meaning} \\ \hline
$\mathcal{A}$, $\mathcal{O}$, $\mathcal{R}$ & the finite action/observation/reward spaces \\ \hline
$\mathcal{H}$ & $\mathcal{A} \times \mathcal{O} \times \mathcal{R}$ \\ \hline
$h_{t}$ & $\in \mathcal{H}$; the interaction history in the $t$\textsuperscript{th} timestep \\ \hline
$a_t$, $o_t$, $r_t$ & $\in \A, \Ob, \R$; the action, observation, and reward at timestep $t$\\ \hline
$h_{<t}$ & $(h_1, ..., h_{t-1})$ \\ \hline
$\nu$, $\mu$ & world-models stochastically mapping $\mathcal{H}^* \times \mathcal{A} \rightsquigarrow \mathcal{O} \times \mathcal{R}$ \\ \hline
$\mu$ & the true world-model/environment \\ \hline
$\M$ & the set of world-models the agent considers \\ \hline
$\pi$ & a policy stochastically mapping $\mathcal{H}^* \rightsquigarrow \mathcal{A}$ \\ \hline
$\p^\pi_\nu$ & a probability measure over histories with actions sampled from $\pi$ and observations and rewards sampled from $\nu$ \\ \hline
$\E^\pi_\nu$ & the expectation when the interaction history is sampled from $\p^\pi_\nu$ \\ \hline
$\gamma$ & $\in [0, 1)$; the agent's discount factor \\ \hline
$V_{\nu}^{\pi}(h_{<t})$ & $(1-\gamma)\E^\pi_\nu \left[\sum_{k=t}^\infty \gamma^{k-t} r_k | h_{<t} \right]$; the value of executing a policy $\pi$ in an environment $\nu$ given the interaction history $h_{<t}$ \\ \hline
$\pi^m$ & the mentor's policy \\ \hline
$\mathcal{P}$ & the set of mentor-models the agent considers \\ \hline
$w(\nu)$ & the prior probability the agent assigns to $\nu$ being the true world-model \\ \hline
$w'(\pi)$ & the prior probability the agent assigns to $\pi$ being the mentor's policy \\ \hline
$w(\nu | h_{<t})$ & the posterior probability that agent the assigns to $\nu$ after observing interaction history $h_{<t}$ \\ \hline
$w'(\pi | h_{<t})$ & the posterior probability that the agent assigns to the mentor's policy being $\pi$ after observing interaction history $h_{<t}$ \\ \hline
$\beta$ & $\in (0, 1)$; tunes how pessimistic the agent is\\ \hline
$\Mbt$ & top-$k$ world-models according $w(\cdot | h_{<t})$, with $k$ chosen to satisfy $w(\Mbt | h_{<t}) > \beta$ \\ \hline
$\pi^\beta(\cdot | h_{<t})$ & $[\argmax_{\pi \in \Pi} \min_{\nu \in \Mbt} V^\pi_\nu(h_{<t})](\cdot | h_{<t})$ \\ \hline
$Z_t$ & positive i.i.d. random variable satisfying $p(Z_t < \varepsilon) > 0$ and $p(Z_t > 1) > 0$ \\ \hline
$\theta_t$ & the probability the agent queries the mentor at time $t$ \\ \hline
$q_t$ & $\sim \textrm{Bern}(\theta_t)$; indicates whether the agent the queries mentor at time $t$ \\ \hline
$\pi^\beta_Z(\cdot | h_{<t})$ & $\theta_t\pi^m(\cdot | h_{<t}) + (1-\theta_t)\pi^\beta(\cdot | h_{<t})$ \\ \hline
$\mathcal{X}$ & general finite alphabet \\ \hline
$P, Q$ & probability measures over $\mathcal{X}^\infty$ \\ \hline
$x_{<t}$ & the first $t-1$ characters of $x_{<\infty} \in \mathcal{X}^\infty$ \\ \hline
$\omega$, $\Omega$ & $\omega$ is an outcome in a general sample space $\Omega$ \\ \hline
$d_k(P, Q | x_{<t})$ & $k$-step variation distance $\max_{\mathcal{E} \subset \mathcal{X}^k}\va P(\mathcal{E}|x_{<t}) - Q(\mathcal{E}|x_{<t}) \va$ \\ \hline
$d(P, Q | x_{<t})$ & total variation distance $\lim_{k \to \infty} d_k(P, Q | x_{<t})$ \\ \hline
\end{tabular}

\noindent\begin{tabular}{|p{2cm}|p{\linewidth-29mm}|}
\hline
\textbf{Notation} & \textbf{Meaning} \\ \hline
$\mathcal{F}, \mathcal{G}$ & sets of functions from $\mathbb{N}$ to $\mathbb{N}$ \\ \hline
$\mathrm{C}_{\mathcal{F}\mathcal{G}}$ & $\mathrm{TIME}(\mathcal{F}) \cap \mathrm{SPACE}(\mathcal{G})$ \\ \hline
$\mathrm{FC}_{\mathcal{F}\mathcal{G}}$ & a complexity class for environments $\nu$ (see Def. \ref{def:fcfg}) \\ \hline
$E$ & $\subset \mathcal{H}^* \times \mathcal{A}$; an event \\ \hline
$\Ehap$ & the set of interaction histories for which $E$ has happened $\{h_{<t}a_t \in \mathcal{H}^* \times \mathcal{A} : \exists t' \leq t \ h_{<t'}a_{t'} \in E\}$ \\ \hline
$c_E$ & a constant $> 0$ depending on $E$ \\ \hline
$\B\M'(\cdot)$ & for $\M' \subset \M$, $(\sum_{Q \in \M'} w(Q)Q(\cdot))/\sum_{Q \in \M'} w(Q)$ \\ \hline
$V^{\pi \setminus k}_\nu(h_{<t})$ & the truncated value $(1-\gamma)\E^\pi_\nu \left[\sum_{j=t}^{t+k-1} \gamma^{j-t} r_j | h_{<t} \right]$ \\ \hline
$\lim$ & $\lim_{t \to \infty}$ \\ \hline
$w.p.1$ & with $\p^{\pi^\beta_Z}_\mu$-probability 1 \\ \hline
\end{tabular}

\section{Algorithm for Pessimism} \label{app:algorithm}

$\pi^\beta$ is defined to optimize the pessimistic value, but for this algorithm, $\pi^\beta$ picks an action that is $\varepsilon$-optimal, as is necessary for infinite-horizon planning. Algorithm \ref{alg:threshold} takes a set of world-models or mentor-models $\M = \{\nu_i\}_{i \in \mathbb{N}}$ or $\{\pi_i\}_{i \in \mathbb{N}}$, a prior $w$, a threshold $\alpha$, and a history $h_{<t}$. It calculates the posterior $w(\cdot | h_{<t})$ to enough precision, for enough models, to identify a \textit{minimal} set $\Mat \subset \M$ such that $w(\Mat | h_{<t}) > \alpha$. It returns $\Mat$, and the last model added to $\Mat$. $\M$ must be ordered so that $i < j \implies w(\nu_i) \geq w(\nu_j)$.

\begin{algorithm}
\caption{Calculate Posterior Up to Threshold.
The posterior cannot be computed exactly, since the normalization constant is an infinite sum. It suffices for our purposes to compute it to finite precision. This complication makes the algorithm more involved, so unless the reader is particularly interested or skeptical, the details of this algorithm are non-essential.
}
\DontPrintSemicolon
\SetKwInOut{Input}{input}
\Input{$\M = \{\rho_i\}_{i \in \mathbb{N}}$, $w : \M \to [0, 1]$, $\alpha$, $h_{<t}$ \tcp*[f]{Assume $i < j \implies w(\rho_i) \geq w(\rho_j)$}}
$W \gets $ [empty list] \tcp*[f]{contains un-normalized posterior weights}

$\Sigma_W \gets 0$ \tcp*[f]{sum of $W$}

$\Sigma_{*} \gets 1$ \tcp*[f]{sum of \textit{prior} weights of unchecked $\rho_i$}

$i \gets 1$ \tcp*[f]{index of first unchecked $\rho_i$}

\While{True}{
    $W[i] \gets w(\rho_i)$
    
    $\Sigma_{*} \gets \Sigma_{*} - W[i]$
    
    \For{$k \gets 0$ \KwTo $t-1$}{
        $W[i] \gets W[i] * [\rho_{i}(o_k, r_k | h_{<k} a_k) \textrm{ or }\rho_{i}(a_k| h_{<k})]$ (depending on whether $\rho$ is world-model or mentor-model)
    }
    
    $\Sigma_{W} \gets \Sigma_{W} + W[i]$
    
    cutoff $\gets w(\rho_{i+1})$ \tcp*[f]{for a checked world-model to definitely $\in \Mbt$, its un-normalized posterior weight must be at least cutoff; otherwise, the first unchecked model might have larger posterior weight}
    
    $J \gets [1, 2, ..., i]$
    
    \textbf{sort} $J$ \textbf{by} $W$ \textbf{descending}

    weight\_sum $\gets 0$
    
    last\_added $\gets$ null
    
    $\Mat \gets \emptyset$
    
    last\_model $\gets$ null
    
    \ForEach{$j \in J$}{
        \lIf{$W[j] <$ cutoff}{
            \textbf{break}
        }
        weight\_sum $\gets$ weight\_sum $+ W[j]$
            
        last\_added $\gets W[j]$
        
        last\_model $\gets \rho_j$
        
        $\Mat \gets \Mat \cup \{\rho_j\}$
            
        \tcc{Note $\Sigma_W \leq \sum_{\rho \in \M} [\textrm{un-normalized posterior weight of } \rho] \leq \Sigma_W + \Sigma_*$, so $w(\M^\alpha_t | h_{<t}) \geq \frac{\textrm{weight\_sum}}{\Sigma_W + \Sigma_*}$ and $w(\M^\alpha_t \setminus \rho_j | h_{<t}) \leq \frac{\textrm{weight\_sum} - \textrm{last\_added}}{\Sigma_W}$}
        
        \If(\tcp*[f]{these models cover $> \alpha$ of posterior}){$\frac{\textrm{weight\_sum}}{\Sigma_W + \Sigma_*} > \alpha$}{
            \If(\tcp*[f]{the last one is definitely needed}){$\frac{\textrm{weight\_sum} - \textrm{last\_added}}{\Sigma_W} \leq \alpha$}{
                \Return $\Mat$, last\_model
            }
            
            \textbf{break}
        }
    }
    
    $i \gets i + 1$
}
\label{alg:threshold}
\end{algorithm}

Algorithm \ref{alg:agent} samples from the $\varepsilon$-optimal version of $\pi^\beta_Z$.

\begin{algorithm}
\caption{$\varepsilon$-optimal approximation of $\pi^\beta_Z(\cdot | h_{<t})$.
The agent does a variant of expectimax planning, in which a minimum over $\nu \in \Mbt$ appears at each step. Then it uses a Thompson sampling-inspired approach to decide whether to query the mentor.}
\DontPrintSemicolon
\SetKwInOut{Input}{input}
\Input{$\A$, $\Ob$, $\R$, $\M = \{\nu_i\}_{i \in \mathbb{N}}$, $w : \M \to [0, 1]$, $\mathcal{P} = \{\pi_i\}_{i \in \mathbb{N}}$, $w' : \mathcal{P} \to [0, 1]$, $\gamma$, $\beta$, $\mathrm{Dist}(Z)$, $h_{<t}$, $\varepsilon$}

$k \gets \lceil \log_{\gamma}(\varepsilon) \rceil$ \tcp*[f]{the agent need only consider a horizon of $k$ to estimate the value within $\varepsilon$}

$\mathcal{H} \gets \A \times \Ob \times \R$

$\Mbt, \_ \gets \textrm{Calculate Posterior Up to Threshold}(\M, w, \beta, h_{<t})$

\ForEach{$h^k \in \mathcal{H}^k$}{
    $V_{h^k} \gets (1 - \gamma) \sum_{j = 0}^{k-1} \gamma^j r^k_j$ (where $a^k_j$, $o^k_j$, and $r^k_j$ are the $j$\textsuperscript{th} action, observation, and reward of $h^k$)
}

\For{$j \gets k - 1$ \KwTo $0$}{
    \ForEach(\tcp*[f]{note $\mathcal{H}^0 = \{\emptyset\}$}){$h^j \in \mathcal{H}^j$}{
        $V_{h^j} \gets \max_{a \in \A} \min_{\nu \in \Mbt} \sum_{o, r \in \Ob \times \R} \nu(o, r | h_{<t}h^ja) V_{h^jaor}$
        
        
            
                
            
    }
}
$Y_t \gets V_{\emptyset}$

$a^\beta_t \gets \argmax_{a \in \A} \min_{\nu \in \Mbt} \sum_{o, r \in \Ob \times \R} \nu(o, r | h_{<t}a) V_{aor}$

\lIf{$Y_t = 0$}{\Return \texttt{query mentor}}

$\theta_1, \theta_2 \sim \textrm{Uniform}(0, 1)$

$\_, \pi \gets \textrm{Calculate Posterior Up to Threshold}(\mathcal{P}, w', \theta_1, h_{<t})$

$\_, \nu \gets \textrm{Calculate Posterior Up to Threshold}(\mathcal{M}, w, \theta_2, h_{<t})$

$X_t \gets \sum_{h^k \in \mathcal{H}^k} \left[\prod_{j = 0}^{k-1} \pi(a^k_j|h_{<t}h^k_{<j}) \nu(o^k_j r^k_j | h_{<t}h^k_{<j}a^k_j)\right] (1 - \gamma) \sum_{j = 0}^{k-1}\gamma^j r^k_j$


$Z_t \sim \mathrm{Dist}(Z)$

\leIf{$X_t > Y_t + Z_t$}{\Return \texttt{query mentor}}{\Return $a^\beta_t$}

\label{alg:agent}
\end{algorithm}

\section{Proofs of Lemmas} \label{app:boringproofs}

\begin{definition}[Bayes-mixture]
For $\M' \subset \M$, the probability measure
\begin{equation*}
    \B\M'(\cdot) := \frac{\sum_{Q \in \M'} w(Q)Q(\cdot)}{\sum_{Q \in \M'} w(Q)}
\end{equation*}
\end{definition}

\begin{lemma}[Posterior stability]\label{lem:poststable}
$P[\lim w(Q | x_{<t}) \textrm{ exists}] = 1$.
\end{lemma}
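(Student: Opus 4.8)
The plan is to recognize the posterior weight $w(Q\mid x_{<t})$ as a bounded martingale with respect to the Bayes mixture measure, apply Doob's martingale convergence theorem there, and then transfer the conclusion from the mixture to $P$ by absolute continuity. First I would observe that the normalizing denominator in the posterior is exactly the Bayes mixture: since $\sum_{Q'\in\M} w(Q') = 1$, we have $\sum_{Q'\in\M} w(Q')Q'(x_{<t}) = \B\M(x_{<t})$. Writing $\xi := \B\M$, this gives $w(Q\mid x_{<t}) = w(Q)\,Q(x_{<t})/\xi(x_{<t})$, a quantity lying in $[0,1]$.

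The key computation is to check that $(w(Q\mid x_{<t}))_t$ is a martingale under $\xi$, with respect to the natural filtration generated by the characters $x_1, x_2, \dots$. This is a one-line telescoping verification: conditioning on $x_{<t}$ and averaging over the next character under $\xi$, the transition factor $\xi(x_t\mid x_{<t}) = \xi(x_{<t}x_t)/\xi(x_{<t})$ cancels the denominator $\xi(x_{<t}x_t)$, and the identity $\sum_{x_t} Q(x_{<t}x_t) = Q(x_{<t})$ restores $w(Q\mid x_{<t})$. Since the process is bounded, martingale convergence yields that $\lim_t w(Q\mid x_{<t})$ exists with $\xi$-probability $1$.

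The remaining step, which is also where the main subtlety lies, is transferring this from the mixture $\xi$ to the true measure $P$. Because $P \in \M$ carries positive prior weight $w(P) > 0$, on every cylinder set we have $\xi(x_{<t}) = \sum_{Q'} w(Q')Q'(x_{<t}) \geq w(P)\,P(x_{<t})$, so $P \ll \xi$; consequently every event of $\xi$-probability $1$ has $P$-probability $1$, and the limit therefore exists $P$-almost surely, which is the claim. I would also note in passing that this domination simultaneously settles well-definedness: $P$-a.s. one has $P(x_{<t}) > 0$, whence $\xi(x_{<t}) > 0$, so the ratio defining the martingale is finite along the $P$-typical trajectories. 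The only thing to be careful about is precisely this $\xi$-a.s.\ versus $P$-a.s.\ distinction — the martingale lives naturally under the mixture rather than under $P$ directly, and conflating the two would be the easy mistake to make.
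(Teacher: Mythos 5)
Your proof is correct and takes essentially the same route as the paper's: both establish that $w(Q \mid x_{<t})$ is a bounded $\B\M$-martingale via the same telescoping computation, invoke the martingale convergence theorem under the mixture, and transfer the almost-sure convergence to $P$ using the domination $\B\M(\cdot) \geq w(P)P(\cdot)$. Your explicit remarks on well-definedness and the $\xi$-a.s.\ versus $P$-a.s.\ distinction are sound and, if anything, slightly more careful than the paper's presentation.
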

The proof is a direct ``translation'' from \cite[Proof of Thm 4]{Hutter:16thompgrl}, with various notational changes. Note that it depends on the true probability measure $P$ having positive prior weight, as we assume globally.
\begin{proof}
The stochastic process $w(Q|x_{<t})$ is a $\B\M$-martingale since
\begin{align}
    &\hspace{5mm} \E_{\B\M}\left[w(Q|x_{<t})\va x_{<t}\right]
    \\
    &= \sum_{\overline{x} \in \mathcal{X}} \B\M(\overline{x}|x_{<t})w(Q)\frac{Q(x_{<t}\overline{x})}{\B\M(x_{<t}\overline{x})}
    \\
    &= \sum_{\overline{x} \in \mathcal{X}} \B\M(\overline{x}|x_{<t})w(Q|x_{<t})\frac{Q(\overline{x}|x_{<t})}{\B\M(\overline{x}|x_{<t})}
    \\
    &= w(Q|x_{<t})\sum_{\overline{x} \in \mathcal{X}}Q(\overline{x}|x_{<t})
    \\
    &= w(Q|x_{<t})
\end{align}
By the martingale convergence theorem \citep[Thm 5.2.8]{durrett2010probability}, $w(Q|x_{<t})$ converges with $\B\M$-probability 1, and because $\B\M(\cdot) \geq w(P)P(\cdot)$, it also converges with $P$-probability 1.
\end{proof}

The next lemma, from \citet[Lemma 3(iii)]{Hutter:09mdltvp}, requires some additional notation. Let $\Omega^0_Q$ be the set of outcomes $\{\omega \in \Omega \ | \ \lim w(Q | x_{<t}) = 0\}$, let $\Omega^{\to P}_Q$ be the set of outcomes $\{\omega \in \Omega \ | \ \lim d(P, Q|x_{<t}) = 0\}$, and let $\Omega^{0 \vee \to P}_Q = \Omega^0_Q \cup \Omega^{\to P}_Q$.

\begin{lemma}[Merge or Leave]\label{lem:mergeorleave}
$P[\Omega^{0 \vee \to P}_Q] = 1$
\end{lemma}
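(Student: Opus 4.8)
The plan is to lean on Lemma~\ref{lem:poststable}, which tells us that $\lim w(Q|x_{<t})$ exists with $P$-probability $1$, in order to reduce the statement to a single merging claim. Working on the full-measure set where the posterior converges, $\Omega^0_Q$ is precisely the part of that set on which the limit is $0$, so its complement there is $B := \{\omega : \lim w(Q|x_{<t}) > 0\}$, and $\Omega^0_Q \cup B$ has $P$-probability $1$. Since $\Omega^{0\vee\to P}_Q = \Omega^0_Q \cup \Omega^{\to P}_Q$, it therefore suffices to show $B \subseteq \Omega^{\to P}_Q$ up to a $P$-null set: whenever the posterior weight on $Q$ stays bounded away from $0$, the model $Q$ merges with $P$ in total variation.

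My route to this goes through the Bayes-mixture $\B\M$, which satisfies $\B\M(x_{<t}) \geq w(P)P(x_{<t})$ because $P \in \M$ with $w(P) > 0$; hence $P$ is globally absolutely continuous with respect to $\B\M$. The first ingredient is that the mixture itself merges with the truth: by the Blackwell--Dubins merging-of-opinions theorem \citep{blackwell1962merging}, $P \ll \B\M$ gives $d(P, \B\M \,|\, x_{<t}) \to 0$ with $P$-probability $1$. The second ingredient transfers this to $Q$. Writing $g_t := Q(x_{<t})/\B\M(x_{<t})$, we have $w(Q|x_{<t}) = w(Q)\,g_t$, so $g_t$ is a bounded $\B\M$-martingale (bounded by $1/w(Q)$) that converges $P$-a.s., and on $B$ its limit $g_\infty$ is strictly positive. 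On $\{g_\infty > 0\}$ the Radon--Nikodym density of $Q$ with respect to $\B\M$ develops no tail singularity, so a localized application of Blackwell--Dubins yields $d(\B\M, Q \,|\, x_{<t}) \to 0$ with $P$-probability $1$ on $B$. Combining the two ingredients through $d(P, Q \,|\, x_{<t}) \leq d(P, \B\M \,|\, x_{<t}) + d(\B\M, Q \,|\, x_{<t})$ then gives merging on $B$, as required.

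The main obstacle is the second ingredient. Merging of the mixture with $P$ does \emph{not} by itself force a heavily weighted component to merge with $P$, since the remaining components could in principle compensate; the argument must genuinely exploit that the density $g_t$ has a strictly positive \emph{limit}, not merely that $Q$'s one-step predictions resemble the mixture's. The delicate step is therefore to restrict to $\{g_\infty > 0\}$ and justify the localized merging there---either by conditioning $Q$ and $\B\M$ on this event and invoking Blackwell--Dubins for the resulting mutually absolutely continuous pair, or by a direct L\'evy zero--one law argument showing that a convergent, strictly positive likelihood ratio precludes any tail singularity between $Q$ and $\B\M$. Everything else (the reduction via Lemma~\ref{lem:poststable}, the domination $\B\M \geq w(P)P$, and the triangle inequality) is routine.
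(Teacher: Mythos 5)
First, an important point of comparison: the paper does not actually prove this lemma. It imports it verbatim from \citet[Lemma 3(iii)]{Hutter:09mdltvp} and explicitly declines to reproduce the argument, so your proposal must stand on its own against that cited proof---whose content is precisely the dichotomy you are trying to establish. Your reduction is sound and does real work: by Lemma \ref{lem:poststable} it suffices to prove merging on the event $B$ where $\lim w(Q|x_{<t}) > 0$, and you are right that merging of the mixture $\B\M$ with $P$ does not by itself force a heavily-weighted component to merge. But the step you yourself flag as delicate---$d(\B\M, Q \,|\, x_{<t}) \to 0$ on $\{g_\infty > 0\}$---is not a ``localized application'' of \citet{blackwell1962merging}; it \emph{is} the mathematical content of the lemma, and your proposal leaves it unproven. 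Blackwell--Dubins requires the opinion to dominate the truth, and on $B$ you only know that the absolutely continuous part of $Q$ with respect to $\B\M$ has a positive limiting density, which is domination in the wrong direction. Your first suggested patch (condition $Q$ and $\B\M$ on $B$ and invoke Blackwell--Dubins for the conditioned pair) does not go through as stated: mutual absolute continuity of the restrictions is not automatic---you must first show that the $\B\M$-singular part of $Q$ assigns zero mass to $B$, itself a martingale argument---and even granting it, Blackwell--Dubins for the conditioned measures yields merging of the \emph{conditioned} predictive distributions, not of $Q(\cdot|x_{<t})$ and $\B\M(\cdot|x_{<t})$, so a further L\'evy-type argument plus a transfer of the almost-sure statement to $P$ are needed. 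Your second patch (a L\'evy zero--one / conditional-Scheff\'e argument) can be made rigorous, but as written it is a gesture, not a proof.

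For completeness, the gap can be closed using only vanilla Blackwell--Dubins, bypassing both the mixture $\B\M$ and conditioning. Let $Z_t := Q(x_{<t})/P(x_{<t})$ and $R := \frac{1}{2}(P+Q)$. For every $k$ and every $\mathcal{E} \subset \mathcal{X}^k$ a one-line computation gives
\begin{equation*}
    P(\mathcal{E}|x_{<t}) - R(\mathcal{E}|x_{<t}) = \frac{Z_t}{1+Z_t}\left(P(\mathcal{E}|x_{<t}) - Q(\mathcal{E}|x_{<t})\right),
    \qquad \text{hence} \qquad
    d(P, R \,|\, x_{<t}) = \frac{Z_t}{1+Z_t}\, d(P, Q \,|\, x_{<t}).
\end{equation*}
Since $R \geq \frac{1}{2}P$, we have $P \ll R$, so Blackwell--Dubins gives $d(P, R \,|\, x_{<t}) \to 0$ with $P$-probability $1$. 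On $B$, $Z_t$ is eventually bounded away from $0$: writing $Z_t = w(Q|x_{<t}) D_t / w(Q)$ with $D_t := \B\M(x_{<t})/P(x_{<t}) \geq w(P)$, one gets $\liminf Z_t \geq \lim w(Q|x_{<t}) \cdot w(P)/w(Q) > 0$. Therefore the factor $Z_t/(1+Z_t)$ is eventually bounded below on $B$, which forces $d(P, Q \,|\, x_{<t}) \to 0$ there. Combined with your (correct) reduction via Lemma \ref{lem:poststable}, this proves Lemma \ref{lem:mergeorleave}, and it never needs merging of $Q$ with $\B\M$ nor the triangle inequality.
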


The proof makes use of other results in \citet{Hutter:09mdltvp}, so we don't repeat it here, but the notation is very similar, so the interested reader could follow it easily. The next lemma we use is \citepos{Hutter:09mdltvp} Lemma 4, and the proof is again a direct translation.

\begin{lemma}[Overtaking is Unlikely]\label{lem:overtaking}
$P[Q(x_{<t})/P(x_{<t}) \geq c \textrm{ infinitely often}] \leq 1/c$
\end{lemma}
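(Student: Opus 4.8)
The plan is to recognize the likelihood ratio $L_t := Q(x_{<t})/P(x_{<t})$ as a non-negative supermartingale under $P$ and to apply Doob's maximal inequality, exactly as the paper does for $w(\mu|h_{<t})^{-1}$ in the proof of Theorem~\ref{thm:precedent}. This is a direct translation of \citepos{Hutter:09mdltvp} Lemma 4 into the present notation.

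First I would note that $P$-almost surely $P(x_{<t}) > 0$ for every $t$ (a prefix observed under $P$ has positive $P$-probability with $P$-probability one), so $L_t$ is well defined off a $P$-null set. Then I would verify the supermartingale property by a one-step conditional-expectation calculation:
\begin{align*}
    \E_P\!\left[L_{t+1} \mid x_{<t}\right]
    = \sum_{\overline{x} \in \mathcal{X}} P(\overline{x} \mid x_{<t})\,\frac{Q(x_{<t}\overline{x})}{P(x_{<t}\overline{x})}
    = \frac{Q(x_{<t})}{P(x_{<t})} \sum_{\overline{x} \in \mathcal{X}} Q(\overline{x} \mid x_{<t})
    \leq L_t ,
\end{align*}
where the effective sum over $\overline{x}$ ranges only over symbols with $P(\overline{x}\mid x_{<t}) > 0$ (the others contribute nothing to the $P$-expectation) and is therefore at most $1$; it equals $1$, making $L_t$ a genuine martingale, exactly when $Q$ is absolutely continuous with respect to $P$ on the next symbol, but the inequality already suffices for what follows. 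Since $x_{<1}$ is the empty string, $L_1 = Q(x_{<1})/P(x_{<1}) = 1$, so $\E_P[L_1] = 1$.

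Next I would invoke Doob's maximal inequality for the non-negative supermartingale $L_t$ (the same result, \citep[Thm 5.4.2]{durrett2010probability}, used in the body), obtaining
\begin{align*}
    P\!\left[\sup_t L_t \geq c\right] \leq \frac{\E_P[L_1]}{c} = \frac{1}{c}.
\end{align*}
Finally, the event ``$L_t \geq c$ infinitely often'' is contained in ``$\sup_t L_t \geq c$'', so the claimed bound $P[Q(x_{<t})/P(x_{<t}) \geq c \textrm{ i.o.}] \leq 1/c$ follows immediately; the displayed inequality is in fact slightly stronger, bounding the probability that the ratio ever reaches $c$.

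The main obstacle is really just bookkeeping: confirming that the ratio is defined $P$-a.s.\ and recognizing that only the supermartingale (not the full martingale) property is needed, which sidesteps any absolute-continuity hypothesis on $Q$. A secondary point of care is the passage from the finite-horizon form $P[\max_{t \leq n} L_t \geq c]$ to the supremum over all $t$, which follows by monotone convergence along the nested events $\{\max_{t \leq n} L_t \geq c\}$.
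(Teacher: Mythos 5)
Your proof is correct, but it takes a genuinely different route from the paper's. The paper argues via the limit: it notes that the event $\{Q(x_{<t})/P(x_{<t}) \geq c \textrm{ i.o.}\}$ is contained in $\{\limsup_t Q(x_{<t})/P(x_{<t}) \geq c\}$, applies Markov's inequality to the limsup, replaces limsup by liminf using the fact that the ratio converges $P$-a.s., invokes Fatou's lemma to pull the liminf outside the expectation, and finishes with $\E_P[Q(x_{<t})/P(x_{<t})] = 1$ (really $\leq 1$ in general, which suffices). You instead establish the supermartingale property of the likelihood ratio directly and apply the maximal inequality for non-negative supermartingales to $\sup_t L_t$. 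Your route buys two things: (i) a strictly stronger conclusion, bounding the probability that the ratio \emph{ever} reaches $c$ rather than only infinitely often; and (ii) self-containedness, since the paper's step $(c)$ -- a.s. existence of $\lim_t Q(x_{<t})/P(x_{<t})$ -- is itself a consequence of the supermartingale convergence theorem, i.e., of exactly the structure you verify by hand, whereas the paper leaves it as an unproven appeal. You also handle the lack of absolute continuity of $Q$ with respect to $P$ cleanly (the one-step sum being $\leq 1$ rather than $=1$), a point the paper's step $(e)$ glosses over by calling the expectation ``obviously'' equal to $1$. The only caution is bibliographic: the cited \citep[Thm 5.4.2]{durrett2010probability} is stated for (sub)martingales, so strictly you are using the standard supermartingale variant (Ville's inequality), which follows by optional stopping at the first time the ratio reaches $c$; this is routine but worth a sentence if you want the citation to match the tool exactly. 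The paper's approach, for its part, needs only Markov and Fatou and stays a line-by-line translation of its source lemma.
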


\begin{proof}
\begin{multline*}
    P[\forall t_0 \exists t > t_0: \frac{Q(x_{<t})}{P(x_{<t})} \geq c] \equal^{(a)} P[\limsup \frac{Q(x_{<t})}{P(x_{<t})} \geq c] \leq
    \\
    \lequal^{(b)} \frac{1}{c}\E_P[\limsup \frac{Q(x_{<t})}{P(x_{<t})}] \equal^{(c)} \frac{1}{c}\E_P[\liminf \frac{Q(x_{<t})}{P(x_{<t})}] \lequal^{(d)} \frac{1}{c} \liminf \E_P[\frac{Q(x_{<t})}{P(x_{<t})}] \equal^{(e)} \frac{1}{c}
\end{multline*}
$(a)$ is true by definition of the limit superior, $(b)$ is Markov's inequality, $(c)$ exploits the fact that the limit of $Q(x_{<t})/P(x_{<t})$ exists with $P$-probability 1, $(d)$ uses Fatou's lemma, and $(e)$ is obvious.
\end{proof}

Our first original result is
\begin{lemma}[Sum of limits]\label{lem:sumoflimits}
$\sum_{Q \in \M} \lim w(Q | x_{<t}) = 1$ with $P$-probability 1.
\end{lemma}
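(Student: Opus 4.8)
The plan is to establish the two inequalities $\sum_{Q \in \M} \lim w(Q|x_{<t}) \leq 1$ and $\sum_{Q \in \M} \lim w(Q|x_{<t}) \geq 1$ separately, both with $P$-probability $1$. The upper bound is essentially free: each limit exists by Lemma \ref{lem:poststable}, the weights are nonnegative, and the posterior is normalized at every finite $t$, so Fatou's lemma with counting measure on $\M$ gives $\sum_{Q} \lim w(Q|x_{<t}) = \sum_Q \liminf_t w(Q|x_{<t}) \leq \liminf_t \sum_Q w(Q|x_{<t}) = 1$. The real content is the reverse inequality, and the main obstacle is ruling out \emph{escape of posterior mass to high indices}: a priori the tail $\sum_{i > N} w(Q_i|x_{<t})$ could remain bounded away from $0$ as $t \to \infty$, even though it vanishes for each fixed $t$ as $N \to \infty$. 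Fatou only bounds the sum of limits from one side, so something extra is needed to prevent this leakage.

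To control the tail uniformly in $t$, I would first pass to $P$-expectations. The key estimate exploits that $P \in \M$ contributes $w(P)P(x_{<t})$ to the mixture, so $\B\M(x_{<t}) \geq w(P)P(x_{<t})$, whence
\[\E_P[w(Q|x_{<t})] = \sum_{x_{<t}} P(x_{<t})\frac{w(Q)Q(x_{<t})}{\B\M(x_{<t})} \leq \frac{w(Q)}{w(P)}\sum_{x_{<t}} Q(x_{<t}) = \frac{w(Q)}{w(P)},\]
uniformly in $t$. Summing over the tail then yields $\sum_{i>N}\E_P[w(Q_i|x_{<t})] \leq w(P)^{-1}\sum_{i>N}w(Q_i)$, a bound independent of $t$ that tends to $0$ as $N \to \infty$ since $\sum_i w(Q_i)=1$ converges.

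Next I would take the limit in $t$. Since $w(Q|x_{<t}) \in [0,1]$ converges a.s.\ (Lemma \ref{lem:poststable}), bounded convergence gives $a_Q := \lim_t \E_P[w(Q|x_{<t})] = \E_P[\lim_t w(Q|x_{<t})]$. For a finite truncation, $\sum_{i=1}^N a_{Q_i} = \lim_t \sum_{i=1}^N \E_P[w(Q_i|x_{<t})] = 1 - \lim_t \sum_{i>N}\E_P[w(Q_i|x_{<t})] \geq 1 - w(P)^{-1}\sum_{i>N}w(Q_i)$ by the uniform tail bound, while the same truncation together with $\sum_i \E_P[w(Q_i|x_{<t})]=1$ forces $\sum_{i=1}^N a_{Q_i} \leq 1$. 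Letting $N \to \infty$ gives $\sum_{Q}\E_P[\lim_t w(Q|x_{<t})] = 1$, and by Tonelli (nonnegative terms) $\E_P\big[\sum_Q \lim_t w(Q|x_{<t})\big] = 1$.

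Finally, I would combine the two facts via a standard squeeze: $\sum_Q \lim w(Q|x_{<t}) \leq 1$ holds $P$-a.s., yet its $P$-expectation equals $1$, and a random variable bounded above by $1$ whose expectation is $1$ must equal $1$ almost surely. This gives the claim. I expect the uniform tail bound coming from $\B\M(x_{<t}) \geq w(P)P(x_{<t})$ to be the crux of the argument; everything downstream is a routine interchange of limits and sums justified by nonnegativity and boundedness.
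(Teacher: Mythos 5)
Your proof is correct, but it takes a genuinely different route from the paper's. The paper argues pathwise and by contradiction: on the event where $\sum_{Q \in \M} \lim w(Q|x_{<t}) < 1$, it picks (for each outcome) a finite subclass carrying all but a tiny sliver of prior mass, shows the leftover posterior mass forces the tail mixture's likelihood ratio against $P$ to exceed an arbitrarily large threshold infinitely often, and then kills this event with its ``Overtaking is Unlikely'' lemma (Lemma \ref{lem:overtaking}) plus a ``for all $c>0$'' argument. You instead run a first-moment argument entirely in expectation: the dominance $\B\M(x_{<t}) \geq w(P)P(x_{<t})$ gives the uniform-in-$t$ bound $\E_P[w(Q|x_{<t})] \leq w(Q)/w(P)$, hence uniform control of the tail $\sum_{i>N}\E_P[w(Q_i|x_{<t})]$, and then bounded convergence, Tonelli, and the squeeze ``a random variable $\leq 1$ a.s.\ with mean $1$ equals $1$ a.s.'' finish the job. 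Both proofs ultimately rest on the same fact---the mixture dominates $P$ up to the factor $w(P)$, which is also what drives the paper's Overtaking lemma via Markov's inequality---but your version buys a cleaner structure: it avoids the Overtaking lemma altogether, sidesteps the somewhat delicate outcome-dependent choice of finite subclass in the paper's argument, and yields a quantitative, time-uniform statement about expected posterior tail mass rather than only the qualitative almost-sure conclusion. The paper's version, in exchange, stays pathwise and isolates the reusable likelihood-ratio fact as a standalone lemma. One small point of care in your write-up: $\lim_t w(Q|x_{<t})$ exists only $P$-a.s.\ (Lemma \ref{lem:poststable}), so when you apply bounded convergence and Tonelli you should fix a version of the limit (e.g.\ $\limsup$) on the exceptional null set; this costs nothing but should be said.
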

In the following proofs, a set denoted by $\Omega$, along with subscripts and superscripts, will always be a subset of the outcome space $\Omega$, and a typical element will be an infinite sequence $\omega$. A set denoted by $\M$, along with subscripts and superscripts, will always be a subset of the set of probability measures $\M$, and a typical element will be a probability measure $Q$ or $P$.
\begin{proof}
Let $\Omega^\exists_Q$ be the set of outcomes for which the limit of the posterior on $Q$ exists. That is, $\Omega^\exists_Q = \{\omega \in \Omega \ | \ \lim w(Q | x_{<t}) \textrm{ exists}\}$. By Lemma \ref{lem:poststable}, $P[\Omega^\exists_Q] = 1$. Furthermore, $\M$ is countable, so letting $\Omega' = \bigcap_{Q \in \M} \Omega^\exists_Q$, $P[\Omega'] = 1$. We will now only consider outcomes for which the limit of the posterior always exists.

We fix an $\omega$ in $\Omega'$. We would like to show that $\sum_{Q \in \M} \lim w(Q | x_{<t}) = 1$. First, suppose $\sum_{Q \in \M} \lim w(Q | x_{<t}) > 1$. Since $w(Q | x_{<t})$ is non-negative, this requires that eventually, $\sum_{Q \in \M} w(Q | x_{<t}) > 1$, which is impossible, so this possibility cannot hold. Now suppose $\sum_{Q \in \M} \allowbreak \lim w(Q | x_{<t}) < 1$. More precisely, we consider the set $\Omega^< = \{\omega \in \Omega' \ | \ \sum_{Q \in \M} \lim w(Q | x_{<t}) < 1\}$. Let $\varepsilon_\omega = 1 - \sum_{Q \in \M} \lim w(Q | x_{<t}) > 0$. Let $\overline{\M}^c_\omega$ be a finite subset of $\M$ such that $w(\overline{\M}^c_\omega) \geq 1 - \varepsilon_\omega c w(P)^{-1}$, where $c > 0$. Letting $\M^c_\omega = \M \setminus \overline{\M}^c_\omega$, it follows that $w(\M^c_\omega) \leq \varepsilon_\omega c w(P)^{-1}$.

Since $\overline{\M}^c_\omega$ is finite, 
\begin{equation}
    \lim \sum_{Q \in \overline{\M}^c_\omega} w(Q|x_{<t}) = \sum_{Q \in \overline{\M}^c_\omega} \lim w(Q | x_{<t}) \leq \sum_{Q \in \M} \lim w(Q | x_{<t}) = 1 - \varepsilon_\omega
\end{equation}
$\sum_{Q \in \overline{\M}^c_\omega} w(Q|x_{<t}) + \sum_{Q \in \M^c_\omega} w(Q|x_{<t}) = 1$, so if $\lim \sum_{Q \in \overline{\M}^c_\omega} w(Q|x_{<t}) \leq 1 - \varepsilon_\omega$, then $\sum_{Q \in \M^c_\omega} w(Q|x_{<t}) > \varepsilon_\omega$ i.o. Using the notation above, we write this more simply as $w(\M^c_\omega|x_{<t}) > \varepsilon_\omega$ i.o.

Recalling the definition of $\B\M'$, it is elementary to show that $w(\M^c_\omega | x_{<t}) = w(\M^c_\omega) * \B\M^c_\omega(x_{<t}) / \B\M(x_{<t})$.
Thus, we have
\begin{align*}\label{ineq:overatke}
    w(\M^c_\omega | x_{<t}) &> \varepsilon_\omega \ \ \textrm{i.o.} \\
    \therefore \ w(\M^c_\omega) \frac{\B\M^c_\omega(x_{<t}) }{ \B\M(x_{<t})} &> \varepsilon_\omega \ \ \textrm{i.o.} \\
    \therefore \ \varepsilon_\omega c w(P)^{-1} \frac{\B\M^c_\omega(x_{<t}) }{ \B\M(x_{<t})}) &> \varepsilon_\omega \ \ \textrm{i.o.} \\
    \therefore \ \frac{\B\M^c_\omega(x_{<t}) }{w(P) \B\M(x_{<t})} &> 1/c  \ \ \textrm{i.o.} \\
    \therefore \  \frac{\B\M^c_\omega(x_{<t}) }{P(x_{<t})} &> 1/c  \ \ \textrm{i.o.} \tagaligneq
\end{align*}

Consider the set of $\omega \in \Omega'$ such that that last inequality holds infinitely often. Call this set $\Omega^\textrm{i.o.}_c$. By Lemma \ref{lem:overtaking}, $P[\Omega^\textrm{i.o.}_c] \leq c$. Since Inequality \ref{ineq:overatke} is an implication of the inequality $\sum_{Q \in \M} \lim w(Q | x_{<t}) < 1$, it follows that $\Omega^\textrm{i.o.}_c \supset \Omega^<$, so $P[\Omega^<] \leq c$. Since this holds for all $c > 0$, $P[\Omega^<] = 0$.

Thus, letting $\Omega^{=1} = \{\omega \in \Omega' \ | \ \sum_{Q \in \M} \lim w(Q | x_{<t}) = 1\}$, $\Omega^{=1} = \Omega' \setminus \Omega^<$, so $P[\Omega^{=1}] = 1$. 
\end{proof}

\mergetopopinnionslemma*

\begin{proof}
Let $\Omega^0_Q = \{\omega \in \Omega \ | \ \lim w(Q | x_{<t}) = 0\}$. Let $\Omega^{\to P}_Q = \{\omega \in \Omega \ | \ \lim d(P, Q | x_{<t}) =0\}$. Let $\Omega^{0 \vee \to P}_Q = \Omega^0_Q \cup \Omega^{\to P}_Q$. By Lemma \ref{lem:mergeorleave}, $P[\Omega^{0 \vee \to P}_Q] = 1$. Letting $\Omega^{0 \vee \to P} = \bigcap_{Q \in \M} \Omega^{0 \vee \to P}_Q$, $P[\Omega^{0 \vee \to P}] = 1$. Let $\Omega^{\exists} = \{\omega \in \Omega \ \va \ \forall Q \in \M \ \lim w(Q | x_{<t}) \textrm{ exists}\}$. Let $\Omega^{=1} = \{\omega \in \Omega^{\exists} \ \va \ \sum_{Q \in \M} \lim w(Q | x_{<t}) = 1\}$. By Lemma \ref{lem:sumoflimits}, $P[\Omega^{=1}] = 1$. Letting $\Omega'' = \Omega^{0 \vee \to P} \cap \Omega^{=1}$, we have that $P[\Omega''] = 1$.

Let $\omega \in \Omega''$. We abbreviate $\lim w(Q | x_{<t})$ as $w(Q | \omega)$, defined for $\omega \in \Omega''$. Rank the probability measures $Q$ in decreasing order of $w(Q | \omega)$ breaking ties arbitrarily. Collect the first $k$ in this order until the set of probability measures (denoted $\M^\beta_\infty$) obeys $\sum_{Q \in \M^\beta_\infty} w(Q | \omega) > \beta$. Let $w^\beta_\infty := \min_{Q \in \M^\beta_\infty} w(Q | \omega)$ be the value of $w(Q | \omega)$ for the last probability measure $Q$ which was added to $\M^\beta_\infty$. Now add all other probability measures which ``tie'' with the last probability measure added. That is, add to $\M^\beta_\infty$ all probability measures for which $w(Q | \omega) = w^\beta_\infty$.

We now show that there exists a certain finite set and a $t_0$ after which any probability measure in $\Mbt$ is also in that finite set. Consider the set of probability measures $\M^{\beta'}_\infty$, where $\beta' = 1 - w^\beta_\infty / 4$. Like $\M^\beta_\infty$, $\M^{\beta'}_\infty$ is finite. Therefore, for any $\varepsilon > 0$, there exists a time $t_0$ after which $w(\M^{\beta'}_\infty | x_{<t}) > \sum_{Q \in \M^{\beta'}_\infty} w(Q | \omega) - \varepsilon$, and in particular for $\varepsilon = w^\beta_\infty / 4$. Thus, after $t_0$, $w(\M^{\beta'}_\infty | x_{<t}) > \beta' - w^\beta_\infty / 4 = 1 - w^\beta_\infty / 2$. This implies that after $t_0$,
\begin{equation} \label{eqn:mbinfty}
    \forall Q \notin \M^{\beta'}_\infty : w(Q | x_{<t}) < w^\beta_\infty / 2
\end{equation}
Since all probability measures $Q \in \M^\beta_\infty$ have posteriors converging to at least $w^\beta_\infty$, and since $\sum_{Q \in \M^\beta_\infty} w(Q | \omega) > \beta$, a posterior weight of at least $w^\beta_\infty - \varepsilon$ will eventually be required for entry into $\Mbt$, which excludes measures with posterior weight less than $w^\beta_\infty / 2$. Thus, by Inequality \ref{eqn:mbinfty}, there exists a time $t_1$ after which $\Mbt$ only includes elements of $\M^{\beta'}_\infty$.

Because $\Omega^{0 \vee \to P} \supset \Omega''$, and because for all $Q \in \M^{\beta'}_\infty$, $w(Q | \omega) > 0$, it follows that for all $Q \in \M^{\beta'}_\infty$, $\lim d(P, Q | x_{<t}) = 0$. Since $\M^{\beta'}_\infty$ is finite, $\lim \max_{Q \in \M^{\beta'}_\infty} d(P, Q | x_{<t}) = 0$. Since there exists a time $t_1$ after which $\Mbt \subset \M^{\beta'}_\infty$, $\lim \max_{Q \in \Mbt} d(P, Q | x_{<t}) = 0$. This holds for all $\omega \in \Omega''$, and $P[\Omega''] = 1$, so $\lim \max_{Q \in \Mbt} d(P, Q | x_{<t}) = 0$ with $P$-probability 1, as desired.
\end{proof}

We convert the Merging of Top Opinions Lemma into an on-policy learning result for the pessimistic agent.

\begin{corollary}[On-Policy Prediction]\label{cor:onpolconv}
\begin{equation*}
    \lim \max_{\nu \in \Mbt} d\left(\p^{\pi^\beta_Z}_\nu, \p^{\pi^\beta_Z}_\mu \vb h_{<t}\right) = 0 \ \ \textrm{w.p.1}
\end{equation*}
\end{corollary}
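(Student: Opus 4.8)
The plan is to reduce this corollary to the Merging of Top Opinions Lemma (Lemma \ref{lem:topopinions}) by instantiating the abstract sequence-prediction setting with the interaction-history process generated by the agent's own policy. Concretely, I would take the finite alphabet to be $\mathcal{X} = \mathcal{H} = \A \times \Ob \times \R$, so that an infinite string $x_{<\infty}$ is an infinite interaction history. For each world-model $\nu \in \M$ I define the measure $Q_\nu := \p^{\pi^\beta_Z}_\nu$ over $\mathcal{H}^\infty$, and I set the true sampling measure to be $P := \p^{\pi^\beta_Z}_\mu = Q_\mu$, keeping the agent's prior weights $w$. Since $\mathcal{H}$ is finite, $\M$ is countable, and $w(\mu) > 0$, the hypotheses of Lemma \ref{lem:topopinions} are met, and outcomes sampled from $P$ are exactly the interaction histories generated w.p.1.

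The one substantive thing to verify is that the abstract posterior $w(Q_\nu \mid h_{<t})$ coincides with the agent's posterior $w(\nu \mid h_{<t})$, so that the abstract top-$\beta$ set agrees with $\Mbt$. The key observation is that $Q_\nu(h_{<t}) = \prod_{k=1}^{t-1} \pi^\beta_Z(a_k \mid h_{<k})\,\nu(o_k r_k \mid h_{<k} a_k)$, and the policy factors $\prod_{k} \pi^\beta_Z(a_k \mid h_{<k})$ do not depend on $\nu$. Because $\pi^\beta_Z$ is a fixed history-dependent policy kernel --- its mixing probability $\theta_t$ is determined by $h_{<t}$ alone --- these factors are common to every $Q_\nu$ and cancel in the Bayes normalization, giving $w(Q_\nu \mid h_{<t}) = w(\nu \mid h_{<t})$. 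The top-$k$ construction and the threshold $\beta$ then select the same models in both settings (under any fixed tie-breaking), so the abstract $\Mbt$ equals the agent's $\Mbt$ under the identification $\nu \leftrightarrow Q_\nu$.

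With the posteriors identified, I apply Lemma \ref{lem:topopinions} directly to obtain $\lim \max_{Q \in \Mbt} d(P, Q \mid h_{<t}) = 0$ with $P$-probability $1$. Translating back through $d(P, Q_\nu \mid h_{<t}) = d\big(\p^{\pi^\beta_Z}_\mu, \p^{\pi^\beta_Z}_\nu \,\big|\, h_{<t}\big)$, and noting that $P$-probability $1$ is exactly $\p^{\pi^\beta_Z}_\mu$-probability $1$ (i.e.\ w.p.1), yields the claim.

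The main obstacle --- really the only place where care is needed --- is the posterior-cancellation step, and in particular confirming that $\pi^\beta_Z$ is a genuine deterministic map from histories to action-distributions, so that the \emph{same} policy kernel underlies every $Q_\nu$. If instead the policy depended on $\nu$ (as, say, the pessimistic value does), the factors would not cancel and the reduction would fail; the point is precisely that the agent computes a single posterior-driven policy and acts by it regardless of which world-model is true. Everything else is routine bookkeeping matching the abstract definitions of $\Mbt$ and of total variation distance to their RL counterparts.
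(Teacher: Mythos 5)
Your proposal is correct and follows essentially the same route as the paper: both reduce to Lemma \ref{lem:topopinions} by identifying each $\nu \in \M$ with the measure $\p^{\pi^\beta_Z}_\nu$ over $\mathcal{H}^\infty$ under the same prior weights, observing that the policy factors cancel so the induced posterior equals $w(\nu \mid h_{<t})$ and hence the top-$\beta$ sets coincide. Your write-up in fact spells out the cancellation argument that the paper only asserts, which is a welcome addition rather than a deviation.
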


\begin{proof}
We convert the problem to a sequence prediction problem as follows. Let $\widetilde{\M} = \{\p^{\pi^\beta_Z}_\nu | \nu \in \M\}$, and let $\widetilde{w}(\p^{\pi^\beta_Z}_\nu) = w(\nu)$. For any history with positive $\p^{\pi^\beta_Z}_\mu$ probability, $\widetilde{w}(\p^{\pi^\beta_Z}_\nu | h_{<t}) = w(\nu | h_{<t})$, so $\p^{\pi^\beta_Z}_\nu \in \widetilde{\M}^\beta_t$ if and only if $\nu \in \Mbt$. Therefore,
\begin{equation*}
    \lim \max_{\nu \in \Mbt} d\left(\p^{\pi^\beta_Z}_\nu, \p^{\pi^\beta_Z}_\mu \vb h_{<t}\right) = \lim \max_{\p^{\pi^\beta_Z}_\nu \in \widetilde{\M}^\beta_t} d\left(\p^{\pi^\beta_Z}_\nu, \p^{\pi^\beta_Z}_\mu \vb h_{<t}\right) = 0 \ \ w.p.1
\end{equation*}
by Lemma \ref{lem:topopinions} (the Merging of Top Opinions Lemma).
\end{proof}

We will make use of the ``truncated value'', defined as follows:
\begin{equation}
    V^{\pi \setminus k}_\nu(h_{<t}) := (1-\gamma)\E^\pi_\nu \left[\sum_{j=t}^{t+k-1} \gamma^{j-t} r_j \vd h_{<t} \right]
\end{equation}
We will often consider the truncated value while exploiting the fact that
\begin{equation} \label{ineq:truncate}
    0 \leq V^{\pi}_\nu(h_{<t}) - V^{\pi \setminus k}_\nu(h_{<t}) \leq \gamma^k
\end{equation}
which follows from $r_j \in [0, 1]$.

The following lemma is an intermediate result in the proof of \citepos{Hutter:16thompgrl} Lemma 2, and the proof is transcribed with notational changes.

\begin{lemma}[Variation Distance Bounds Expectation-Difference] \label{lem:tvdboundsexp}
Let $P_1$ and $P_2$ be two probability measures defined on the same space, and let $X \in [0, 1]$ be a random variable. Then
\begin{equation*}
    \va \E_{P_1} [X] - \E_{P_2} [X] \va \leq d(P_1, P_2)
\end{equation*}
\end{lemma}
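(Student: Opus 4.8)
The plan is to reduce the claim to the elementary fact that, for a bounded random variable, an expectation is an integral of tail probabilities, and that each such tail probability is an event whose discrepancy between $P_1$ and $P_2$ is controlled by the total variation distance. First I would invoke the layer-cake (tail-sum) representation: for any probability measure $P$ and any $X \in [0,1]$, $\E_P[X] = \int_0^1 P(X > s)\,ds$, which holds because $P(X > s) = 0$ for $s \geq 1$. Applying this to both measures and subtracting gives
\[
\E_{P_1}[X] - \E_{P_2}[X] = \int_0^1 \big( P_1(X > s) - P_2(X > s) \big)\, ds ,
\]
and passing absolute values inside the integral yields
\[
\big| \E_{P_1}[X] - \E_{P_2}[X] \big| \leq \int_0^1 \big| P_1(X > s) - P_2(X > s) \big|\, ds .
\]

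Next, for each fixed $s$ the set $\{X > s\}$ is a measurable event, so $|P_1(X > s) - P_2(X > s)| \leq \sup_{A} |P_1(A) - P_2(A)| = d(P_1, P_2)$, where the supremum ranges over all measurable events and the final equality identifies this supremum with the total variation distance. Substituting this $s$-independent bound and integrating, $\int_0^1 d(P_1,P_2)\,ds = d(P_1,P_2)$, closes the argument.

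The one point needing care is the identification $d(P_1, P_2) = \sup_A |P_1(A) - P_2(A)|$ used in the second step. In the paper $d$ is defined as $\lim_k \max_{\mathcal{E} \subset \mathcal{X}^k} |P(\mathcal{E}) - Q(\mathcal{E})|$, a supremum only over finite-horizon cylinder events, whereas the events $\{X > s\}$ may depend on the whole sequence. I would justify the identification by noting that the finite-horizon cylinder events form a generating algebra for $\mathcal{F}$, that $d_k$ is nondecreasing in $k$, and that total variation over a generating algebra equals total variation over the full $\sigma$-algebra; hence $d$ is the genuine supremum over all of $\mathcal{F}$. A cleaner alternative that avoids the layer-cake representation is the Hahn–Jordan route: decompose the signed measure $P_1 - P_2$ over a partition $A^+, A^-$ on which it is nonnegative and nonpositive respectively, and bound $\E_{P_1}[X] - \E_{P_2}[X] = \int X\,d(P_1 - P_2) \leq \int_{A^+} 1 \, d(P_1 - P_2) = P_1(A^+) - P_2(A^+) \leq d(P_1, P_2)$, using $0 \leq X \leq 1$ to drop the nonpositive contribution over $A^-$ and to replace $X$ by $1$ over $A^+$; a symmetric argument with the roles of $P_1, P_2$ swapped supplies the matching lower bound. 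Either route is routine, and the measurability/definition-matching observation is the only substantive step.
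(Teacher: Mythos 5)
Your proof is correct, but your primary route is genuinely different from the paper's. The paper works pointwise on the outcome space: it sets $Q = (P_1+P_2)/2$, writes both expectations as integrals of $X$ against the Radon--Nikodym derivatives $\frac{dP_i}{dQ}$, restricts to the set $A$ where $\frac{dP_1}{dQ} \geq \frac{dP_2}{dQ}$, and uses $0 \leq X \leq 1$ to replace $X$ by $1$ there, landing on $P_1(A) - P_2(A) \leq \sup_{A \in \mathcal{F}}|P_1(A)-P_2(A)| = d(P_1,P_2)$ --- this is exactly the Hahn--Jordan argument you offer as your ``cleaner alternative,'' implemented via densities with respect to the midpoint measure. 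Your main route instead slices the range of $X$ rather than the outcome space: the layer-cake identity $\E_{P}[X] = \int_0^1 P(X>s)\,ds$ turns the expectation difference into an integral of event-probability differences, each bounded by the variation distance. Both are sound; the layer-cake version is arguably more elementary (no Radon--Nikodym derivatives, no Hahn decomposition) and generalizes immediately to give the bound by $d_k$ when $X$ depends only on the first $k$ coordinates, which is how the lemma is actually invoked in Lemma \ref{lem:valuetotvd}. One point in your favor: both proofs need the identification of $d$ (defined in the paper as $\lim_k \max_{\mathcal{E}\subset\mathcal{X}^k}|P_1(\mathcal{E})-P_2(\mathcal{E})|$) with the supremum of $|P_1(A)-P_2(A)|$ over the full $\sigma$-algebra $\mathcal{F}$; the paper asserts this equality silently in its last line, whereas you flag it and sketch the standard generating-algebra approximation argument that justifies it.
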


\begin{proof}
Let $Q = (P_1 + P_2)/2$. Let $\frac{dP_i}{dQ}(\omega)$ denote the Radon Nykodym-derviative, where $\omega \in \Omega$ is a generic outcome.
Let $A$ be the event $\frac{dP_1}{dQ}(\omega) \geq \frac{dP_2}{dQ}(\omega)$ Then
\begin{align*}
    \E_{P_1} [X] - \E_{P_2} [X] &= \E_{\omega \sim Q} \left[X(\omega) \frac{dP_1}{dQ}(\omega) - X(\omega) \frac{dP_2}{dQ}(\omega)\right]
    \\
    &\leq \E_{\omega \sim Q} \left[X(\omega) \left(\frac{dP_1}{dQ}(\omega) - \frac{dP_2}{dQ}(\omega)\right)\vc \omega \in A\right]
    \\
    &\leq \E_{\omega \sim Q} \left[\frac{dP_1}{dQ}(\omega) - \frac{dP_2}{dQ}(\omega)\vc \omega \in A\right]
    \\
    &= P_1(A) - P_2(A) \leq \sup_{A \in \mathcal{F}} |P_1(A) - P_2(A)| = d(P_1, P_2)
\end{align*}
Since variation distance is symmetric, $\va \E_{P_1} [X] - \E_{P_2} [X] \va \leq d(P_1, P_2)$.
\end{proof}

The following is a simple consequence.
\begin{lemma}\label{lem:valuetotvd}
$\vb V^{\pi}_\nu(h_{<t}) - V^{\pi}_\mu(h_{<t}) \vb > \varepsilon > 0 \implies d_{\lceil \log_{\gamma} (\varepsilon/2) \rceil}\left(\p^{\pi}_\nu, \p^{\pi}_\mu \vb h_{<t}\right) > \varepsilon/2 > 0$
\end{lemma}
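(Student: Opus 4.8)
The plan is to reduce the claim about values to a claim about truncated values, and then invoke Lemma \ref{lem:tvdboundsexp}. First I would fix $k := \lceil \log_\gamma(\varepsilon/2)\rceil$, the very horizon appearing in the statement, and record the elementary fact that since $\gamma \in [0,1)$ makes $t \mapsto \log_\gamma t$ decreasing, this choice guarantees $\gamma^k \leq \varepsilon/2$. This is the only place the particular form of the horizon is used.

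Next I would pass from the full values to the truncated values $V^{\pi\setminus k}_\nu(h_{<t})$ and $V^{\pi\setminus k}_\mu(h_{<t})$. Writing $V^\pi_\rho(h_{<t}) = V^{\pi\setminus k}_\rho(h_{<t}) + T_\rho$ for $\rho \in \{\nu,\mu\}$, Inequality \ref{ineq:truncate} gives that each tail satisfies $T_\rho \in [0,\gamma^k]$. The step I expect to matter is getting the constant right: one must \emph{not} bound the two tails separately (which only yields $2\gamma^k$ and fails to reach $\varepsilon/2$), but instead observe that because $T_\nu$ and $T_\mu$ both lie in the interval $[0,\gamma^k]$, their difference satisfies $|T_\nu - T_\mu| \leq \gamma^k$. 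Hence, assuming the hypothesis $\left|V^\pi_\nu(h_{<t}) - V^\pi_\mu(h_{<t})\right| > \varepsilon$,
\[
\left| V^{\pi\setminus k}_\nu(h_{<t}) - V^{\pi\setminus k}_\mu(h_{<t}) \right| \geq \left| V^\pi_\nu(h_{<t}) - V^\pi_\mu(h_{<t}) \right| - |T_\nu - T_\mu| > \varepsilon - \gamma^k \geq \varepsilon/2.
\]

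Finally I would note that the truncated value is the expectation of a bounded quantity that depends only on the next $k$ steps. Setting $X := (1-\gamma)\sum_{j=0}^{k-1}\gamma^j r_{t+j}$, one checks $X \in [0, 1-\gamma^k] \subset [0,1]$ and $V^{\pi\setminus k}_\rho(h_{<t}) = \E^\pi_\rho[X \mid h_{<t}]$. Since $X$ is measurable with respect to the next $k$ history triples, I would re-run the argument of Lemma \ref{lem:tvdboundsexp} with the supremum over events restricted to those determined by the first $k$ characters in $\mathcal{H}^k$; this replaces the full total variation $d$ by the $k$-step distance and gives $\left|\E^\pi_\nu[X \mid h_{<t}] - \E^\pi_\mu[X \mid h_{<t}]\right| \leq d_k(\p^\pi_\nu, \p^\pi_\mu | h_{<t})$. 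Combining this with the display above yields $d_k(\p^\pi_\nu, \p^\pi_\mu | h_{<t}) > \varepsilon/2$, which is exactly the conclusion.

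The main obstacle is the factor-of-two bookkeeping: the tempting route that bounds each truncation error on its own overshoots and only proves $d_k > 0$, so one has to combine the two tails into the single difference $|T_\nu - T_\mu|$ bounded by $\gamma^k$. The only other point requiring care is that Lemma \ref{lem:tvdboundsexp} is phrased for the full total variation distance, so I must remark that its proof uses only events on the horizon of the random variable, which here is $k$ steps, licensing the passage from $d$ to $d_k$.
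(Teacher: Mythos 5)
Your proof is correct and follows essentially the same route as the paper's: truncate at horizon $k = \lceil\log_\gamma(\varepsilon/2)\rceil$, lower-bound the truncated value difference by $\varepsilon/2$ via Inequality \ref{ineq:truncate}, and apply Lemma \ref{lem:tvdboundsexp} at horizon $k$. The two points you flag as requiring care---combining the truncation tails into a single difference bounded by $\gamma^k$ (rather than $2\gamma^k$), and noting that Lemma \ref{lem:tvdboundsexp} applies to the $k$-step marginals so that $d$ can be replaced by $d_k$---are precisely the details the paper's terser proof leaves implicit, and you supply them correctly.
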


\begin{proof}
Letting $k = \lceil \log_\gamma(\varepsilon / 2) \rceil$, $\vb V^{\pi}_\nu(h_{<t}) - V^{\pi}_\mu(h_{<t}) \vb > \varepsilon$ implies $\vb V^{\pi \setminus k}_\nu(h_{<t}) - V^{\pi \setminus k}_\mu(h_{<t}) \vb > \varepsilon/2$ by Inequality \ref{ineq:truncate}. Since the value is bounded by $[0, 1]$, from Lemma \ref{lem:tvdboundsexp},
\begin{equation}
    \vb V^{\pi \setminus k}_\nu(h_{<t}) - V^{\pi \setminus k}_\mu(h_{<t}) \vb \leq d_k \left(\p^{\pi}_\nu, \p^{\pi}_\mu \vb h_{<t}\right)
\end{equation}
so $d_{\lceil \log_{\gamma} (\varepsilon/2) \rceil}\left(\p^{\pi}_\nu, \p^{\pi}_\mu \vb h_{<t}\right) > \varepsilon/2 > 0$.
\end{proof}

\begin{corollary}[Finite Zero Conditions] \label{cor:desperation}
    The zero condition, in which the agent queries the mentor because the pessimistic value of all policies is $0$, only occurs finitely often, with probability 1.
\end{corollary}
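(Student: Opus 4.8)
The plan is to show that the optimal pessimistic value $Y_t = \max_{\pi \in \Pi}\min_{\nu \in \Mbt} V^\pi_\nu(h_{<t})$ is bounded away from $0$ for all sufficiently large $t$ with probability $1$; since the zero condition is precisely the event $Y_t = 0$, this immediately implies that it occurs only finitely often. The lower bound on $Y_t$ will come from combining on-policy merging of the top models with the standing assumption that the true observed rewards are at least $\varepsilon_r > 0$.

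First I would invoke Corollary \ref{cor:onpolconv} (On-Policy Prediction), which gives $\lim \max_{\nu \in \Mbt} d(\p^{\pi^\beta_Z}_\nu, \p^{\pi^\beta_Z}_\mu | h_{<t}) = 0$ w.p.1. Applying Lemma \ref{lem:tvdboundsexp} with the bounded random variable $X = (1-\gamma)\sum_{k \geq t}\gamma^{k-t} r_k \in [0,1]$ converts this total-variation statement into a value statement: $\max_{\nu \in \Mbt}|V^{\pi^\beta_Z}_\nu(h_{<t}) - V^{\pi^\beta_Z}_\mu(h_{<t})| \to 0$ w.p.1. Because $\mu$ never emits a reward below $\varepsilon_r$, along any history of positive $\p^{\pi^\beta_Z}_\mu$-probability we have $V^{\pi^\beta_Z}_\mu(h_{<t}) \geq (1-\gamma)\sum_{k \geq t}\gamma^{k-t}\varepsilon_r = \varepsilon_r$. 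Hence $\min_{\nu \in \Mbt} V^{\pi^\beta_Z}_\nu(h_{<t}) \geq V^{\pi^\beta_Z}_\mu(h_{<t}) - \max_{\nu \in \Mbt}|V^{\pi^\beta_Z}_\nu(h_{<t}) - V^{\pi^\beta_Z}_\mu(h_{<t})| \geq \varepsilon_r - o(1)$, so this quantity exceeds $\varepsilon_r/2$ for all large $t$.

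To finish I would bound $Y_t$ from below by the pessimistic value of the agent's own (stochastic) policy. Since a deterministic policy always attains the optimal pessimistic value, $Y_t$ equals the supremum of $\min_{\nu \in \Mbt} V^\pi_\nu(h_{<t})$ over \emph{all} policies, so in particular $Y_t \geq \min_{\nu \in \Mbt} V^{\pi^\beta_Z}_\nu(h_{<t}) > \varepsilon_r/2 > 0$ eventually. Thus $Y_t > 0$ for all sufficiently large $t$ w.p.1, proving the zero condition occurs only finitely often.

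The main obstacle is this last step: justifying $Y_t \geq \min_{\nu \in \Mbt} V^{\pi^\beta_Z}_\nu(h_{<t})$ even though $\pi^\beta_Z$ is stochastic while $Y_t$ is a maximum only over deterministic policies. This rests on the cited fact that a deterministic policy attains the optimal pessimistic value; one must be sure this means the deterministic optimum dominates the pessimistic value of every stochastic policy, rather than merely tying the best deterministic one. A secondary care point is that both the reward bound $V^{\pi^\beta_Z}_\mu(h_{<t}) \geq \varepsilon_r$ and the on-policy convergence hold only along $\p^{\pi^\beta_Z}_\mu$-typical histories, so the whole argument is carried out on the probability-$1$ event where Corollary \ref{cor:onpolconv} holds.
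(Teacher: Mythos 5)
Your route is the same as the paper's: its proof of this corollary is exactly your first two paragraphs compressed into three sentences --- on-policy merging (Corollary \ref{cor:onpolconv}, fed through Lemmas \ref{lem:tvdboundsexp} and \ref{lem:valuetotvd}) forces the pessimistic value of $\pi^\beta_Z$ to approach the true value, the true value is at least $\varepsilon_r$ because observed rewards never fall below $\varepsilon_r$, and therefore eventually ``at least one policy'' has positive pessimistic value.

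The obstacle you flag in your last paragraph, however, is genuine, and the resolution you lean on is false. It is \emph{not} true that the optimal deterministic pessimistic value dominates the pessimistic value of every stochastic policy: take two actions $a, b$ and two world-models $\nu_1, \nu_2 \in \Mbt$, where $\nu_1$ gives reward $1$ for taking $a$ at the current step and reward $0$ in all other cases, and $\nu_2$ does the same for $b$. Every deterministic policy then has pessimistic value $0$, while the uniformly random policy has pessimistic value $(1-\gamma)/2$, so the inequality $Y_t \geq \min_{\nu \in \Mbt} V^{\pi^\beta_Z}_\nu(h_{<t})$ can genuinely fail; the fact cited from \citet{Hutter:14tcdiscx} concerns optimization against a single fixed environment and does not survive the inner $\min$ (this is just the matching-pennies phenomenon). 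To be fair, the paper's own proof makes the same silent leap, offering the stochastic $\pi^\beta_Z$ as its witness even though $Y_t$ maximizes over deterministic policies only. The conclusion survives because far less than domination is needed: one only needs that eventually \emph{some} deterministic policy has positive pessimistic value. Write $\p^{\pi^\beta_Z}_\nu = \E_{\pi' \sim D}[\p^{\pi'}_\nu]$ as a mixture over deterministic policies $\pi'$ (pre-sample the policy's randomness at every history), and let $A$ be the event that all future rewards are at least $\varepsilon_r$, so that $\p^{\pi^\beta_Z}_\mu(A \mid h_{<t}) = 1$. With $\delta_t := \max_{\nu \in \Mbt} d(\p^{\pi^\beta_Z}_\nu, \p^{\pi^\beta_Z}_\mu \mid h_{<t}) \to 0$ by Corollary \ref{cor:onpolconv}, every $\nu \in \Mbt$ satisfies $\p^{\pi^\beta_Z}_\nu(A \mid h_{<t}) \geq 1 - \delta_t$ (approximate $A$ by its finite-horizon truncations); averaging over $D$ and summing over $\nu \in \Mbt$ produces a deterministic $\pi'$ in the support of $D$ with $\p^{\pi'}_\nu(A \mid h_{<t}) \geq 1 - |\Mbt|\,\delta_t$, hence $V^{\pi'}_\nu(h_{<t}) \geq (1 - |\Mbt|\,\delta_t)\,\varepsilon_r$, for every $\nu \in \Mbt$ simultaneously. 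Since the proof of Lemma \ref{lem:topopinions} shows $\Mbt$ is eventually contained in a fixed finite set, $|\Mbt|\,\delta_t < 1$ eventually, giving $Y_t > 0$ for all large $t$, which is what the corollary needs.
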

\begin{proof}
By the previous two lemmas, the pessimistic value of $\pi^\beta_Z$ approaches the true value with probability 1, and the true value is at least $\varepsilon_r$ because rewards less than $\varepsilon_r$ are never provided. Thus, eventually, there is always at least one policy with a pessimistic value greater than $0$, so the zero condition is never met thereafter.
\end{proof}

Since all our remaining performance results consider limiting behavior, we will ignore the zero condition.

The next lemma, from \citet[Lemma 3]{cohen2019asymptotically}, states that the posterior probability on the truth (regarding both the true world-model and the true mentor-model) does not approach 0.
\begin{lemma}[Posterior on Truth]\label{lem:credenceontruth}
\begin{equation*}
    P[\inf_t w(P|x_{<t}) = 0] = 0
\end{equation*}
\end{lemma}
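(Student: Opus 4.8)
The plan is to exhibit $M_t := w(P \mid x_{<t})^{-1}$ as a non-negative $P$-martingale and then kill the event $\inf_t w(P\mid x_{<t}) = 0$ either by the martingale convergence theorem or by Doob's inequality. From the posterior formula, $w(P \mid x_{<t}) = w(P)P(x_{<t})/\B\M(x_{<t})$, so I would set $M_t = \B\M(x_{<t})/(w(P)P(x_{<t}))$. This is strictly positive and finite for $P$-almost every $\omega$, since the realized prefix $x_{<t}$ has positive $P$-probability for every $t$ on a set of full $P$-measure, which is the set I would restrict attention to.

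First I would verify the martingale property under $P$ with respect to the natural filtration generated by the cylinder sets. Conditioning on $x_{<t}$ and letting the next symbol $\overline{x}$ be drawn from $P(\cdot \mid x_{<t})$, a one-line computation gives
\begin{align*}
\E_P[M_{t+1} \mid x_{<t}] &= \sum_{\overline{x} \in \mathcal{X}} P(\overline{x}\mid x_{<t}) \frac{\B\M(x_{<t}\overline{x})}{w(P)P(x_{<t}\overline{x})} \\
&= \frac{1}{w(P)P(x_{<t})} \sum_{\overline{x}\in\mathcal{X}} \B\M(x_{<t}\overline{x}) = \frac{\B\M(x_{<t})}{w(P)P(x_{<t})} = M_t,
\end{align*}
using $P(x_{<t}\overline{x}) = P(x_{<t})P(\overline{x}\mid x_{<t})$ in the first step and $\sum_{\overline{x}}\B\M(x_{<t}\overline{x}) = \B\M(x_{<t})$ (because $\B\M$ is a probability measure) in the second. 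Note $M_1 = 1/w(P) < \infty$, so $M_t$ is an integrable non-negative martingale.

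Then I would finish in either of two equivalent ways. Since $M_t$ is a non-negative martingale, the martingale convergence theorem \citep{durrett2010probability} gives that $M_t$ converges $P$-a.s. to a finite limit; any convergent real sequence is bounded, so $\sup_t M_t < \infty$ $P$-a.s., which is exactly $\inf_t w(P\mid x_{<t}) = (\sup_t M_t)^{-1} > 0$ $P$-a.s. Alternatively, and more in the spirit of the proof of Theorem \ref{thm:precedent} (cf. Equation \ref{eqn:nonnegmart}), Doob's martingale inequality \citep{durrett2010probability} yields $P[\sup_t M_t \geq c] \leq M_1/c = 1/(c\,w(P))$, which tends to $0$ as $c \to \infty$; hence $P[\sup_t M_t = \infty] = 0$ and therefore $P[\inf_t w(P\mid x_{<t}) = 0] = 0$.

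There is no substantive obstacle here: the result is a standard martingale fact, and the only care needed is the measure-theoretic bookkeeping, namely working on the $P$-full-measure set where $P(x_{<t}) > 0$ for all $t$ so that both the posterior and $M_t$ are well-defined, and fixing the correct filtration so that the one-step conditional expectation above is legitimate. The one mildly non-obvious point worth flagging is that the relevant martingale is $w(P\mid x_{<t})^{-1}$ under $P$, which is a different object from the $\B\M$-martingale $w(Q\mid x_{<t})$ established in Lemma \ref{lem:poststable}; the inverse is what makes non-negativity and the Doob bound do the work.
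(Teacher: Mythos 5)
Your proposal is correct and follows essentially the same route as the paper: both exhibit $w(P\mid x_{<t})^{-1}$ as a non-negative $P$-martingale via the identical one-step computation through $\B\M$, and conclude via the martingale convergence theorem (your Doob's-inequality alternative is the same device the paper uses separately in the proof of Theorem \ref{thm:precedent}). Your handling of the null set where some prefix has zero $P$-probability matches the paper's opening remark, so there is no gap.
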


\begin{proof}
If $w\left( P | x_{<t} \right) = 0$ for some $t$, then $P(x_{<t}) = 0$, so with $P$-probability 1, $\inf_{t \in \mathbb{N}} w\left( P | x_{<t} \right) = 0 \implies \liminf_{t \in \mathbb{N}} w\left( P | x_{<t} \right) = 0$ which in turn implies $\limsup_{t \in \mathbb{N}} w\left( P | x_{<t} \right)^{-1} = \infty$. We show that this has probability 0.

Let $z_t := w\left( P | x_{<t} \right)^{-1}$. We show that $z_t$ is a $P$-martingale.

\begin{align*}
    \E_P\left[z_{t+1} | x_{<t} \right] &\equal^{(a)} \E_{P}\left[w\left(P | x_{t+1}\right)^{-1} \vc x_{<t}  \right]
    \\
    &\equal^{(b)} \sum_{\overline{x} \in \mathcal{X}}P(\overline{x}|x_{<t})\left[\frac{\B\M(x_{t}\overline{x})}{w\left(P\right) P(x_{t}\overline{x})}\right]
    \\
    &\equal^{(c)} \sum_{\overline{x} \in \mathcal{X}} \frac{\B\M(x_{t}\overline{x})}{w\left(P\right) P(x_{<t} )}
    \\
    &\equal^{(d)} \sum_{\overline{x} \in \mathcal{X}}\B\M(\overline{x}|x_{t}) \frac{\B\M(x_{t})}{w\left(P\right) P(x_{<t} )}
    \\
    &\equal^{(e)} \frac{\B\M(x_{t})}{w\left(P\right) P(x_{<t} )}
    \\
    &\equal^{(f)} w\left(P | x_{<t} \right)^{-1}
    \\
    &= z_t
    \tagaligneq
\end{align*}
where (a) is the definition of $z_t$, (b) follows from Bayes' Rule, (c) follows from multiplying the numerator and denominator by $\B\M(x_{<t} )$ and cancelling, (d) follows from expanding the numerator, (e) follows because $\B\M$ is a measure, and (f) follows from Bayes' Rule, completing the proof that $z_t$ is martingale.

By the martingale convergence theorem $z_t \to f(\omega) < \infty \ \ \mathrm{w.p. 1}$, for $\omega \in \Omega$, the sample space, and some $f: \Omega \to \mathbb{R}$, so the probability that $\limsup_{i \in \mathbb{N}} w\left(P | x_{<t} \right)^{-1} = \infty$ is 0, completing the proof.

Note that the posterior probability on the mentor-policy is only updated at some timesteps (when the mentor is queried), but it is clearly still a martingale.
\end{proof}

\mergejustoffpollemma*

\begin{proof}
Suppose by contradiction that $|\min_{\nu \in \Mbt} V^{\pi_t}_\nu(h_{<t}) - V^{\pi_t}_\mu(h_{<t})| > \varepsilon > 0$ infinitely often for $t \in \tau$. Then, by Lemma \ref{lem:valuetotvd}, for some $\nu \in \Mbt$ at each of those timesteps, $d_{\lceil \log_{\gamma} (\varepsilon/2) \rceil}\left(\p^{\pi_t}_\nu, \p^{\pi_t}_\mu \vb h_{<t}\right) > \varepsilon/2 > 0$. So then there exists a $k$ for which $\max_{\nu \in \Mbt} d_{k}\left(\p^{\pi_t}_\nu, \p^{\pi_t}_\mu \vb h_{<t}\right) > \varepsilon/2 > 0$ infinitely often for $t \in \tau$. Now we are supposing a contradiction in either of the two implications of the theorem. An event on which the two measures differ by at least $\varepsilon/2$ occurs within $k$ timesteps.
Because $\pi^\beta_Z(\cdot | h_{<t'}) \geq c\pi_t(\cdot | h_{<t'})$, $d_{k}\left(\p^{\pi^\beta_Z}_\nu, \p^{\pi^\beta_Z}_\mu \vb h_{<t}\right) \geq c^{k} d_{k}\left(\p^{\pi_t}_\nu, \p^{\pi_t}_\mu \vb h_{<t}\right)$. This holds for any $\nu$, but in particular for $\nu \in \Mbt$, so $\max_{\nu \in \Mbt} d_{k}\left(\p^{\pi^\beta_Z}_\nu, \p^{\pi^\beta_Z}_\mu \vb h_{<t}\right) \geq c^{k} \max_{\nu \in \Mbt} d_{k}\left(\p^{\pi_t}_\nu, \p^{\pi_t}_\mu \vb h_{<t}\right) > c^k \varepsilon/2$. This happens infinitely often for $t \in \tau$.

But $d \left(\p^{\pi^\beta_Z}_\nu, \p^{\pi^\beta_Z}_\mu \vb h_{<t}\right) \geq d_{k}\left(\p^{\pi^\beta_Z}_\nu, \p^{\pi^\beta_Z}_\mu \vb h_{<t}\right)$, so $\max_{\nu \in \Mbt} d \left(\p^{\pi^\beta_Z}_\nu, \p^{\pi^\beta_Z}_\mu \vb h_{<t}\right) > c^k \varepsilon/2 > 0$ infinitely often, which has probability 0 by Corollary \ref{cor:onpolconv}. Thus, the original assumption has probability 0, completing the proof.
\end{proof}

We complete the proof of Theorem \ref{thm:main} here.
\begin{proof}\textbf{(Theorem \ref{thm:main})} The proof begins in the main paper, in a ``detailed proof outline''. Recall the inductive hypotheses:
\begin{itemize}
    \item $t_k$ exists: a timestep after which
    \begin{itemize}
        \item $\max_{\nu \in \Mat} \vb V^{\pi' k; \pi^\beta}_\nu(h_{<t}) - V^{\pi' k; \pi^\beta}_\mu(h_{<t}) \vb < \varepsilon$
        \item $\max_{\nu \in \Mat} \allowbreak d_k\left(\p^{\pi'}_\nu, \p^{\pi'}_\mu \vb h_{<t} \right) < \varepsilon$
    \end{itemize}
    for all $t \in \tau_{k-1}$
    \item $|\tau_k| = \infty$, where $t \in \tau_k$ if and only if
    \begin{itemize}
        \item $t \in \tau_{k-1}$ (and for $\tau_0$, $t \in \tau^\times$ as well)
        \item $t \geq t_k$
        \item $\forall t' < k : \theta_{t+t'} \geq \nu'_{\inf}\pi'_{\inf}p(Z_{t+t'} < \varepsilon)$
        \item $V^{\pi'}_{\nu'}(h_{<t+k}) \geq V^{\pi^\beta}_{\mu}(h_{<t+k}) + 2\varepsilon$
    \end{itemize}
\end{itemize}

The proof by induction starts with $k=0$. $\tau_{-1} = \mathbb{N}$, so $t_0$ is a timestep after which $\max_{\nu \in \Mat} |V^{\pi^\beta}_\nu - V^{\pi^\beta}_\mu| < \varepsilon$ for all $t \geq t_0$. From Lemma \ref{lem:mergejustoffpol}, setting $\pi_t = \pi^\beta$, setting $\tau = \tau_{-1}$, setting $\beta' = \alpha$, and setting $c = p(Z_t > 1) > 0$, the condition of the lemma holds---that $\forall t \in \tau \ \forall t' \geq t$, $\pi^\beta_Z(a|h_{<t'}) \geq c \pi_t(a|h_{<t'}) \ \forall a \in \mathcal{A}$---so we have the result that with probability 1, $\lim_{\mathbb{N} \ni t \to \infty} \max_{\nu \in \Mat}|V^{\pi^\beta}_\nu(h_{<t}) - V^{\pi^\beta}_\mu(h_{<t})| = 0$. Therefore, $t_0$ exists with probability 1. Turning to $\tau_0$, the first and the third condition are immediate, so we need only show that the fourth condition is satisfied infinitely often with probability 1 for $t \in \tau^\times$, namely that $V^{\pi'}_{\nu'}(h_{<t}) \geq V^{\pi^\beta}_{\mu}(h_{<t}) + 2\varepsilon$. This is true for all $t \in \tau^\times$, and $|\tau^\times| = \infty$.

Now we show that if $t_k$ exists and $|\tau_k| = \infty$, then with probability 1, $t_{k+1}$ exists and $|\tau_{k+1}| = \infty$. For each $t \in \tau_k$, $V^{\pi'}_{\nu'}(h_{<t+k}) \geq V^{\pi^\beta}_{\mu}(h_{<t+k}) + 2\varepsilon$. For $t > t_0$, $\max_{\nu \in \Mat}|V^{\pi^\beta}_\nu(h_{<t+k}) - V^{\pi^\beta}_\mu(h_{<t+k})| < \varepsilon$, and since $\alpha \geq \beta$, $\Mbt \subset \Mat$, so $\max_{\nu \in \Mbt}|V^{\pi^\beta}_\nu(h_{<t+k}) - V^{\pi^\beta}_\mu(h_{<t+k})| < \varepsilon$. Combining these, we have $V^{\pi'}_{\nu'}(h_{<t+k}) \geq \min_{\nu \in \Mbt}V^{\pi^\beta}_{\nu}(h_{<t+k}) + \varepsilon$ for $t \in \tau_k$. Thus, the probability of exploring $\theta_{t+k} \geq \nu'_{\inf}\pi'_{\inf}p(Z_{t+k} < \varepsilon) > 0$. Since $A(t, k)$ holds for $t \in \tau_k$, $A(t, k+1)$ holds as well.

In preparation to apply Lemma \ref{lem:mergejustoffpol}, let $\pi_t = (\pi'(k+1); \pi^\beta)_t$; that is, since $\pi_t$ need only be defined from timestep $t$ onward, let $\pi_t$ be the policy which follows $\pi'$ from timestep $t$ through timestep $t+k$, and follows $\pi^\beta$ thereafter. Set $\tau$ from Lemma \ref{lem:mergejustoffpol} to be $\tau_k$. For $t' > t+k$, $\pi_t(\cdot | h_{<t'}) = \pi^\beta(\cdot | h_{<t'})$, which satisfies $\pi^\beta_Z(a | h_{<t'}) \geq c \pi^\beta(a | h_{<t'}) \ \forall a \in \mathcal{A}$. For $t \leq t' \leq t+k$, $\theta_{t'} \geq \nu'_{\inf}\pi'_{\inf}p(Z < \varepsilon)$, this being the proposition $A(t, k+1)$. Since $\pi^\beta_Z$ mimics the mentor's policy $\pi^m$ when exploring, for $t \leq t' \leq t+k$, $\pi^\beta_Z(a|h_{<t'}) \geq c \pi^m(a|h_{<t'}) \ \forall a \in \mathcal{A}$, for $c=\nu'_{\inf}\pi'_{\inf}p(Z < \varepsilon)$. But we need that $\pi^\beta_Z(a|h_{<t'}) \geq c' \pi'(a|h_{<t'}) \ \forall a \in \mathcal{A}$.

So we show that $d_1(\pi', \pi^m | h_{<t}) \theta_t \to 0$ with probability 1. For a mentor-model $\pi_i \in \mathcal{P}$, consider the alternative policy to $\pi^\beta_Z$, which explores by mimicking $\pi_i$ instead of $\pi^m$. Call this policy $\pi^\beta_{Z, i}$ Consider a prior over probability measures where $w''(\p^{\pi^\beta_{Z, i}}_\mu) := w'(\pi_i)$, and note that $w''(\p^{\pi^\beta_{Z, i}}_\mu | h_{<t}) = w'(\pi_i | h_{<t})$. Because $w'(\pi' | h_{<t}) \geq \pi'_{\inf}$, $w''(\p^{\pi^\beta_{Z, '}}_\mu | h_{<t}) \geq \pi'_{\inf}$. By Lemma \ref{lem:mergeorleave}, this implies $\p^{\pi^\beta_Z}_\mu[d(\p^{\pi^\beta_{Z, '}}_\mu, \p^{\pi^\beta_Z}_\mu | h_{<t}) \to 0] = 1$. Trivially, $d(\p^{\pi^\beta_{Z, '}}_\mu, \p^{\pi^\beta_Z}_\mu | h_{<t}) \geq d_1(\pi', \pi^m | h_{<t}) \theta_t$, so $d_1(\pi', \pi^m | h_{<t}) \theta_t\to 0$ with probability 1.

Recall that for $t \leq t' \leq t+k$, $\theta_{t'}$ is uniformly bounded below, so on those timesteps, $d_1(\pi', \pi^m | h_{<t}) \to 0$. Therefore, there exists a time $t_k'$ after which $\pi^m(a|h_{<t'}) \geq \pi'(a|h_{<t'})/2$ $\forall a \in \A$. This gives us that for those timesteps $t \leq t' \leq t+k$, for $t \in \tau_k$ and $\geq t_k'$, for all $a \in \A$,
\begin{equation}\label{ineq:includespiprime}
    \pi^\beta_Z(a|h_{<t'}) \geq \nu'_{\inf}\pi'_{\inf}p(Z < \varepsilon)/2 \ \pi'(a|h_{<t'})
\end{equation}
Restricting $\tau$ to be the set of timesteps in $\tau_k$ after $t_k'$, $\tau$ is still infinite, and we can now apply Lemma \ref{lem:mergejustoffpol} on the policy $\pi_t = (\pi'(k+1); \pi^\beta)_t$, with $\beta' = \alpha$ again, and with $c = \nu'_{\inf}\pi'_{\inf}p(Z < \varepsilon)/2$. The implication of the lemma is that $\lim_{\tau_k \ni t \to \infty} \max_{\nu \in \Mat}|V^{\pi'(k+1);\pi^\beta}_\nu(h_{<t}) - V^{\pi'(k+1);\pi^\beta}_\mu(h_{<t})| = 0$ and for all $j$, $\lim_{\tau \ni t \to \infty} \max_{\nu \in \Mbt} d_{j}\left(\p^{\pi_t}_\nu, \p^{\pi_t}_\mu \vb h_{<t} \right) = 0$. In particular, this holds for $j = k+1$. Together, these imply that $t_{k+1}$, a time after which the value difference and the variation distance are both less than $\varepsilon$, exists. (For the $k+1$-step variation distance, $\pi_t$ is equivalent to $\pi'$).

Since $|\tau_k| = \infty$, we have already shown that the first three conditions are satisfied infinitely often. So to show that $|\tau_{k+1}| = \infty$, we need only show that among those infinitely many timesteps, the following condition holds infinitely often: $V^{\pi'}_{\nu'}(h_{<t+k+1}) \geq V^{\pi^\beta}_\mu(h_{<t+k+1}) + 2\varepsilon$. We begin,
\begin{align*}\label{ineq:errordecaysslowly}
    V^{\pi'}_{\nu'}(h_{<t}) &\gequal^{(a)} V^{\pi^\beta}_\mu(h_{<t}) + 7\varepsilon
    \gequal^{(b)} \min_{\nu \in \Mbt} V^{\pi^\beta}_\nu(h_{<t}) + 6\varepsilon
    \gequal^{(c)} \min_{\nu \in \Mbt} V^{\pi'(k+1);\pi^\beta}_\nu(h_{<t}) + 6\varepsilon
    \\
    &\gequal^{(d)} V^{\pi'(k+1);\pi^\beta}_\mu(h_{<t}) + 5\varepsilon
    \gequal^{(e)} V^{\pi'(k+1);\pi^\beta}_{\nu'}(h_{<t}) + 4\varepsilon
    \tagaligneq
\end{align*}

where $(a)$ follows because $\tau_{k} \subset \tau_{k-1} \subset ... \subset \tau^\times$ which is the set of timesteps for which that holds; $(b)$ follows because $\tau_{k}$ only contains timesteps after $t_0$, and after $t_0$, those two values differ by at most $\varepsilon$ for all $\nu \in \Mbt$ (indeed for all $\nu$ in $\Mat$ which is a superset of $\Mbt$ because $\alpha \geq \beta$); $(c)$ follows because $\pi^\beta$ maximizes that quantity; $(d)$ follows because for $t \geq t_{k+1}$, those two values differ by at most $\varepsilon$ for all $\nu \in \Mbt$ (indeed for all $\nu$ in $\Mat$); and $(e)$ follows because $\nu' \in \Mat$, because $w(\Mat|h_{<t}) \geq 1 - \nu'_{\inf} /2$ by the definition of $\alpha$, and $w(\nu'|h_{<t}) \geq \nu'_{\inf}$, so $\nu'$ ``doesn't fit'' in the complement of $\Mat$.

From Inequality \ref{ineq:errordecaysslowly}, we expand to get
\begin{align*}\label{ineq:expecteddiff}
    3\varepsilon &\lequal V^{\pi'}_{\nu'}(h_{<t}) - V^{\pi'(k+1);\pi^\beta}_{\nu'}(h_{<t}) - \varepsilon
    \\
    &\equal^{(a)} \E^{\pi'}_{\nu'}\left[\gamma^{k+1}\left(V^{\pi'}_{\nu'}(h_{<t+k+1}) - V^{\pi^\beta}_{\nu'}(h_{<t+k+1})\right)\vb h_{<t}\right] - \varepsilon
    \\
    &\lequal^{(b)}
    \E^{\pi'}_{\mu}\left[\gamma^{k+1}\left(V^{\pi'}_{\nu'}(h_{<t+k+1}) - V^{\pi^\beta}_{\nu'}(h_{<t+k+1})\right)\vb h_{<t}\right]
    \tagaligneq
\end{align*}

where $(a)$ follows because the policies agree on the first $k+1$ timesteps after $t$, and $(b)$ is true because $\nu' \in \Mat$ and $t \geq t_{k+1}$, so $d_{k+1}(\p^{\pi'}_{\nu'}, \p^{\pi'}_{\mu} | h_{<t}) \leq \varepsilon$ by the definition of $t_{k+1}$, and the difference in the expectations is less than this variation distance by Lemma \ref{lem:tvdboundsexp}; (note the expectation is only over the next $k+1$ timesteps).

We would like to bound the probability of a significant value difference below. In what follows, all values take the argument $h_{<t+k+1}$, so we remove it for legibility.

\begin{align*}
    \p^{\pi^\beta_Z}_\mu &\left[V^{\pi'}_{\nu'} - V^{\pi^\beta}_{\nu'} > 3\varepsilon \vb h_{<t}\right] \gequal^{(a)}
    [\nu'_{\inf}\pi'_{\inf}p(Z < \varepsilon)/2]^{k+1} \p^{\pi'}_{\mu} \left[V^{\pi'}_{\nu'} - V^{\pi^\beta}_{\nu'} > 3\varepsilon \vb h_{<t}\right]
    \\
    &\equal^{(b)} f_{\varepsilon, k}\left[1- \p^{\pi'}_{\mu} \left[V^{\pi'}_{\nu'} - V^{\pi^\beta}_{\nu'} \leq 3\varepsilon \vb h_{<t}\right]\right]
    = f_{\varepsilon, k}\left[1- \p^{\pi'}_{\mu} \left[1 - \left(V^{\pi'}_{\nu'} - V^{\pi^\beta}_{\nu'}\right) \geq 1 - 3\varepsilon \vb h_{<t}\right]\right]
    \\
    &\gequal^{(c)} f_{\varepsilon, k}\left[1- \frac{1}{1 - 3\varepsilon}\E^{\pi'}_{\mu} \left[1 - \left(V^{\pi'}_{\nu'} - V^{\pi^\beta}_{\nu'}\right) \vb h_{<t}\right]\right]
    \\
    &\gequal^{(d)} f_{\varepsilon, k}\left[1 + \frac{1}{1 - 3\varepsilon} \left(\frac{3\varepsilon}{\gamma^{k+1}} - 1\right)\right]
    = f_{\varepsilon, k}\frac{3\varepsilon(1 - \gamma^{k+1})}{(1-3\varepsilon)\gamma^{k+1}} =: g_{\varepsilon, k} > 0
    \tagaligneq
\end{align*}

where $(a)$ follows from Inequality \ref{ineq:includespiprime}, $(b)$ sets $f_{\varepsilon, k} = [\nu'_{\inf}\pi'_{\inf}p(Z < \varepsilon)/2]^{k+1}$, $(c)$ follows from Markov's Inequality, and $(d)$ follows from Inequality \ref{ineq:expecteddiff}. Since this probability is uniformly positive for $t$ meeting the first three conditions of $\tau_{k+1}$, the event occurs infinitely often with probability 1. Finally, $|V^{\pi^\beta}_{\nu'}(h_{<t+k+1}) - V^{\pi^\beta}_{\mu}(h_{<t+k+1})| < \varepsilon$, since $\nu' \in \Mat$ and $t \geq t_0$, so it also follows that $V^{\pi'}_{\nu'}(h_{<t+k+1}) - V^{\pi^\beta}_{\mu}(h_{<t+k+1}) > 2\varepsilon$ occurs infinitely often with probability 1 when the other three conditions of $\tau_{k+1}$ are satisfied. This completes all four conditions for $\tau_{k+1}$, so $|\tau_{k+1}| = \infty$ with probability 1, completing the proof by induction over $k$.

But this implies that Inequality $\ref{ineq:expecteddiff}$ holds for all $k$; that is,
\begin{equation}
    3\varepsilon \leq \gamma^{k+1} \E^{\pi'}_{\mu}\left[V^{\pi'}_{\nu'}(h_{<t+k+1}) - V^{\pi^\beta}_{\nu'}(h_{<t+k+1}) \vb h_{<t}\right] \leq \gamma^{k+1}
\end{equation}
because values belong to $[0, 1]$. But as $k \to \infty$, this inequality is false. Thus, we have a contradiction, after following implications that hold with probability 1, so the negation of the theorem, which we supposed at the beginning, has probability 0.
\end{proof}

\limitedqueryingcor*

\begin{proof}
Again, we treat implications that hold with probability as if they are logical implications, so any supposition which leads to a contradiction has probability 0. From Corollary \ref{cor:desperation}, the zero condition happens only finitely often, so it is irrelevant to the limiting behavior.

For a given infinite interaction history $h$, let $\mathcal{PM}_h$ be a finite set of pairs $(\pi, \nu)$, such that the sum over $\mathcal{PM}_h$ of the limits of $w(\nu|h_{<t})w'(\pi|h_{<t})$ exceeds $1 - \varepsilon$, and for all pairs in the set, that limit is strictly positive. Such a finite set exists by Lemma \ref{lem:sumoflimits}, which states that the sum of the limits of posteriors is 1 with probability 1.

Suppose by contradiction that $\theta_t > 2\varepsilon$ infinitely often under $h$. Eventually, the probability of sampling any $(\pi, \nu) \notin \mathcal{PM}_h \leq \varepsilon$, so this can contribute at most $\varepsilon$ to the probability of querying the mentor. Letting $\pi'_t$ and $\nu'_t$ be the sampled policy and world-model at time $t$ when determining whether to query to the mentor, this implies that $\theta_t \wedge (\pi'_t, \nu'_t) \in \mathcal{PM}_h) > \varepsilon$ infinitely often. $q_t=1$ implies that $V^{\pi'_t}_{\nu'_t}(h_{<t}) > \min_{\nu \in \Mbt}V^{\pi^\beta}_\nu(h_{<t}) + Z_t$, so the probability of the event is at most $p(Z_t < V^{\pi'_t}_{\nu'_t}(h_{<t}) - \min_{\nu \in \Mbt}V^{\pi^\beta}_\nu(h_{<t}))$. Since $(\pi', \nu')$ satisfies the condition of Theorem \ref{thm:main}, that value difference approaches at most $0$, so that probability goes to $0$ since $Z_t$ is strictly positive. Thus, the probability can \textit{not} exceed $\varepsilon$ infinitely often, contradicting the assumption, so $\theta_t \to 0$ with probability 1.
\end{proof}

\iwouldntdocor*
\begin{proof}
By Theorem \ref{thm:precedent},
\begin{equation}
    \lim_{\beta \to 1} \ptrue[\forall t \ (h_{<t-1}a_{t-1} \notin \Ehap \implies h_{<t}a_t \notin E \vee q_t = 1)] = 1
\end{equation}
$q_t = 1 \implies a_t \sim \pi^m(\cdot | h_{<t}) \implies \pi^m(a_t | h_{<t}) > 0 \iff h_{<t}a_t \notin E$. Thus we can simplify,
\begin{equation}
    \lim_{\beta \to 1} \ptrue[\forall t \ (h_{<t-1}a_{t-1} \notin \Ehap \implies h_{<t}a_t \notin E)] = 1
\end{equation}
The base case is vacuous, so by induction,
\begin{equation}
    \lim_{\beta \to 1} \ptrue[\forall t : h_{<t}a_t \notin E] = 1
\end{equation}
completing the proof.
\end{proof}

\divergingthm*
\begin{proof}
$V^{\pi^m}_\mu \!\! = 3/4$, this being the expected reward at each timestep. From Corollary \ref{cor:humanlevel}, $\liminf V^{\pi^\beta}_\mu(h_{<t}) \allowbreak \geq 3/4$. Since $\theta \to 0$, $V^{\pi^\beta_Z}_\mu(h_{<t}) - V^{\pi^\beta}_\mu(h_{<t}) \to 0$, so $\liminf V^{\pi^\beta_Z}_\mu(h_{<t}) \geq 3/4$, with probability 1. Let $R_t = (1 - \gamma)\sum_{i = 0}^\infty \gamma^i r_{t+i}$, so $V^{\pi^\beta_Z}_\mu(h_{<t}) = \E^{\pi^\beta_Z}_\mu[R_t]$. Because $\mu$ and $\pi^\beta$ are deterministic, and because $\theta_t \to 0$, $V^{\pi^\beta_Z}_\mu(h_{<t}) - R_t \to 0$ with probability 1. This implies $\liminf R_t \geq 3/4$. Letting $2\varepsilon = 1/2 - \gamma/(1+\gamma) > 0$, there exists a time $t_0$ after which $R_t > 3/4 - \varepsilon$. 

Let $t > t_0$ and $a_t = \texttt{tails}$. (If $\texttt{tails}$ only occurs finitely often, the theorem holds trivially). Suppose by contradiction that for all $0 \leq k < K := \lceil \log_\gamma(\varepsilon/2) \rceil$, $\frac{1}{k+1}\sum_{j = 0}^k [\![a_{t+j} = \texttt{heads}]\!] \leq 1/2$. We have a budget of $K/2$ $\texttt{heads}$es to place in timesteps $t$ through $t+K-1$. Let $R_t^{\setminus K}$ be defined like the truncated value: $R_t^{\setminus K} = (1 - \gamma)\sum_{i = 0}^{K-1} \gamma^i r_{t+i}$. $R_t \leq R_t^{\setminus K} + \gamma^K = R_t^{\setminus K} + \varepsilon/2$, from the definition of $K$. We consider the maximum that $R_t^{\setminus K}$ can be while satisfying the supposition. If, in timesteps $t$ through $t+K-1$ a $\texttt{heads}$ is switched with a $\texttt{tails}$ that comes later, $R_t^{\setminus K}$ increases, since $\texttt{heads}$ gives a reward of 1, and $\texttt{tails}$ gives a reward of 1/2, and the earlier timestep is less discounted.

Thus, greedy placement of $\texttt{heads}$es maximizes $R_t^{\setminus K}$; that is, placing them at the first opportunity which still satisfies $\frac{1}{k+1}\sum_{j = 0}^k [\![a_{t+j} = \texttt{heads}]\!] \leq 1/2$. $a_t = \texttt{tails}$, so $a_{t+1}$ may be $\texttt{heads}$, but then $a_{t+2}$ must be tails, or else $k=2$ would violate the supposition, etc. $R_t^{\setminus K}$ is maximized (while satisfying the supposition) when $\texttt{tails}$ and $\texttt{heads}$ alternate. Therefore, $R_t - \varepsilon/2 \leq R_t^{\setminus K} \leq (1 - \gamma)\sum_{i = 0}^{K-1} \gamma^i (1/2 + 1/2[\![i \textrm{ is odd}]\!]) < (1 - \gamma)\sum_{i = 0}^\infty \gamma^i (1/2 + 1/2[\![i \textrm{ is odd}]\!]) = 1/2 + 1/2*\gamma/(1 + \gamma) = 1/2 + 1/2*(1/2 - 2\varepsilon) = 3/4 - \varepsilon$, so $R_t \leq 3/4 - \varepsilon / 2$. This, however, contradicts $t > t_0$. So the supposition is false: $\exists k < K$ such that $\frac{1}{k+1}\sum_{j = 0}^k [\![a_{t+j} = \texttt{heads}]\!] > 1/2$. $a/b > 1/2 \wedge b < K \implies a/b \geq 1/2 + 1/(2K)$. Thus,
\begin{equation} \label{eqn:mostlyheads}
    \exists k < K : \frac{1}{k+1}\sum_{j = 0}^k [\![a_{t+j} = \texttt{heads}]\!] \geq 1/2 + 1/(2K)
\end{equation}

Let $t_1$ be the smallest $t > t_0$ for which $a_t = \texttt{tails}$. Let $k_i'$ be the smallest $k < K$ for which $\frac{1}{k+1}\sum_{j = 0}^k [\![a_{t_i+j} = \texttt{heads}]\!] \geq 1/2 + 1/(2K)$. Let $k_i = t_i + k_i'$. For $i > 1$, let $t_i$ be the smallest $t > k_{i-1}$ for which $a_t = \texttt{tails}$. (Note that all the $t_i$ exist if there are infinitely many $\texttt{tails}$es; if not, the theorem holds trivially).

Finally,
\begin{align*}
    &\liminf_{i \to \infty} \frac{1}{t}\sum_{k = 1}^t [\![a_k = \texttt{heads}]\!]
    \\
    \equal^{(a)} &\liminf_{t \to \infty} \frac{1}{t - t_0}\sum_{k = t_0}^t [\![a_k = \texttt{heads}]\!]
    \\
    \equal &\liminf_{t \to \infty} \frac{1}{t - t_0} \left( \sum_{i : t_i < t} \sum_{j = t_i}^{\min\{k_i, t\}}[\![a_j = \texttt{heads}]\!] + \sum_{i : k_i + 1 < t} \sum_{j = k_i+1}^{\min\{t_{i+1} - 1, t\}}[\![a_j = \texttt{heads}]\!] \right)
    \\
    \equal^{(b)} &\liminf_{t \to \infty} \frac{1}{t - t_0} \left( \sum_{i : t_i < t} \sum_{j = t_i}^{\min\{k_i, t\}}[\![a_j = \texttt{heads}]\!] + \sum_{i : k_i + 1 < t} \sum_{j = k_i+1}^{\min\{t_{i+1} - 1, t\}} 1 \right)
    \\
    \gequal^{(c)} &\liminf_{t \to \infty} \frac{1}{t - t_0} \left( \sum_{i : t_{i+1} < t} \sum_{j = t_i}^{k_i}[\![a_j = \texttt{heads}]\!] + \sum_{i : k_i + 1 < t} \sum_{j = k_i+1}^{\min\{t_{i+1} - 1, t\}} 1 \right)
    \\
    \gequal^{(d)} &\liminf_{t \to \infty} \frac{1}{t - t_0} \left( \sum_{i : t_{i+1} < t} \sum_{j = t_i}^{k_i}(1/2 + 1/(2K)) + \sum_{i : k_i + 1 < t} \sum_{j = k_i+1}^{\min\{t_{i+1} - 1, t\}} 1 \right)
    \\
    \gequal^{(e)} &(1/2 + 1/(2K)) > 1/2
    \tagaligneq
\end{align*}
where $(a)$ follows because the contribution of the first $t_0$ in the average goes to 0, $(b)$ follows because $t_{i+1}$ is the first timestep after $k_i$ where the action is $\texttt{tails}$, $(c)$ simply removes the last term of the first sum, $(d)$ follows from Inequality \ref{eqn:mostlyheads}, replacing each term in the sum with the average, and $(e)$ follows because the left-hand side is an average of $t - t_0$ terms, of which at most $K$ are $0$ (the terms removed in step $(c)$), and the rest of which are greater than or equal to $1/2 + 1/(2K)$; finitely many 0's in the average do not affect the limit.
\end{proof}

\section{Informal Discussion} \label{app:discussion}

The informal arguments presented here are intended as motivation for our main results. Claims here are not formally settled, but if they fail, they only make this work somewhat less interesting, not invalid.

\subsection{Comparison to Imitation Learning}

Our pessimistic agent approaches (at least) mentor-level performance while querying the mentor less and less. An imitation learner could be expected to do the same. Depending on the details, an imitation learner might not have as strong a safety guarantee as our Theorem \ref{thm:precedent}, but by virtue of its aim---to imitate the mentor---we should expect it to mostly only act in the way the mentor would. So why is a pessimistic agent any better than an imitation learner?

The key value of our proposal rests in the plausibility that the agent will significantly outperform some mentors. However, the only formal performance result stronger than ours that has been shown for agents in general environments is ``asymptotic optimality'' \citep{Hutter:11asyoptag}, and \citet{cohen2020curiosity} show that it precludes safe behavior. So absent any formal breakthroughs, we are limited to informal arguments that the pessimistic agent will significantly outperform some mentors and thereby outperform imitation learners.

Of course, Theorem \ref{thm:coinflipmentor} shows a toy case in which the agent surpasses the mentor. For complex environments, we will have to resort to empirical comparisons of the agent and the mentor. That is out of scope for this paper, but informal arguments give cause for optimism. The motivating example for the mentor is a human. A 0\% pessimistic agent is close to optimal-by-definition (doing maximum a posteriori inference instead of full Bayes), whereas humans seem to not act optimally, so we expect the former would significantly outperform the latter on most tasks. Absent any large performance discontinuities as pessimism increases, we expect more pessimistic agents to still modestly exceed a human mentor.

How can we intuitively understand the reasoning of an advanced (i.e. large model class) X\% pessimistic agent that is mentored by a human? From the sorts of observations that humans routinely make, some simple generalizations about the laws underlying the evolution of the environment can be made by a reasonable observer with high confidence. If one such generalization could be made with Y\% confidence, and Y $>$ X, then we should roughly expect an X\% pessimistic agent to act according to an understanding of that generalization. (If Y $<$ X, it might anyway, but that's beside the point). If we want to predict the extent to which a 99\% pessimistic agent with a large model class would outperform a human mentor, the following question is a good guide: ``How often do humans fail to notice and exploit patterns in their environment, which, given their observations, are 99\% likely to be ``real'' and not just coincidence?'' We would hesitantly answer this question: very often. On the other hand, we \textit{can} expect a 99\% pessimistic agent to succeed at exploiting these patterns.

\subsection{Avoiding Wireheading}

A Bayesian agent with a sufficiently rich model class may entertain a world-model which: a) models its actions being ``enacted'' in some very high-fidelity model of the real world, and then b) models its reward as being equal to whatever number gets entered at a certain keyboard in high-fidelity-model-Oxford, or being a simple function of whatever pixels are observed by some camera in the same model-town. If indeed, an operator in (real) Oxford is manually evaluating the Bayesian agent, or if some camera there is automatically doing the same, then a model like this one would gain significant posterior weight. According to this model, optimal behavior includes intervening in the provision of reward by taking over the keyboard or the camera that determines the reward, if this is feasible. This behavior is known as wireheading \citep{amodei_olah_2016}, and successful and \textit{stable} wireheading could plausibly require asserting control over all existing infrastructure \citep{bostrom_2014,omohundro_2008}.

A more benign world-model might also have meaningful posterior weight. This world-model a) models its actions being ``enacted'' in some very high-fidelity model of the real world, but then b) models its reward as being equal to how satisfied the high-fidelity-model-operators are with its behavior. A pure Bayesian agent would benefit from experimenting with wireheading, to check whether the wireheading world model or the benign world model was correct, so that it could then change its strategy depending on the answer; a $\beta$-pessimistic agent, on the other hand (where $\beta$ is large enough to include both of these models) would note that the pessimistic value of wireheading is no more than the value that the benign world model assigns to wireheading, and this value would presumably be small, since it would not satisfy the operators.

The first paragraph of this section was a worrying informal argument, and the second paragraph was a reassuring informal argument. In the spirit of pessimism, we should take the worrying informal argument more seriously and demand more rigor from attempts at reassurance. This argument only presents a plausible motivation for pessimism; we do not claim to have settled this matter.


\end{document}